\def\eqref#1{equation~\ref{#1}}
\def\floor#1{\lfloor #1 \rfloor}
\def\1{\bm{1}}
\def\eps{{\epsilon}}
\DeclareMathAlphabet{\mathsfit}{\encodingdefault}{\sfdefault}{m}{sl}
\SetMathAlphabet{\mathsfit}{bold}{\encodingdefault}{\sfdefault}{bx}{n}
\newcommand{\E}{\mathbb{E}}
\newcommand{\R}{\mathbb{R}}
\newcommand{\Var}{\mathrm{Var}}
\newcommand{\Est}[1]{\widehat{\Theta}_{0,n}(#1)}
\newcommand{\TQ}[1]{{\Theta_{0,n}}(#1,P)}
\newcommand{\AZsd}{\sigma_{CV}}
\newcommand{\AZSE}{\hat{S}_{CV}}
\newcommand{\AZdtrain}{\ranDtrain^{\floor{\frac{n}{2}}}}
\renewcommand{\Var}{\operatorname{Var}}
\newcommand{\s}{s}
\newcommand{\Risk}{\mathcal{R}}
\newcommand{\addone}[1]{\addtocounter{#1}{1}}
\newcommand{\ib}{b}
\newcommand{\iB}{B}
\newcommand{\ibB}{\ib = 1, \ldots, \iB}
\newcommand{\Xni}{X_{n,i}}
\newcommand{\ii}{i} 
\newcommand{\ik}{k} 
\newcommand{\iK}{K}
\newcommand{\ikK}{\ik = 1, \ldots, \iK}
\newcommand{\ir}{r}
\newcommand{\iR}{R}
\newcommand{\irR}{\ir = 1, \ldots, \iR}
\newcommand{\il}{l}
\newcommand{\pl}[1]{e_{#1}[\ii]}
\newcommand{\sumpl}[1]{\sum_{\ii \in \Jtesti[#1]} \pl{#1}}
\newcommand{\meanpl}[1]{\frac{1}{\vert \Jtesti[#1] \vert}\sum_{\ii \in \Jtesti[#1]} \pl{#1}}
\newcommand{\sumk}{\sum_{\ik = 1}^{\iK}}
\newcommand{\sumr}{\sum_{\ir = 1}^{\iR}}
\newcommand{\obsi}[1]{\left(\xv^{(#1)},y^{(#1)}\right)}    
\newcommand{\loss}{\mathcal{L}}
\newcommand{\alg}{\inducer}
\newcommand{\pp}[1]{\fh_{#1}}
\newcommand{\ppI}[1]{\fh_{\mathcal{I},#1}}
\newcommand{\predIm}{\Tilde{\Yspace}}
\newcommand{\Pest}{\hat{P}_n}
\newcommand{\nqtwo}{z_{1-\frac{\alpha}{2}}}
\newcommand{\nq}[1]{z_{#1}}
\newcommand{\tqtwo}[1]{t_{#1, 1-\frac{\alpha}{2}}}
\newcommand{\tq}[1]{t_{#1}}
\newcommand{\SEhat}{\widehat{\text{SE}}}
\newcommand{\prob}{\mathcal{T}}
\newcommand{\nrep}{n_\text{rep}}
\newcommand{\PE}{\Rp(\pp{\D})}
\newcommand{\ePE}{\E\big[\Rp(\pp{\ranD})\big]}
\newcommand{\DGP}{\text{DGP}}
\newcommand{\GEp}{\widehat{GE}}
\newcommand{\ranD}{\bm{\D}}
\newcommand{\ranDn}{\bm{\D}_n}
\newcommand{\trainratio}{p_{\text{train}}}
\newcommand{\testratio}{p_{\text{test}}}
\newcommand{\evaluator}{\mathcal{E}_{\D, \inducer, \loss}}
\newcommand{\methRes}{A_{\D, \inducer, \loss}}
\newcommand{\NHoldout}{Holdout}
\newcommand{\AHoldout}{H}
\newcommand{\NBayle}{CV Wald}
\newcommand{\ABayle}{CVW}
\newcommand{\NRepHeuristic}{Repeated Replace-One CV}
\newcommand{\ARepHeuristic}{HRCV}
\newcommand{\NCorrectedT}{Corrected Resampled-T}
\newcommand{\ACorrectedT}{CRT}
\newcommand{\NConservativeZ}{Conservative-Z}
\newcommand{\AConservativeZ}{CZ}
\newcommand{\NAustern}{Replace-One CV}
\newcommand{\AAustern}{ROCV}
\newcommand{\NBates}{Nested CV}
\newcommand{\ABates}{NCV}
\newcommand{\NDiettrich}{$5\times2$ CV}
\newcommand{\ADiettrich}{$5 \times 2$}
\newcommand{\NOOB}{Out-of-Bag}
\newcommand{\AOOB}{OOB}
\newcommand{\Nsixplus}{$632+$ Bootstrap}
\newcommand{\Asixplus}{$632+$}
\newcommand{\NLocShiftBoot}{Location-shifted Bootstrap}
\newcommand{\ALocShiftBoot}{LSB}
\newcommand{\NTwoStageBoot}{Two-stage Bootstrap}
\newcommand{\ATwoStageBoot}{TSB}
\newcommand{\NBCCV}{BCCV Percentile}
\newcommand{\ABCCV}{BCCVP}
\newcommand{\biasest}{\hat{b}}
\DeclareMathAlphabet{\mathmybb}{U}{bbold}{m}{n}
\newcommand{\indicatorf}{\mathmybb{1}}
\newcommand{\npv}{(\bm{x}^*,\bm{y}^*)}
\newcommand{\nx}{\bm{x}^*}
\newcommand{\ny}{\bm{y}^*}
  \newcommand{\N}{\mathds{N}}
  \renewcommand{\C}{\mathds{C}}                                             
  \newcommand{\C}{\mathds{C}}
\newcommand{\xv}{x}													
\renewcommand{\P}{\mathds{P}}                                               
\newcommand{\Xspace}{\mathcal{X}}                                           
\newcommand{\Yspace}{\mathcal{Y}}                                           
\newcommand{\Pxy}{P}   
\newcommand{\Rp}{\Risk_{P}}
\newcommand{\holdoutPQ}{{\Risk}_P(\pp{\ranDtrain})}
\newcommand{\allDatasets}{\mathds{D}}                                       
\newcommand{\D}{\mathcal{D}}                                                      
\newcommand{\obs}[1][i]{\left(\xv^{(#1)},y^{(#1)}\right)}                                                                               
\newcommand{\Dn}{\D_n}                                                      
\renewcommand{\xi}[1][i]{\xv^{(#1)}}                                          
\newcommand{\yi}[1][i]{y^{(#1)}}                                            
\newcommand{\Dtrain}{\mathcal{D}_{\text{train}}}       
\newcommand{\ranDtrain}{\ranD_{\text{train}}}
\newcommand{\Dtest}{\mathcal{D}_{\text{test}}}                              
\newcommand{\ranDtest}{\ranD_{\text{test}}}    
\newcommand{\ResampledIndices}{\mathcal{J}} 
\newcommand{\inducer}{\mathcal{I}}                                                
\newcommand{\fh}{\hat{f}}                                                   
\newcommand{\ntest}{n_{\mathrm{test}}}                              
\newcommand{\ntesti}[1][i]{n_{\mathrm{test},#1}}                    
\newcommand{\ntraini}[1][i]{n_{\mathrm{train},#1}}                  
\newcommand{\Jtest}{J_\mathrm{test}}                                
\newcommand{\Jtraini}[1][i]{J_{\mathrm{train},#1}}                  
\newcommand{\Jtesti}[1][i]{J_{\mathrm{test},#1}}                    
\newcommand{\Dtraini}[1][i]{\mathcal{D}_{\text{train},#1}}          
\newcommand{\Dtesti}[1][i]{\mathcal{D}_{\text{test},#1}}    
\newcommand{\Dtrainlj}[1][{l,j}]{\mathcal{D}_{\text{train},#1}} 
\newcommand{\Dtestlj}[1][{l,j}]{\mathcal{D}_{\text{test},#1}} 
\newcommand{\Dtraininlj}[1][{l,j}]{\mathcal{D}_{\text{train}, #1}}  
\theoremstyle{plain}
\theoremstyle{plain}
\newtheorem{defi}{Definition}
\theoremstyle{plain}
\theoremstyle{plain}
\newtheorem*{note}{Note}
\theoremstyle{plain}
\newtheorem*{notation}{Notation}
\let\nobreakitem\item
\let\@nobreakitem\@item
\patchcmd{\nobreakitem}{\@item}{\@nobreakitem}{}{}
\patchcmd{\nobreakitem}{\@item}{\@nobreakitem}{}{}
\patchcmd{\@nobreakitem}{\@itempenalty}{\@M}{}{}
\patchcmd{\@xthm}{\ignorespaces}{\nobreak\ignorespaces}{}{}
\patchcmd{\@ythm}{\ignorespaces}{\nobreak\ignorespaces}{}{}
\theoremstyle{break}
\newtheorem{rem}{Remark}
\crefname{rem}{remark}{remarks}
\begin{document}

\title{Constructing Confidence Intervals for ``the'' Generalization Error -- a Comprehensive Benchmark Study}

\author{\name Hannah Schulz-Kümpel$^{1,2,}$\thanks{These authors contributed equally to this work.}\email hannah.kuempel@stat.uni-muenchen.de 
       \AND
       \name Sebastian Fischer$^{1,2,}$\footnotemark[1] \email sebastian.fischer@stat.uni-muenchen.de 
       \AND
       \name Roman Hornung$^{3,2}$ \email
       hornung@ibe.med.uni-muenchen.de
       \AND
       \name Anne-Laure Boulesteix$^{2,3}$ \email boulesteix@ibe.med.uni-munechen.de
       \AND
       Thomas Nagler$^{1,2}$ \email t.nagler@lmu.de
       \AND
        \name Bernd Bischl$^{1,2}$ \email bernd.bischl@stat.uni-muenchen.de
       \,\\\,\\
       \addr $^1$Department of Statistics, LMU Munich\\
       \addr $^2$Munich Center for Machine Learning (MCML)\\
       \addr $^3$Institute for Medical Information Processing, Biometry and Epidemiology, Faculty of Medicine, LMU Munich\\
       }

\editor{Yue Zhao}

\maketitle

\begin{abstract}
\noindent When assessing the quality of prediction models in machine learning, confidence intervals (CIs) for the generalization error, which measures predictive performance, are a crucial tool. Luckily, there exist many methods for computing such CIs and new promising approaches are continuously being proposed. Typically, these methods combine various resampling procedures, most popular among them cross-validation and bootstrapping, with different variance estimation techniques. Unfortunately, however, there is currently no consensus on when any of these combinations may be most reliably employed and how they generally compare. In this work, we conduct a large-scale study comparing CIs for the generalization error, the first one of such size, where we empirically evaluate 13 different CI methods on a total of 19 tabular regression and classification problems, using seven different inducers and a total of eight loss functions. 
We give an overview of the methodological foundations and inherent challenges of constructing CIs for the generalization error and provide a concise review of all 13 methods in a unified framework. 
Finally, the CI methods are evaluated in terms of their relative coverage frequency, width, and runtime. Based on these findings, we can identify a subset of methods that we would recommend.
We also publish the datasets as a benchmarking suite on OpenML and our code on GitHub to serve as a basis for further studies.

\end{abstract}

\begin{keywords}
    Confidence Intervals, Resampling, Benchmark, Statistical Inference, Machine Learning, Tabular Data, Uncertainty Quantification
\end{keywords}
\vfill

\section{Introduction}

After fitting a supervised learning model on available data, one, if not the natural next question is: ``How accurately will the model predict outcomes for new, previously unobserved data points?'' One of the most common quantities used to answer this question is an estimate of the expected loss of the model prediction on a new data point following the same distribution as the training data, which is referred to as the \emph{generalization} or \emph{prediction} error.
While it is usually assumed that the available data was sampled from some common distribution, this distribution is almost always unknown. 
A natural estimate of this prediction error would be to evaluate the model under consideration on a dedicated test set -- and to simply estimate the error by averaging, where we rely on the ``law of large numbers''.
Unfortunately, all data will ultimately be used to construct the final model, so no such dedicated data is typically available, but violating the ``untouched test set'' principle usually leads to optimistically biased estimates of performance in machine learning (ML). 
This is where resampling methods become essential. Techniques like cross-validation and bootstrapping provide frameworks that make it possible to infer the generalization error by repeatedly splitting the data, fitting a model on training data, predicting on unused test data, and evaluating these predictions before averaging the results. 

As with any point estimate, however, a resampling-based estimate cannot be appropriately interpreted if presented without any information about its precision, often in the form of confidence intervals (CI).
As we will explain in more detail later, the variability of point estimates for the generalization error (GE) can be especially high, so a CI around it can provide extremely meaningful information.
The different sources of uncertainty influencing this variability are described in detail in \Cref{subchap:UncertaintySources}.

Although the need for reliable CIs for the GE is evident, accurately deriving such intervals presents significant challenges that arise from the resampling setting.
On a theoretical level, the issue of deriving asymptotic guarantees across resampling procedures (\cite{Bayle,austern2020asymptotics}, e.g., provide results for $K$-fold CV)
has not yet been solved. In fact, even for very specific settings, asymptotic results regarding the theoretical validity of variance estimators for the generalization error are sparse in the literature. Meanwhile, on a computational level, the cost of repeatedly refitting and evaluating a model, especially on large data sets, can quickly become a burden. 

\paragraph{Our Contribution}
In this work, we give a detailed and unified overview of $13$ different existing model-agnostic methods for deriving CIs for the generalization error and compare their performance by conducting a comprehensive benchmark study across $7$\footnote{3 are applied to only the top-performing methods of the earlier evaluation stage as the alternative would have been beyond our computational budget.} different supervised learning algorithms applied to $19$ different data generating processes (DGPs). 
Out of these, $18$ DGPs were specifically created for this benchmark study. In particular, we focus on the coverage frequency and width of the CIs, as well as their computational cost and stability across models and data types. As a result, we are able to identify a subset of well-performing methods with recommendations on when to use them. 
Furthermore, we provide an in-depth discussion of the theoretical foundations of and key challenges associated with constructing CIs for the GE.

\subsection*{Why a benchmark study?}
Given the vast array of resampling techniques and possible approaches to variance estimation, it is not surprising that new proposals for methods to derive CIs for the generalization error are continuously being added to the already considerable amount of options available in the literature.

Unfortunately, deriving formal guarantees for resampling-based variance estimators is quite complex. Currently, few theoretical results exist that can serve as tools to analyze the asymptotic behavior of CIs across resampling settings. As a result, thorough empirical investigation of these methods is paramount \citep{pmlr-v235-herrmann24b}.
A large benchmark study allows us to identify trends and investigate aspects that are difficult to analyze formally but nonetheless very relevant. Our empirical investigation provides several key contributions to the field, which we outline below:

\paragraph{1. A comprehensive, neutral comparison} of various resampling techniques and variance estimation methods used to construct CIs for the GE. Given that we are merely taking stock of the available methodology without proposing a new method, we are able to approach the comparison in an entirely neutral manner \citep{boulesteix2013plea}. 
\paragraph{2. A foundation for evaluating future methods} for computing CIs for the GE. By transparently reporting the comparison metrics and making all data and code available, we aim to enable researchers to compare their new proposals to the existing methods across many settings with relative ease.
\paragraph{3. A hypothesis-generating empirical study.}
Finally, our study serves as a hypothesis-generating empirical investigation. By repeatedly running experiments with different loss functions and over different targets of interest (such as the $K$-fold \textit{Test Error} defined by \cite{Bayle}) and highlighting unexpected behaviors of methods on certain data sets and/or inducers we encourage further exploration and deeper understanding of the complex dynamics involved in resampling-based inference about the generalization error.

\subsection*{Related work}
The concept of resampling-based performance estimation has long been an established one \citep{Stone1974,Geisser,breimann,Efron1983}.
In fact, resampling forms the basis of most methods for estimating predictive performance, with the only common alternative \citep{bates2024cross,Borra2010} being covariance penalty approaches \citep{hastie_09_ESL,Rosset2020}, which are usually intended for parametric models and generalize the classical Mallows $C_p$ estimate \citep{MallowsCp} for OLS settings.

The present work focuses on resampling-based inference for the generalization (or prediction) error. In this context, cross-validation (CV) is generally considered the most popular option, and the bootstrap the most common alternative \citep{bates2024cross,Borra2010}. Here, as well as for other resampling procedures such as subsampling \citep{delete-d-jackknife}, model-agnostic point estimates are easily defined as they usually involve straightforward averaging of resamples. However, constructing corresponding CIs requires additional steps to analyze the variability and distribution of these estimates. 
For the case of CV, one central finding has been that there exists no universal unbiased estimator of the variance of K-fold CV \citep{bengio2003no}. Even though CV is the most studied method due to its widespread use, only a few general asymptotic results about the distribution of model-agnostic point estimates are available, most recently by \cite{Bayle} and \citet{austern2020asymptotics}, and before that by \cite{dudoit2005asymptotics}, albeit without providing a specific standard error estimate. A few more works, such as \cite{ledell2015computationally}, have focused specifically on CIs for area under the curve (AUC) in combination with CV. However, AUC is an aggregated metric exclusive to classification, as opposed to the more general setting of decomposable, point-wise losses, which may be applied to both classification and regression. Given that our focus is on evaluating performance across various settings, we have excluded AUC from our analysis. Similarly, we did not consider the time-series-forecasting specific methodology proposed by \cite{xu2023uncertainty}, as we focus on independently and identically distributed (i.i.d.) sampled data settings. For bootstrap methods, the findings regarding the variability and distribution of model-agnostic performance estimates are even sparser \citep{Efron1983,632bootstrap}.




For the comparison of this work, we chose methods from the most commonly cited works proposing model-agnostic methods for computing CIs around the GE \citep{nadeau_inference_2003,bates2024cross,Bayle,austern2020asymptotics,dietterich_approximate_1998,632bootstrap,Jiang2008}, as well as a recent addition from the context of medicine \citep{noma2021confidence}, which allowed us to add two more bootstrap-based methods to the comparison. See \Cref{chap:Summary} for a summary of these methods.

While fewer papers regarding CIs for the GE, especially their empirical evaluation, exist, there have been quite a few studies on point estimation for the GE. Notable examples of such studies include \cite{kohavi1995study}, \cite{molinaro2005prediction}, and \cite{kim2009estimating}, with one central consensus being that (repeated) $10$-fold CV generally results in reliable point estimates.

To our knowledge, no study comparing CIs for the GE is as comprehensive as the current work. Although there are fewer studies focused on this topic, three notable works have examined different aspects of generalization error CIs. 
The first of these studies, \cite{nadeau_inference_2003}, provides a comparison of $7$ different methods, especially focusing on the bias of variance estimators and the statistical significance of the derived results. Their empirical examination covers less ground than the present work, consisting of one real-world and two simulated data settings to which two learners each are applied. Furthermore, several new methods for deriving CIs for the GE have been proposed since its publication.
More recently, \cite{Bayle} formally proved that their proposed variance estimator(s) yield practical, asymptotically exact CIs for the $K$-fold \textit{Test Error} and compared their method with ``the most popular alternative methods from the literature''. While the proposed method performs very well, both in their experiment and ours, the comparison focused exclusively on $K$-fold \textit{Test Error}, a quantity that is only closely related to the GE (as we will explain in this work) and which the other methods were not originally intended to cover. Additionally, they excluded bootstrap-based methods from their comparison and considered only two different data sets.
Thirdly, \cite{bates2024cross} proposed a nested CV-based approach. In addition, they provide an excellent overview of the problem and theoretical results in the setting of OLS regression. However, their empirical comparison is restricted to linear models. Additionally, the inference methods and data sets considered are fewer than in this work.

\subsection*{Commitment to FAIR research data} 
In conducting this study, we are committed to making the research data as FAIR (findable, accessible, interoperable, and reusable, see \cite{wilkinson2016fair}) as possible. To this end, we share the benchmark datasets on OpenML \citep{OpenML2013}\footnote{\url{https://www.openml.org/search?type=study&study_type=task&id=441}} and all code on GitHub \citep{GitHubCode}\footnote{\url{https://github.com/slds-lmu/paper\_2023\_ci\_for\_ge}}. Additionally, we provide a guide on how to extend this experiment to include new methods for deriving CIs for the GE that may be proposed in the future.
To allow for further analysis of our results, we share them on zenodo\footnote{\url{https://zenodo.org/records/13744382}}.
Finally, we integrate the well-performing CIs into the \texttt{mlr3} machine learning framework by \cite{lang2019mlr3}, via the R package \texttt{mlr3inferr}\footnote{\url{https://github.com/mlr-org/mlr3inferr}}.

\section{Setting and notation\label{chap:Notation}}
\noindent  Throughout this work, we consider as data a sequence of observations $\D=\obsi{i}_{i = 1}^n \in (\Xspace \times \Yspace)^n$ for feature space $\Xspace$ and label space $\Yspace$, where each $\obs$ is an independent draw from a distribution $P$, i.e. $\D$ is a realization of a random matrix $\ranD\sim \bigotimes_{i = 1}^{n} \Pxy\hat{=}\Pxy^n$. For a space of possible model-predictions $\predIm$, which can e.g. be a probability vector ($\R^p$) or a score ($\R$), let the function $\ppI{\D}:\Xspace\longrightarrow \predIm$ denote the \emph{prediction function} that is generated by applying an algorithm, or \emph{inducer}, $\alg$ to data $\D$. Denoting by $\allDatasets(n)$ the set of all possible realizations $\D$ of $\ranD$, i.e. $\allDatasets(n):=\{\ranD(\omega)\vert \omega\in\Omega,\; \ranD\sim P_{xy}^n\}$; we can formally define, for a given $n\in\N$, any inducer as a function \begin{equation*}
    \alg: \allDatasets(n)  \longrightarrow \{f: \Xspace \rightarrow \predIm  \}, \quad \D \longmapsto \ppI{\D}\,.
\end{equation*}
Note that $\alg$ symbolizes the application of any algorithm on data to generate a prediction function, which may even include computational model selection or hyperparameter tuning.
Hereafter, we generally forgo indexing the prediction function $\ppI{\D}$ by $\alg$ when making statements that are not limited to specific choices of inducers to ease notation.
To quantify the discrepancy between a prediction and actual observation, we, furthermore, require a \emph{loss function} $\loss:\Yspace\times\predIm\longrightarrow\R$. \Cref{chap:Empirical} discusses the choice of loss functions in detail, but the most common include the squared error for continuous outcomes and $0-1$ loss for classification.
Finally, let $\npv\sim \Pxy$ denote a random variable representing a fresh test sample, independent of all observations in $\D$. 


We are now interested in point estimates and CIs for one of the following quantities

\begin{equation}\label{eq:condGE}
    \PE:=\E[\loss(\ny,\pp{\ranD}(\nx))\vert \ranD=\D]
\end{equation}
or
\begin{equation}\label{eq:uncondGE}
  \ePE:=  \E\big[\E[\loss(\ny,\pp{\ranD}(\nx))\vert \ranD]\big] \,,
\end{equation}\vspace{5pt}

\noindent which we refer to as \emph{risk} and \emph{expected risk}, respectively.
We use the term \emph{generalization error} as an umbrella term for both $\Rp(\pp{\D})$ and $\E\big[\Rp(\pp{\ranD})\big]$. \Cref{rem:Interpretation(e)PE} provides the distinct interpretations of these quantities.

For reasons that will be closely examined in \Cref{chap:Conceptual}, inference for the GE is, in principle, based on resampling from $\D$. 
Therefore, \Cref{tab:RESAMPLEmethods} gives the reader an overview of all resampling methods considered in this work. In every one of these cases, the purpose is to generate observations of losses on which to base inference. Specifically, any resampling method produces $B$ pairs of index vectors $\Jtraini[\ib]$ and $\Jtesti[\ib]$ of length $\ntraini[\ib]$ and $\ntesti[\ib]$, respectively, with $\ibB$. Correspondingly, we denote the subsequences of observations containing the observations of $\D$ with indices contained in $\Jtraini[\ib]$ and $\Jtesti[\ib]$ by $\Dtraini[\ib]$ and $\Dtesti[\ib]$, respectively. 
Then, inference data for the GE is generated by applying a given inducer $\mathcal{I}$ on each $\Dtraini[\ib]$ and computing the losses 
$\loss (\yi, \pp{\Dtraini[\ib]}(\xi))=:\pl{\ib}$, $\forall i$ that are entries of $\Jtesti[b]$, with respect to the resulting prediction function $\pp{\Dtraini[\ib]}$, resulting in $\sum_{b=1}^B \ntesti[\ib]$ observations of loss.
We will denote the average loss on a test set $\Dtesti[\ib]$ by $\Risk_{\Dtesti[\ib]}(\pp{\Dtraini[\ib]}):=\ntesti[b]^{-1}\sum_{(x,y)\in\Dtesti[\ib]}\loss (y, \pp{\Dtraini[\ib]}(x))$.
\begin{note}
    Given any deterministic inducer $\alg$, $\PE$ is a function of $\Pxy$ and specific data $\D$, while $\ePE$ is a function of $\Pxy$ and the data size $|\ranD|$. Hereafter, we either trust the reader to infer the size of the indexing data from context or add an index, such as $\pp{\Dn}$.
\end{note}

\section{Essential conceptual considerations\label{chap:Conceptual}}

In this section, we properly define our \emph{targets of inference} and discuss the \emph{inherent complexities of resampling}, their \emph{sources of uncertainty}, and the aspect of \emph{theoretical validity} of CIs for the GE.

\subsection{The two targets of inference\label{subchap:TargetOFInference}}

\Cref{eq:condGE,eq:uncondGE} introduced two separate quantities that represent the most common definitions of targets of inference referred to as GE (or equivalent terms) in the literature. Given that there exists more than one such definition, and the fact that it is not uncommon for CI for the GE methods to be proposed without formally specifying the intended target of interest, even within the works we based our comparison study on, let us examine the purpose of each quantity before discussing their estimation in \Cref{subchap:Resampling}.

Risk and expected risk answer two distinct types of questions, as detailed in the following remark.

\begin{rem}[Interpretation of risk and expected risk]\label{rem:Interpretation(e)PE}\,
    \begin{itemize}
        \item[(i)] The \emph{risk}, $\PE$, measures the error a specific model trained on specific data $\D$ will make on average when predicting for data from the same distribution.
        \item[(ii)] The \emph{expected risk}, $\ePE$, measures the error of models that have been trained using inducer $\alg$ on data of size $n$. Thus, it measures the quality of the general inducer on arbitrary data of size $n$ from distribution $P$ rather than the quality of a single model.
\end{itemize}
\end{rem}
The above interpretations may also directly be related to the ``taxonomy of statistical questions in machine learning'' as defined in Fig.~1 by 
\cite{dietterich_approximate_1998}. In the setting of that work, $\PE$ ``{predicts classifier accuracy}'', while $\ePE$ ``{predicts algorithm accuracy}''. 


Arguably, the estimation of the predictive performance of a given model is what commonly interests applied data scientists in scenarios where such models should be deployed for direct use. Target (ii) has its rationale, too, in scenarios where the same algorithm is applied repeatedly, e.g., in scientific studies of algorithm performance or, as a more technical scenario, in learning curve analysis.
So, in agreement with \cite{bates2024cross}, we argue that both risk and expected risk are estimands of real practical importance, though each for different contexts.

What is usually a subtle point of confusion, is that the risk (i) is actually harder to estimate via direct procedures (which would directly condition on and make use of the model of interest); and procedures that are in practice used to statistically estimate (i) might actually be considered -- at first glance -- as ``natural'' estimators of the expected risk  (ii), e.g. resampling-based techniques. 
In many cases, the difference between the two quantities is in fact negligible for constructing confidence intervals. Similar to \citet[Section B.1]{nagler2024reshufflingresamplingsplitsimprove}, one may show that the difference is negligible when the learner's risk admits a convergence rate faster than $1/\sqrt{n}$. Examples are learners with fixed VC-dimension and square loss; further examples are given in, e.g., \citet{vanerven15a}. For truly nonparametric learners (e.g., random forests) with many features, such fast rates are usually not achieved and the difference between risk and expected risk can be substantial.

\subsection{The role of resampling in estimating the generalization error\label{subchap:Resampling}}



In an ideal scenario, we would hypothetically know the distribution $P$ from which observations in $\D$ are drawn. Here, one could simply generate new observations as desired to estimate risk and expected risk with equally high accuracy.
Unfortunately, $P$ is usually unknown. Let us focus on the estimation of the risk (i) for a moment. Practically, it would also be extremely convenient to estimate the GE on the given data set, so to use the in-sample or training error, where we would condition on the model of interest and simply use the given data for multiple purposes (modeling and GE estimation).  
However, it has long been established that the in-sample error alone is an inadequate measure of predictive performance on new observations, particularly for more complex, i.e. non-linear and non-parametric, models.  These are prone to overfitting, which in turn leads to overly optimistic in-sample error estimates that do not reflect true predictive performance.

Another option would be to estimate the risk on a separate, dedicated i.i.d. test data set. In this case, the estimation of $\PE$ would again directly condition on the given model $\pp{\D}$ and the estimation of $\PE$ would be unbiased. Then, a reliable Wald-type CI could be obtained, backed up by a trivial application of the central limit theorem. 
In practice, however, there is usually only $\D$ available to both fit a model and perform inference on the GE as practitioners will rarely wish to completely exclude data that may contain valuable information when constructing the final model.




Given these challenges, resampling procedures provide a solution by repeatedly creating splits into training and test sets on which $\alg$ can be evaluated, thereby generating, as mentioned in \Cref{chap:Notation}, observed losses on unseen observations. 
Still, this approach creates dependencies between observed losses, especially through repeated use of the same observations for both testing and training. As a result, the observations of losses have a dependence structure specific to the resampling procedure through which they were created.  For an analysis of the dependence structure in $K$-fold CV, for example, see \cite{bengio2003no}.

It is often stated that resampling-based point estimates for the GE will usually be more appropriate for the expected risk than for the risk, see \cite{bates2024cross,Yousef}\footnote{Note that Theorem 2 of \cite{bates2024cross} applies specifically to the setting of high dimensional linear regression.}. Since the resampling-based estimates are usually the result of averaging over losses after repeatedly refitting the same algorithm on large subsets of the given data, this argument is intuitive. However, it is not always entirely true. Take, for example, \emph{Holdout}-resampling, where the observed data $\D$ is split into $\Dtrain$ and $\Dtest$ only once. Here, one is not averaging over losses with respect to more than one model, but is effectively only able to condition on $\Dtrain$ instead of $\D$ in \Cref{eq:condGE}, or data $\bm{\D}$ of size $\vert \Dtrain\vert$ in \Cref{eq:uncondGE}. The issue of an inducer $\alg$ being applied to data smaller than $n$ (once or several times) affects the results of many resampling procedures, including $K$-fold CV. The result is a pessimistic bias affecting both point estimates and CIs due to the models used during inference being fit on data smaller than $\D$ or $\ranD$ one is conditioning on in \Cref{eq:condGE,eq:uncondGE}. It is also evident in our benchmark study, see, e.g. \Cref{fig:SecondRound}.


Indeed, the inference setting for the GE resulting from resampling is so complex that precisely determining the ``correct'' target of inference — whether it is the risk, expected risk, or another quantity — for any given procedure requires rigorous formal investigation.  \Cref{subchap:AsyptoticExactness} provides an overview of the relatively few formal results that have been established in this regard so far.

\begin{rem}[Complexities inherent in resampling-based inference about the\\ (expected) risk]\,
    \begin{enumerate}[label=(\roman*)]\label{rem:ResamplingComplexities}
        \item Any usage of resampling creates dependence structures in the inference data, with the exact structure depending on the resampling method.%
    \item While the term \emph{generalization error} covers (at least) two distinct targets of inference, the known, necessarily resampling-based, inference methods usually are not explicitly designed to specifically estimate either the risk $\PE$ or the expected risk $\ePE$; but rather the overarching concept of generalization error as defined, for example, by \cite{molinaro2005prediction}.
    
\end{enumerate}
\end{rem}
When the intention behind performing inference is merely to obtain point estimates for the GE, argument \emph{(i)} from \Cref{rem:ResamplingComplexities} may be seen as negligible as, due to the properties of the expected value, the dependence structures of the ``{loss-observations}'' obtained through resampling do not affect the common point estimates for $\ePE$, as the mean of dependent unbiased estimates will still be unbiased.
Regarding argument \emph{(ii)} from \Cref{rem:ResamplingComplexities}, one could at least argue that since $\ePE$ is the expectation of the quantity that $\PE$ is a realization of, any point estimate of the former may serve as a valid, if less accurate, point estimate of the latter. 


Once the goal is to construct CIs around any point estimate for the GE in addition to point estimation, the implications of the points made by \Cref{rem:ResamplingComplexities} become much more complex.  \Cref{subchap:UncertaintySources,subchap:AsyptoticExactness} will shed more light on said complexities.

\begin{note} Although not all methods in this work are explicitly designed to estimate either $\PE$ or $\ePE$, we can still empirically observe and analyze the coverage of each CI separately for each quantity.
\end{note}

\subsection{Sources of uncertainty\label{subchap:UncertaintySources}}

Generally, most of the uncertainty about the GE may be attributed to the sampling uncertainty contained in the given data $\D$.
More precisely, we can split the uncertainty present into \emph{validation uncertainty} and \emph{training uncertainty} by writing
\begin{align*}
 &\frac 1 B\sum_{b=1}^B\Risk_{\Dtesti[\ib]}(\pp{\Dtraini[\ib]})- \E\left[\Risk_P(\pp{\ranD_{\text{train},b}})\right] = \\
&\underbrace{\frac{1}{B}\sum_{b=1}^B\Risk_{\Dtesti[\ib]}(\pp{\Dtraini[\ib]})\,-\,\Risk_P(\pp{\Dtraini[\ib]})}_{(I)} \; +\; \underbrace{\frac{1}{B}\sum_{b=1}^B\Risk_P(\pp{\Dtraini[\ib]})\,-\,\E\Big[\Risk_P(\pp{\ranD_{\text{train},b}})\Big]}_{(II)}
\end{align*}
where the former $(I)$ reflects the uncertainty that is due to the randomness in the finite test set, while the latter $(II)$ quantifies the uncertainty stemming from the stochasticity of the training data. Which of the two factors contributes most to the total variation often depends on the inducer. For stable methods such as linear regression, $(I)$ usually dominates.\\

Additionally, there are two sources of uncertainty that generally do not enter into the kinds of CIs for the GE considered in this work.

One potential source of uncertainty that is often excluded from the proposed formal inference setting is that of the specific resampling split in any method from \Cref{tab:RESAMPLEmethods}. More precisely, we refer to the fact that the $B$ pairs of index vectors $\Jtraini[\ib]$ and $\Jtesti[\ib]$ produced by any resampling setting may themselves be seen as a realization of a random variable with corresponding event space given by all possible pairs of $B$ index vectors. While this inherent randomness of splitting given data into training and test sets is rarely discussed in literature on CIs for the GE, its effect should become increasingly negligible as the number of splits increases. 
Given the large overall amount of different experiments in our study, we opted not to repeat each based on different random splits. For an investigation of the replicability of some of the resampling procedures from \Cref{tab:RESAMPLEmethods}, see \cite{Bouckaert2004}. 

A more relevant potential source of uncertainty that is still explicitly excluded from the mathematical formalization in this and any discussed works' inference setting is that of the inducer $\alg$. Of course, the only randomness of the result when fitting, for example, a simple OLS regression is contained in the data $\D$, which is modeled as a realization of the random variable $\ranD\sim \bigotimes_{i = 1}^{n} P$. While one will always obtain the same result when fitting an OLS regression on the same data, the same does not hold for more complicated procedures. These procedures may be inherently stochastic (because they are based on stochastic optimizers such as stochastic gradient descent or bagging-like ensembles, for instance) or may contain stochastic elements for internal tuning. See, e.g., \cite{Bouthillier2021} for more on sources of uncertainty in learning pipelines. Although existing methodology for inference about the GE does not formally model $\alg$ as random, it is covered by our empirical study; simply because inducers with random elements, like random forest are repeatedly applied to estimate coverage in a specific setting.  


\subsection{Theoretical validity \label{subchap:AsyptoticExactness}}
Even though definitions of the GE as either $\PE$ or $\ePE$ are plenty throughout the literature, proposed CIs are almost never proven to be asymptotically exact, meaning that the probability of the CI covering the GE approaches $1-\alpha$ as $n$ approaches infinity, for either quantity. 

Instead, the proposed CIs are usually constructed by heuristic adjustments of naive procedures.
In fact, the only two works proving theoretical validity for practically applicable methods\footnote{\cite{FUCHS2020101} do provide a proof of theoretical validity for a CI for the GE, which is, however, not computationally practical by their own admission.} that we could find concerned solely those point estimates for the GE that are based on either standard single-split (Holdout) or $K$-fold CV. Specifically, \cite{austern2020asymptotics} provide proofs of formal validity for four different CIs, two each for a Holdout and $K$-fold CV point estimate, respectively. For each point estimate, one CI is proven to asymptotically cover $\ePE$ for a data size smaller than $n$ and one for an (average over) conditional expectation(s), i.e. a random target quantity.
Meanwhile, \cite{Bayle} prove formal validity for two CI-versions intended to cover the same $K$-fold CV-based random target of inference, which they name \textit{Test Error}, as in \cite{austern2020asymptotics}, albeit under much weaker conditions. 

\begin{rem}[Formal target quantities for the generalization error]\label{rem:FormalTargets}
Given that inference about the GE is based on resampling and refitting, the current state of research in this field only allows for proving asymptotical exactness for $\Rp(\pp{\D_l})$ and $\E\big[\Rp(\pp{\ranD_l})\big]$, for $l<n$, when utilizing certain resampling methods. Furthermore, while the target quantity intended to be covered by CIs is typically assumed to be unknown, but \emph{fixed}, proving formal validity may often be easier for a random target quantity in the complex inference setting for the GE detailed in \Cref{rem:ResamplingComplexities} and \Cref{subchap:UncertaintySources}. 

Hereafter, we will refer to (expected) risk for a data size smaller than $n$ as well as any random target quantity resembling risk and expected risk as \emph{proxy quantities} (PQs). One example of the latter is the \textit{Test Error} proposed by \cite{Bayle}.
\end{rem}

Since random target quantities have repeatedly been defined in the context of deriving CIs for the GE, the following will provide a definition of \textit{coverage intervals} as a generalization of CIs to specifically allow for both a fixed and a random target quantity.

\begin{notation}
    Hereafter, let $\big(\ranDn\big)_{n\in\N}$ denote the sequence  of random variables $\ranD_n\sim \bigotimes_{i = 1}^{n} \Pxy$, $n\in\N$, of which data of $n$ observations drawn from the distribution $\Pxy\in\bm{P}^{
dim(\Xspace\times\Yspace)}$, denoted by $\D_n$, would be a realization; with $\bm{P}^d$ the set of all probability measures on $(\R^d,\mathcal{B}(\R^d))$. 
\end{notation}
\begin{defi}[Asymptotically exact coverage intervals\label{def:AsymptoticalExactness}] Let
    $\big(\Theta_{0,n}\big)_{n\in\N}$ denote a sequence of functions $\Theta_{0,n}:\overset{n}{\underset{i=1}{\times}}(\Xspace\times\Yspace)\times\bm{P}^{
dim(\Xspace\times\Yspace)}\rightarrow\R$.
    Additionally, for any $\alpha\in(0,1)$, consider the two sequences of functions $\big(L_n^{(\alpha)}:\overset{n}{\underset{i=1}{\times}}(\Xspace\times\Yspace)\rightarrow\R\big)_{n\in\N}$ and $\big(U_n^{(\alpha)}:\overset{n}{\underset{i=1}{\times}}(\Xspace\times\Yspace)\rightarrow\R\big)_{n\in\N}$.
    For a fixed but unknown $\Pxy\in\bm{P}^{
dim(\Xspace\times\Yspace)}$, we refer to an interval $\big[L_\alpha(\D_n),U_\alpha(\D_n)\big]$ as an \emph{asymptotically exact $(1-\alpha)\cdot100$\% coverage interval for $\Theta_{0,n}(\ranDn,\Pxy)$}, if the following holds
\begin{equation}\label{eq:CIformal1}
        \underset{n\longrightarrow \infty}{\lim}\P(L_n^{(\alpha)}\big(\ranDn) \,\,\leq\,\, \Theta_{0,n}(\ranDn,\Pxy)\,\,\leq \,\, U_n^{(\alpha)}(\ranDn))=1-\alpha\,.
    \end{equation}

\end{defi}

    Note that, when $\Theta_{0,n}$ from the above definition is a constant function $\forall n\in\N$, as would be the case for a fixed ``true'' quantity of interest $\theta_0\in\R$ and $\Theta_{0,n}:\overset{n}{\underset{i=1}{\times}}(\Xspace\times\Yspace)\times\bm{P}^{
dim(\Xspace\times\Yspace)}\rightarrow\R,\;x\mapsto\theta_0\;\forall n\in\N$, the asymptotically exact $(1-\alpha)\cdot100$\% coverage interval from \Cref{def:AsymptoticalExactness} coincides with the classical definition of a $(1-\alpha)\cdot100$\% CI.

We would like to emphasize that we in no way take the position that any proposed CI for the GE should inherently be seen as less valid solely on the basis that it has not been proven to be asymptotically exact for either $\PE$ or $\ePE$. Additionally, given the inherent complexity of the current inference setting resulting from conditioning on $\ranD$, one may argue that finding a proof of asymptotical exactness is not worth the potentially considerable effort, as the theoretical result might be of negligible importance in many real-world applications, especially for small data $\D$. Be that as it may, we do take the position that any fair comparison of methods for obtaining generalization error CIs should include consideration of which specific quantity any one method may reasonably be assumed to be asymptotically exact for. Accordingly, we identified those methods from the literature for which a proof of asymptotical exactness exists and estimated the coverage frequency for the corresponding proxy quantity in addition to $\PE$ and $\ePE$ in our benchmark study.

\section{Summary of existing methods\label{subchap:CImethods}}
This section gives an overview of those methods for deriving coverage intervals for the GE from the literature compared in this work. We summarize the resampling procedures these estimators are based on in \Cref{tab:RESAMPLEmethods}.

\begin{notation}
        While $\iB$ continues to denote the number of pairs of training and test index vectors within a resampling procedure, the index $\ib$ for $\Jtraini[\ib]$, $\Jtesti[\ib]$, $\ntraini[\ib]$, $\ntesti[\ib]$, $\Dtraini[\ib]$, and $\Dtesti[\ib]$ is being replaced with one or more indices from $\ir, \ik, \il$ when talking about specific resampling procedures without $\ib$ having been redefined accordingly. We have opted for this slight abuse of notation for cases of hierarchical resampling methods in the interest of readability. For some resamplings such as Holdout, CV, Bootstrap, or Subsampling we have $\ib = \ik$ and $\iK = \iB$.
    Accordingly, $\pl{\ir, \ik}$ will denote the loss of the $\ii$th observation on the model trained on $\Dtrainlj[\ir, \ik]$ and analogously for more indices.
\end{notation}

\begin{table}[h!]
  \centering
  \caption{Summary of considered resampling methods which form the basis for the inference methods shown in \Cref{tab:CImethods}.}
  \label{tab:RESAMPLEmethods}
  \begin{adjustbox}{max height=2cm,width=\linewidth}
  \begin{tblr}{
      colspec={X[3cm,l]X[1.5cm,c]X[9cm,c]X[12cm,c]X[3cm,r]},
      row{1}={font=\bfseries},
      column{1}={font=\itshape},
      row{even}={bg=gray!10},
      row{1}={font=\bfseries},
    }
  Method   & $\bm{B}$ & Description$^*$ & Additional Notation$^{**}$ & Reference \\ 
    \toprule

    Holdout & $1$ & The data is partitioned only once into $\Dtrain$ and $\Dtest$. &$\vert\Dtrain \vert=n_1$, and $\vert\Dtest \vert=n_2=n-n_1$. Also denote $\trainratio = n_1/n$ and $\testratio = n_2/n$  & \cite[chap. 5.1.1.]{james2021introduction}\\
    
    Subsampling & $\iK$ & Repeat the Holdout resampling $\iK$ times. & For $\ikK$ we denote the train and test splits of the $\ik$th holdout resampling with $\Dtraini[\ik]$ and $\Dtesti[\ik]$ respectively. Also, $n_1$ and $n_2$ are defined as for the Holdout method. & \cite{delete-d-jackknife}, as \emph{delete-d jackknife} \\

    Paired Subsampling & $2\iR\iK$ & Split the data $\D$ $\iR$ times into two subsequences of size $\frac{n}{2}$ with disjoint sets of indices. Conduct Subsampling with $\iK$ iterations on each of these subsets. & For  $\irR$ let $\D_{\ir, 1}$ and $\D_{\ir, 2}$ be the two subsequences of size $\frac{n}{2}$. For $\ikK$ we denote with $\Dtraininlj[\ir, 1, \ik]$ and $\Dtestlj[\ir, 1, \ik]$ the train and test data of the $\ik$th iteration of subsampling conducted on $\D_{\ir, 1}$ and with $\Dtraininlj[\ir, 2, \ik]$ and $\Dtestlj[\ir, 2, \ik]$ the same for $\D_{\ir, 2}$ & \cite{nadeau_inference_2003}\\
    
    Cross-Validation (CV)  & $\iK$ & $\D$ is partitioned into $\iK$ subsequences of size $\frac{n}{\iK}$, where in each iteration we test on each subsequence once, and train on the union of the others, resulting in $n$ observations of loss, one per sample in $\D$. & For $\ikK$, we denote the $\ik$th fold by $\Dtesti[\ik]$, thereby $\Dtraini[\ik]=\D[-\Jtesti[\ik]]$. & \cite[chap. 5.1.3.]{james2021introduction} \\

    Leave-One-Out CV (LOOCV)  & $n$ & Conduct CV with $\iK = n$. & See CV & \cite[chap. 5.1.2.]{james2021introduction} \\ 
    
    Repeated CV  & $\iR \iK$ & Repeat the CV procedure $\iR$ times. & For $\irR$, $\ikK$ we denote with $\Dtrainlj[\ir, \ik]$ and $\Dtestlj[\ir, \ik]$ the $\ik$th train and test set of the $\ir$th repetition of CV. & 
    \\
    
    Nested CV  & $\iR\iK^2$  & One repetition of Nested CV consists of conducting an (outer) $\iK$-fold CV on the whole data $\D$, followed by (inner) $\iK-1$-fold CVs on each train set of the outer CV. This can be repeated one or more ($R$) times. & For $\irR$ and $\ikK$ and $\il = 1, \ldots, \iK - 1$, we denote with $\Dtraini[\ir, \ik]$ and $\Dtesti[\ir, \ik]$ the $\ik$th training and test data of the $\ir$th repetition of the outer CV. The $\il$th training and test set of the inner CV for the $\ik$th fold of the $\ir$th outer CV are written as $\Dtraini[\ir, \ik, \il]$ and $\Dtesti[\ir, \ik, \il]$ respectively. &  \cite{bates2024cross}  \\

    Replace-One CV (ROCV) & $(n/2 + 1)\iK$ & Split $\D$ in two subsequences $\D_1$ and $\D_2$ of size $n/2$ with disjoint sets of indices. Conduct a $\iK$-fold CV on $\D_1$, as well as on $\frac{n}{2}$ subsequences that arise from replacing the $\il$th observation of $\D_1$ with the $\il$th observation from $\D_2$, where $\il = 1, \dots, n / 2$. & For $\ikK$ we denote with $\Dtrainlj[\il, \ik]$ and $\Dtestlj[\il, \ik]$ the $\ik$th train and test data of the CV that replaces the $\il$th element of $\D_1$ with that of $\D_2$. Furthermore, we denote with $\Dtraini[\ik]$ and $\Dtesti[\ik]$ the $\ik$th train and test data of the CV on $\D_1$.  & \cite{austern2020asymptotics} \\ 
    
   Replace-One Repeated CV (RORCV) & $(n/2 + 1) \iR \iK$ & Like Replace-One CV, but use $\iR$-times repeated $\iK$-fold CV instead of $\iK$-fold CV. & For $\irR$, $\ikK$ we denote with $\Dtrainlj[\ir, \ik, \il]$ and $\Dtestlj[\ir, \ik, \il]$ the $\ik$th train and test data of the $\ir$th repetition of CV that replaces the $\il$th element of $\D_1$ with that of $\D_2$. Furthermore, we denote with $\Dtraini[\ir, \ik]$ and $\Dtesti[\ir, \ik]$ the $\ik$th train and test data of the $\ir$th repetition of CV on $\D_1$.  &  \\ 
    
    Bootstrap & $\iK$ & $\iK$ datasets of size $n$ are sampled from $\D$ with replacement. The left-out observations are used as test data. & For $\ikK$, we denote the $\ik$th training data as $\Dtraini[\ik]$\ and the $\ik$th test data as $\Dtesti[\ik] = \D[-\Jtraini[\ik]]$. & \cite[chap. 5.2.]{james2021introduction} \\

    Insample Bootstrap & $\iK$ & Like Bootstrap, but use the same data for training and for testing. & For $\ikK$, we denote the $\ik$th training data as $\Dtraini[\ik] = \Dtesti[\ik]$. & 
    \\
    
    Bootstrap Case CV  & $\sum_{\ir = 1}^{\iR}$\newline$\iK_{\ir}$ & First, obtain $\iR$ bootstrap samples of size $n$ from $\D$. For each of these bootstrap samples, conduct leave-one-case-out CV. & For $\irR$, denote the bootstrap samples with $\D_{\ir}$, let $i_{\ir, \ik}, \ik = 1, \dots, \iK_{\ir}$ be the indices of the unique observations in $\D_{\ir}$ and $m_{\ir, \ik}$ be how often they appear in the bootstrap sample. With that we can define $\Dtestlj[{\ir, \ik}] = \{(\xv^{(i_{\ir, \ik})}, y^{(i_{\ir, \ik})})\}$ and $\Dtrainlj[{\ir, \ik}] = \D_{\ir}[-i_{\ir, \ik}]$. & \cite{Jiang2008} \\

    Two-stage Bootstrap & $\iR (\iK + 1)$ & Obtain $\iR$ outer bootstrap samples of size $n$ from $\D$. Then, obtain $\iK$ inner bootstrap samples of size $n$ from each of the $\iR$ outer bootstrap samples. & Let $\D_{\ir}, \irR$ denote the outer $\iR$ bootstrap samples. For $\ikK$, the $\ik$th inner bootstrap sample from the $r$th outer bootstrap data is the training data $\Dtraini[{\ir, \ik}]$. The corresponding test data is the out-of-bag data, i.e. $\Dtesti[{\ir, \ik}] = \D_{\ir}[-\Jtraini[\ir, \ik]]$.
    Further, define $\Dtraini[\ir] := \D_{\ir}$ and $\Dtesti[\ir] := \D_{\ir}$ for the in-sample resampling of the $\ir$th outer repetition. & \cite{noma2021confidence}\\ 

    Insample & 1 & Use the whole data $\D$ as both training and test data. & Let $\Dtrain = \D$ and $\Dtest = \D$. & \\

    \bottomrule
    
  \end{tblr}
  \end{adjustbox}
\begin{flushleft}
    {\footnotesize $*$: Note that when $n$ is split into $m$ (e.g. $2$ or $\iK$) subsequences, we will assume that $\frac{n}{m}$ is a natural number for simplification.\\
    $**$:  Here, we denote element $i$ of vector $v$ as $v[i]$ and by $\{v\}$ the tuple of entries in the vector $v$. Furthermore, for some sequence $\psi$ we write $\psi[-\text{\itshape index}]$ to denote the subsequence of $\psi$ with the elements with indices equal to or contained in \emph{index} removed in a slight abuse of notation.\\}
\end{flushleft}
\end{table}

Next, \Cref{tab:CImethods} gives a concise summary of the considered CI methods, with the last two columns indicating whether a proof of the CI's asymptotical exactness has been provided in the literature and the work containing the method definition we refer to, respectively. 
\begin{notation}
    Hereafter, we denote by $\nq{\alpha}$ the $\alpha$-quantile of the standard normal distribution, by $\tq{m, \alpha}$ the $\alpha$-quantile of the t-distribution with $m$ degrees of freedom, and by $\hat{q}_\alpha(\psi_n)$ the empirical $\alpha$-quantile of some sequence $\psi_n$.
\end{notation}

\begin{table}[h!]
  \caption{Summary of considered inference methods.}

  \label{tab:CImethods}
  \centering
  \begin{adjustbox}{max width=\textwidth}
  \begin{tblr}{colspec={X[4cm,l]X[5cm,c]X[3cm,c]X[2cm,c]X[3cm,c]X[5cm,r]},
      row{1}={font=\bfseries},
      column{1}={font=\itshape},
      row{even}={bg=gray!10},
      row{1}={font=\bfseries, bg=white},
    }
          Method name & Resampling method$^{**}$ & Cost$^{***}$&
          Theoretical guarantee & Reference   \\ 
    \toprule
\NHoldout{} (\AHoldout)$^{*}$ & Holdout & 1 
& yes  & \cite{nadeau_inference_2003}\\
\NAustern{} (\AAustern)$^{*}$&{{{(LOO)CV ($\Pest$), \\ROCV ($\hat{\sigma}$)}}}  & $(n / 2 + 2) \iK$ 
&yes &\cite{austern2020asymptotics} \\
\NRepHeuristic{} (\ARepHeuristic)$^{*}$ & Repeated CV ($\Pest$), RORCV ($\hat{\sigma}$) & $(n / 2 + 2) \iR \iK $
& no & \\
\NBayle{} (\ABayle)$^{*}$ &(LOO)CV & $\iK$ 
& yes & \cite{Bayle}\\
\NCorrectedT{} (\ACorrectedT)&Subsampling & $\iK$ 
& no&\cite{nadeau_inference_2003} \\
\NConservativeZ{} (\AConservativeZ) &Subsampling ($\Pest$), Paired Subsampling ($\hat{\sigma}$) & $(2 \iR + 1) \iK$ 
&no &\cite{nadeau_inference_2003} \\
\NDiettrich{} (\ADiettrich)& Repeated CV & $10$ 
& no&\cite{dietterich_approximate_1998} \\
\NBates&Nested CV & $\iR \iK^2$ 
&no &\cite{bates2024cross} \\
\NOOB{} (\AOOB)& Bootstrap & $\iR$ 
&no &\cite{632bootstrap} \\
\Nsixplus{} (\Asixplus)& Insample + Bootstrap & $\iR + 1$ 
&no & \cite{632bootstrap}\\
\NBCCV{} (\ABCCV)&{{{BCCV ($\hat{q}$), LOOCV ($\hat{b}$)}}} & $(0.632 \iR + 1) n$
&no & \cite{Jiang2008}\\
\NLocShiftBoot{} (\ALocShiftBoot)&Insample Bootstrap ($\hat{q}$), Insample + Bootstrap ($\Pest$) & $1 + 2 \iK$ 
&no &\cite{noma2021confidence} \\
\NTwoStageBoot{} (\ATwoStageBoot)&Two-stage Bootstrap ($\hat{q}$), Insample + Bootstrap ($\Pest$) & $(\iR + 1) (\iK + 1)$ 
&no &\cite{noma2021confidence} \\
    \bottomrule
  \end{tblr}
  \end{adjustbox}
  \begin{flushleft}
  \footnotesize{
  $\bm{*:}$ This name was given by us.\\
  $\bm{**:}$ When different resampling methods are listed in the \emph{Resampling Method} column, we specify which is used for the point estimate ($\hat{P}_n$), variance estimate ($\hat{\sigma}$), bias estimate ($\hat{b}$), or quantile estimate ($\hat{q}$) respectively. Otherwise, the listed resampling method(s) are used for all estimates.\\
  $\bm{***:}$ As a simple proxy for the cost of an inference method we consider \emph{the expected total number of resampling iterations}, i.e. how often the algorithm needs to be fit to obtain the CI. This does not take into account the size of train and test data, computational cost of the algorithm, or the cost of computing the CI from the individual loss values.
             Note that sometimes, as is the case with the \NBCCV{} method, the number of resampling iterations is stochastic - here, we have taken the expected number. 
  \\
  }
  \end{flushleft}
  \end{table}

\subsection{\NHoldout{}}\newcounter{HoldoutCounter}\addone{HoldoutCounter}
The \emph{standard single-split} method uses Holdout resampling. The point and variance estimators (see also \cite{nadeau_inference_2003}) are:\\
\begin{align*}
        \Pest^{(\AHoldout)}&=\Risk_{\Dtest}(\pp{\Dtrain})=\frac{1}{\ntest}\sum_{(x,y)\in\Dtest}\loss (y, \pp{\Dtrain}(x))\tag{\AHoldout.\theHoldoutCounter}\\
        \hat{\sigma}_{\AHoldout}^2&=\frac{1}{\ntest-1}\sum_{i\in J_{test}}\Big(\mathcal{L}(\yi,  \hat{f}_{\Dtrain}(\xi))-\Risk_{\Dtest}(\pp{\Dtrain})\Big)^2\tag{\AHoldout.\theHoldoutCounter}\addone{HoldoutCounter}
\end{align*}
The corresponding CI is then given by
\begin{equation*}
    \bigg[\Pest^{(\AHoldout)}\pm z_{1-\frac{\alpha}{2}}\frac{\hat{\sigma}_{\AHoldout}}{\sqrt{\ntest}}\bigg] ,\tag{\AHoldout.\theHoldoutCounter}\addone{HoldoutCounter}
\end{equation*}

\subsection{\NAustern}\newcounter{AusternCounter}\addone{AusternCounter}
\cite{austern2020asymptotics} provide a method for calculating an asymptotically exact coverage interval for the expected risk on data of size $n-\frac{n}{\iK}$, $\E\left[\PE_{n-\frac{n}{\iK}}\right]$, based on a combination of $K$-fold and Replace-One CV (see \Cref{tab:RESAMPLEmethods}).

Specifically, \cite{austern2020asymptotics} suggest combining the standard $K$-fold CV (or LOOCV, with $K=n$) based point estimate\begin{equation*}\label{austernPE}
    \hat{P}_{n}^{(\AAustern)} = \dfrac{1}{n} \sumk \sumpl{\ik} \tag{\AAustern.\theAusternCounter}
\end{equation*}
with the following Replace-One CV-based variance estimate
\begin{equation*}\label{austernVarE}
\hat{\sigma}_{\AAustern}^2 = \frac{n}{4} \sum_{\il = 1}^{n/2} \big( \hat{P}_{n, \D_1} - \hat{P}_{n, \D_{1}[-\il],\D_2[\il]} \big)^2\,,
\tag{\AAustern.\theAusternCounter}\addone{AusternCounter}
\end{equation*}\label{austernVar}
where $\hat{P}_{n, \D_1}$ denotes the CV estimate on $\D_1$, while $\hat{P}_{n, \D_{1}[-\il],\D_2[\il]}$ denotes the CV estimate on $\D_1$ that replaces the $\il$th observation of $\D_1$ with the $\il$th observation of $\D_2$.

An asymptotically exact coverage interval for $\E\left[\PE_{n-\frac{n}{\iK}}\right]$ is then given by\begin{equation*}\label{austernCI}
    \bigg[\Pest^{(\AAustern)}\pm \nqtwo\dfrac{\hat{\sigma}_{\AAustern}}{\sqrt{n}} \bigg] \,.\tag{\AAustern.\theAusternCounter}\addone{AusternCounter}
\end{equation*}
\begin{rem}\label{rem:AusternTransfer}
    While theoretical validity of \Cref{austernCI} was only proven in combination with $K$-fold CV in \cite{austern2020asymptotics}, the general approach of \Cref{austernVar} is of course easily transferable to point estimates based on other resampling procedures.
\end{rem}

The definition given in this section is a corrected version of the original variance estimator defined by \cite{austern2020asymptotics}, which we also directly implemented for the comparison of \Cref{chap:EmpiricalResults}.

\begin{restatable}[Missing scaling constant in variance estimate of \cite{austern2020asymptotics}]{rem}{remAZadj}\label{rem:AZadj}
In the experiments conducted by \cite{bates2024cross} the original CI suggested for CV by \cite{austern2020asymptotics} proved to be wider than expected by a factor of about $1.4$. This is consistent with our argument in \Cref{app:az_experiment} that the standard error of \cite{austern2020asymptotics} should be scaled by $\frac{1}{\sqrt{2}}$ to be theoretically valid, which is also confirmed by the experiment presented in \Cref{app:az_empirical}.
\end{restatable}

\subsection{\NRepHeuristic{}}\newcounter{NRepCounter}\addone{NRepCounter}
Repeated CV is a popular resampling method for calculating the following point estimate for the GE
\begin{equation*}\label{eq:repCVpe}
    \Pest^{(\ARepHeuristic)} = \frac{1}{\iR n}\sumr \sumk \sumpl{\ir, \ik}\,.\tag{\ARepHeuristic.\theNRepCounter}
\end{equation*}
However, we are not aware of methods for estimating CI borders for the GE based on Repeated CV (see \Cref{tab:RESAMPLEmethods}) having been suggested in the literature (except for the \NDiettrich{} method, see \Cref{sec:dietterich}, which requires setting $K = 2$) . Instead, based on the reasoning of \Cref{rem:AusternTransfer}, we empirically examine the CI that results from applying the idea behind the Replace-One CV-based variance estimate from \cite{austern2020asymptotics} to the Repeated CV point estimate of \Cref{eq:repCVpe}.

\noindent Specifically, the resulting variance estimate is again given by
    $\hat{\sigma}_{\ARepHeuristic}^2 = \frac{n}{4} \sum_{\il = 1}^{n/2} \big( \hat{P}_{n, \D_{1}} - \hat{P}_{n, \D_1[-\il],\D_{2}[\il]} \big)^2$,
but with $\hat{P}_{n, \D_1}$ now denoting the \emph{Repeated CV} estimate on $\D_1$, while $\hat{P}_{n, \D_1[-\il],\D_2[\il]}$ denotes the \emph{Repeated CV} estimate on $\D_1$ that replaces the $\il$th observation of $\D_1$ with the $\il$th observation of $\D_2$.

This yields the following CI for a repeated CV approach to inference for the GE:
\begin{equation*}
        \bigg[ \Pest^{(\ARepHeuristic)}\pm \nqtwo\dfrac{\hat{\sigma}_{\ARepHeuristic}}{\sqrt{n}} \bigg] \,.\tag{\ARepHeuristic.\theNRepCounter}\addone{NRepCounter}
\end{equation*}
\subsection{\NBayle}\newcounter{BayleCounter}\addone{BayleCounter}
In the method proposed by \cite{Bayle}, both point estimate and CI border estimates are based on $K$-fold CV (or LOOCV, with $K=n$, see \Cref{tab:RESAMPLEmethods}). \\
Then, for $K$ denoting the number of folds, one point estimate and two asymptotically valid variance estimates are defined as follows:\begin{align*}
    \Pest^{(\ABayle)} &= \dfrac{1}{n} \sumk \sumpl{\ik}\quad\quad \label{BaylePE}\text{\emph{(Same point estimate as in \Cref{austernPE})}} \tag{\ABayle.\theBayleCounter}\addone{BayleCounter}\\
    \hat{\sigma}^2_{out} &= \frac{1}{n}\sumk \sum_{\ii \in \Jtesti[\ik]}(\pl{\ik} - \Pest)^2\quad\text{[Thm. 5]}\tag{\ABayle.\theBayleCounter}\addone{BayleCounter}\\
   \hat{\sigma}^2_{in} &= \frac{1}{\iK}\sum_{\ik = 1}^{\iK} \frac{1}{(n/ \iK) - 1}\sum_{\ii \in \Jtesti[\ik]}\Big(\pl{\ik} - \big((\iK/n) \sum_{\ii \in \Jtesti[\ik]} \pl{\ik}\big)\Big)^2\quad\text{[Thm. 4]}\tag{\ABayle.\theBayleCounter}\addone{BayleCounter}
\end{align*}
In contrast to the all-pairs variance estimator $\hat{\sigma}^2_{out}$, the within-fold estimator $\hat{\sigma}^2_{in}$ is not applicable in combination with LOOCV resampling.

Note that their work does include a proof of asymptotical exactness, but with respect to the random variable 
\begin{equation*}\label{eq:TestErrorDef}
   \frac{1}{n}\sum_{\ik = 1}^{\iK} \sum_{(x,y)\in\ranD_{\text{test},\ik}}\E[\loss(y,  \hat{f}_{\inducer, \ranD_{\text{test},\ik}}(x)) \vert \ranD_{\text{train},\ik}]\,,\tag{\ABayle.\theBayleCounter}
\end{equation*}
(see \citealp[Eq. 2.1]{Bayle}), not $\PE$ or $\ePE$.

Specifically, an asymptotically exact coverage interval for the \textit{Test Error} from \Cref{eq:TestErrorDef} is then given by\begin{equation*}
    \bigg[\Pest^{(\ABayle)}\pm \nqtwo\dfrac{\hat{\sigma}_{\ABayle}}{\sqrt{n}} \bigg] \,,\tag{\ABayle.\theBayleCounter}\addone{BayleCounter}
\end{equation*}
where $\hat{\sigma}_{\ABayle}$ is allowed to be equal to either $\hat{\sigma}_{out}$ or $\hat{\sigma}_{in}$.

\subsection{\NCorrectedT\label{methodsec:correctedT}}\newcounter{CorrectedTCounter}\addone{CorrectedTCounter}
The \NCorrectedT{} method for calculating CIs for the GE is based on the Subsampling procedure (see \Cref{tab:RESAMPLEmethods}) and defined via the following point estimate and associated variance estimate, respectively
\begin{align*}
    \Pest^{(\ACorrectedT)}&=\frac{1}{\iK} \sumk \frac{1}{ n_2}\sumpl{\ik}\tag{\ACorrectedT.\theCorrectedTCounter}
    \\\hat{\sigma}(\ranDn)^2&=\frac{1}{\iK - 1}\sum_{\ik = 1}^{\iK} \Big(\Big(\frac{1}{n_2} \sum_{\ii \in \Jtesti[\ik]} \pl{\ik} \Big) - \Pest\Big)^2\,.\tag{\ACorrectedT.\theCorrectedTCounter}\addone{CorrectedTCounter}
\end{align*}
Given that the corrected version of the Resampled-T method was shown to materially outperform the non-corrected version by \cite{nadeau_inference_2003}, we only include the former in our empirical study. Here, (\ACorrectedT.\theCorrectedTCounter) is multiplied by a heuristic correction factor, giving the following estimate for the squared standard error
\begin{equation*}
\SEhat_{\ACorrectedT}^2 =\big(\frac{1}{\iK} + \frac{n_2}{n-n_2}\big)\cdot  \hat{\sigma}(\ranDn)^2\,,\tag{\ACorrectedT.\theCorrectedTCounter}\addone{CorrectedTCounter}
\end{equation*}
which yields the following CI for the GE
\begin{equation*}
        \bigg[ \Pest^{(\ACorrectedT)}\pm \tqtwo{\iK -1}{\SEhat_{\ACorrectedT}}\bigg] \,.\tag{\ACorrectedT.\theCorrectedTCounter}\addone{CorrectedTCounter}
\end{equation*}

\subsection{\NConservativeZ}\newcounter{ConservativeZCounter}\addone{ConservativeZCounter}
The \NConservativeZ{} method, also defined in \cite{nadeau_inference_2003}, is based on the same, Subsampling-based point estimate 
as the \NCorrectedT{} method of \Cref{methodsec:correctedT}. However, the variance estimate is based on \textbf{paired subsampling}, with the ratio parameter that determines the size of the test set
chosen such that the size $n_2$ of the test data for both resampling procedures is the same. Specifically, two estimates of the form\begin{equation*}
\hat{P}_{n, \ir, t}:=\frac{1}{\iK} \sumk 
\frac{1}{n_2}\sum_{\ii \in \Jtesti[\ir, t, \ik]} \pl{\ir, t, \ik}\tag{\AConservativeZ.\theConservativeZCounter}
\end{equation*}
are computed on data of size $n/2$. Then, for $R$ denoting the number of paired subsampling iterations, the estimate for the standard error is defined as
\begin{equation*}
    \SEhat_{\AConservativeZ}^2= \frac{1}{2\iR} \sum_{\ir = 1}^{\iR} \big (\hat{P}_{n, \ir, 1} - \hat{P}_{n, \ir, 2} \big)^2\,,\tag{\AConservativeZ.\theConservativeZCounter}\addone{ConservativeZCounter}
\end{equation*}
yielding the following CI for the GE
\begin{equation*}
        \bigg[\Pest^{(\ACorrectedT)}\pm \nqtwo{\SEhat_{\AConservativeZ}} \bigg] \,.\tag{\AConservativeZ.\theConservativeZCounter}\addone{ConservativeZCounter}
\end{equation*}
Note that this method is referred to as conservative because the estimates $\hat{P}_{n, \ir, t}$ used to estimate the variance subsample datasets of size $n/2$ instead of $n$.

\subsection{\NDiettrich}\newcounter{DiettrichCounter}\addone{DiettrichCounter}\label{sec:dietterich}
In the $5\times 2$ CV method proposed by \cite{dietterich_approximate_1998}, both point estimate and CI border estimates are based on Repeated CV (see \Cref{tab:RESAMPLEmethods}), with the parameters fixed at $R=5$ and $K=2$, respectively. This results in the point estimate

\begin{equation*}
    \Pest^{(\text{\NDiettrich})} =\meanpl{1, 1}\,,\tag{\NDiettrich.\theDiettrichCounter}
\end{equation*}
and estimate for the  squared standard error\begin{equation*}
   \SEhat_{\text{\NDiettrich}}^2 =  \frac{2}{5} \sum_{\ir = 1}^5 \Bigg(\frac{1}{2\cdot\vert \Jtesti[\ir, 1]\vert}\sum_{\ii \in \Jtesti[\ir, 1]} \pl{\ir, 1} - \frac{1}{2\cdot\vert \Jtesti[\ir, 2]\vert}\sum_{\ii \in \Jtesti[\ir, 2]} \pl{\ir, 2}\Bigg)^2\,,\tag{\NDiettrich.\theDiettrichCounter}\addone{DiettrichCounter}
\end{equation*}
finally giving the following CI for the GE
\begin{equation*}
    \bigg[ \Pest^{(\text{\NDiettrich})} \pm \tqtwo{5} \SEhat_{\text{\NDiettrich}}\bigg]\,.\tag{\NDiettrich.\theDiettrichCounter}\addone{DiettrichCounter}
\end{equation*}
Please note that the \ADiettrich\ method was originally proposed for the comparison of two models. Since it has repeatedly been used for the evaluation of a single model since its first mention in \cite{dietterich_approximate_1998}, however, we have included this method in our empirical study.

\subsection{\NBates{}}\newcounter{BatesCounter}\addone{BatesCounter}\label{sec:ncv}
\cite{bates2024cross} propose a Nested CV-based (see \Cref{tab:RESAMPLEmethods}) method for deriving a CI for the GE. Recall that $\Jtraini[\ir, \ik]$ and $\Jtesti[\ir, \ik]$ denote the indices from the outer and $\Jtraini[\ir, \ik, \il]$ and $\Jtesti[\ir, \ik, \il]$ from the inner CV. Then, the point estimate is given by \begin{equation*}
    \Pest^{(\ABates)}=\frac{1}{\iR n (\iK - 1)} \sumr \sumk \sum_{\il = 1}^{\iK - 1} \sumpl{\ir, \ik, 
    \il}\,.\quad \text{[Alg. 1]}\tag{\ABates.\theBatesCounter}
\end{equation*}
Furthermore, given 
\begin{align*}
    \hat{P}_{n, \ir, \ik}^{(\text{out})}&=\frac{1}{\vert \Jtesti[\ir, \ik] \vert} \sum_{\ii \in \Jtesti[\ir, \ik]} \pl{\ir, \ik}
    \tag{\ABates.\theBatesCounter}\addone{BatesCounter}\\
\hat{P}_{n, \ir, \ik}^{(\text{in})}&=\frac{1}{\vert \Jtraini[\ir, \ik] \vert } \sum_{\il = 1}^{\iK - 1} \sum_{\ii \in \Jtesti[\ir, \ik, \il]} \pl{\ir, \ik, \il}
    \tag{\ABates.\theBatesCounter}\addone{BatesCounter}\\
   \hat{\sigma}^{2}_{\ir, \ik} &=\frac{1}{\vert \Jtesti[\ir, \ik] \vert - 1} \sum_{\ii \in \Jtesti[\ir, \ik]} \big(\pl{\ir, \ik} - \hat{P}^{(\text{out})}_{n, \ir, \ik} \big)^2
    \tag{\ABates.\theBatesCounter}\addone{BatesCounter}\\
   \hat{\sigma}^2_{\text{in}} &=\frac{1}{\iR n (\iK - 1) - 1} \sumr \sumk \sum_{\il = 1}^{\iK - 1} \sum_{\ii \in \Jtesti[\ir, \ik, \il]} \big(\pl{\ir, \ik, \il} - \Pest^{(\ABates)} \big)^2
    \tag{\ABates.\theBatesCounter}\addone{BatesCounter}\\
   \widehat{\text{MSE}}_{\iK - 1} &= \frac{1}{\iR \iK} \sumr \sumk \Big(\big[ \hat{P}^{(\text{in})}_{n, \ir, \ik} - \hat{P}^{(\text{out})}_{n, \ir, \ik}   \big]^2 - \frac{1}{\vert \Jtesti[\ir, \ik] \vert} \hat{\sigma}^2_{\ir, \ik}\Big)
    \tag{\ABates.\theBatesCounter}\addone{BatesCounter}
\end{align*}

the estimate for the standard error is given by
\begin{equation*}\label{eq:BatesSE}
    \SEhat_{\ABates} = \max\Bigg(\frac{\hat{\sigma}_{\text{in}}}{\sqrt{n}}, \min\bigg(\sqrt{\max\Big(0, \Big(\frac{\iK - 1}{\iK}\Big) \widehat{\text{MSE}}_{\iK - 1}\Big)}, \frac{\hat{\sigma}_{\text{in}}\sqrt{\iK}}{\sqrt{n}}\bigg)\Bigg)\,.\tag{\ABates.\theBatesCounter}\addone{BatesCounter}
\end{equation*}
In order to construct their CI, \cite{bates2024cross} furthermore propose the following bias estimate, where $\hat{P}_{n, \D}$ again denotes the (Repeated) CV estimate on $\D$
\begin{equation*}\label{BatesCI00}
    \hat{b}_{\ABates} = \Big( 1 + \dfrac{\iK - 2}{\iK}\Big)^c \Big( \Pest^{(\ABates)} - \hat{P}_{n, \D} \Big)\quad\quad \text{[Eq. 15]}\tag{\ABates.\theBatesCounter}\addone{BatesCounter}
\end{equation*}

\noindent The value of $c$ is equal to $1$ in \citealp[Eq. 15]{bates2024cross}, but we found that the provided implementation accompanying the article estimates the bias using $c=1.5$ instead. For our experiment, we implemented the Nested CV method using $c=1$.

In practice, we obtain the CV estimate from the outer CVs of the Nested CV procedure

\begin{equation*}\label{BatesCI0}
    \hat{P}_{n, \D} =  \frac{1}{\iR n} \sumr \sumk \sum_{\ii \in \Jtesti[\ir, \ik]} \pl{\ir, \ik} \,.\quad\quad \text{[Eq. 15]}\tag{\ABates.\theBatesCounter}\addone{BatesCounter}.
\end{equation*}

Finally, the CI for the GE is given by
\begin{equation*}\label{BatesCI}
    \bigg [ \left(\Pest^{(\ABates)} - \biasest_\text{\ABates}\right)\pm \nqtwo \SEhat_{\ABates} \bigg ]\,.\tag{\ABates.\theBatesCounter}\addone{BatesCounter}
\end{equation*}
Note that \cite{bates2024cross} argue that \Cref{BatesCI} gives a CI for $\PE(\bm{\D})$, although a formal proof of theoretical validity is not provided.

\subsection{Out-of-Bag}\newcounter{OOBCounter}\addone{OOBCounter}
The Out-of-Bag method is based on bootstrap resampling (see \Cref{tab:RESAMPLEmethods}) and was proposed by \cite{632bootstrap}. We slightly adapted this version to avoid invalid negatives during the CI computation, as detailed below. First, we define
\begin{align*}
       I_\ii^\ik =& \indicatorf_{\big\{\obsi{i}\in\Dtesti[\ik]\big\}}
    \quad\text{ and }\quad  N_\ii^\ik= \sum_{(x,y)\in\Dtesti[\ik]}\indicatorf_{\big\{\obsi{i}=(x,y)\big\}}\tag{\AOOB.\theOOBCounter}
    \\
    M_i =& \{ \ik \in \{1, \ldots, \iK \}: \; \obsi{i}\in\Dtesti[\ik] \}
    \tag{\AOOB.\theOOBCounter}\addone{OOBCounter}\\
    \hat{P}_{n, i} =& \dfrac{1}{\vert M_i \vert} \sum_{\ik \in M_i} \pl{\ik}\,,\quad\text{for $i\in$} V = \{i \in \{1, \ldots, n\}: \; \vert M_i \vert > 0\}
    \tag{\AOOB.\theOOBCounter}\addone{OOBCounter}\\
    q_i^k =& \frac{1}{n} \sum_{\ii = 1}^n I_\ii^\ik \pl{\ik} \quad \quad \text{[Eq. 36]}\tag{\AOOB.\theOOBCounter}\addone{OOBCounter}\\
    \hat{D}_\ii  =& \left(2 +  \frac{1}{n - 1}\right) \frac{\left(\sum_{b=1}^B q_i^b /\sum_{b=1}^B I_i^b\right) - \Pest}{n} + \frac{\sumk (N_\ii^\ik - \overline{N}_\ii) q^\ik}{\sumk I_\ii^\ik}, \quad \text{for $i\in$} V \quad \quad \text{[Eq. 40]}\tag{\AOOB.\theOOBCounter}\addone{OOBCounter}\\
\end{align*}

\noindent The Out-of-Bag point and standard error estimates are then given by
\begin{align*}
    \Pest^{(\AOOB)} &= \frac{1}{\vert V \vert} \sum_{\ii \in V} \hat{P}_{n, i}\,, \quad \quad\text{[adapted from Eq. 37]}\tag{\AOOB.\theOOBCounter}\addone{OOBCounter}\\
    \SEhat_{\AOOB}^2 &= \frac{n}{\vert V \vert } \sum_{\ii \in V} \hat{D}_i^2\,, \quad  \text{[adapted from Eq. 35]}\tag{\AOOB.\theOOBCounter}\addone{OOBCounter}
\end{align*}
 respectively. The above definitions differ from the original in so far as that they are still well-defined in those cases where an observation $\obs[i]$ is never an element of $\Dtesti[k]$, for any $k\in\{1,\dots,K\}$.
Given that the omitted values are missing completely at random in both the definitions of $\Pest$ and $\SEhat^2$, we view the re-scaling to be justified. Additionally, we do not use the adjusted standard error estimate from \citealp[Eq. 43]{632bootstrap}, which corrects for the internal bootstrap error. This decision was made because, for small values of $K$, it occasionally resulted in negative standard error estimates, and for larger values of $K$, we expect the impact to be negligible.

Lastly, the Out-of-Bag CI for the GE is given by\begin{equation*}
   \bigg [ \Pest^{(\AOOB)}\pm \nqtwo \SEhat_{\AOOB} \bigg ] \,.\tag{\AOOB.\theOOBCounter}\addone{OOBCounter}
\end{equation*}

\subsection{\Nsixplus{}}\newcounter{sixplusCounter}\addone{sixplusCounter}
The \Nsixplus{} method, also proposed by \cite{632bootstrap}, is based on both the bootstrap and insample resampling (see \Cref{tab:RESAMPLEmethods}) procedures.

In the following, let $\hat{P}_{n, \text{in}}$ denote the in-sample error of a model trained on $\D$.
Writing $\hat{P}_{n, \text{oob}}$ and $\SEhat_{\text{oob}}$ for the point and standard error estimate from the \NOOB{} method, respectively, and\begin{equation*}
    \hat{w}= 0.632\cdot\left(1 - 0.368 \times \frac{\hat{P}_{n, \text{oob}} - \hat{P}_{n, \text{in}} }{\frac{1}{n^2}\sum_{i = 1}^{n}\sum_{j = 1}^n \loss(y^{(j)}, \pp{\D}(\xi)) - \hat{P}_{n, \text{in}}}\right)^{-1}\,,\tag{\Asixplus.\thesixplusCounter}
\end{equation*} we can now define the corresponding estimates for the \Nsixplus\ method:\begin{align*}
\Pest^{(\text{\Asixplus})}&=\hat{w} \times \hat{P}_{n, \text{oob}} + (1 - \hat{w}) \times \hat{P}_{n, \text{in}} \label{632PE}\tag{\Asixplus.\thesixplusCounter}\addone{sixplusCounter}\\
   \SEhat_{\text{\Asixplus}} &= \SEhat_{\text{oob}} \times \dfrac{\Pest}{\hat{P}_{n, \text{oob}}}\,. \tag{\Asixplus.\thesixplusCounter}\addone{sixplusCounter}
\end{align*}
This results in the following \Nsixplus\ CI for the GE:\begin{equation*}
   \bigg [ \Pest^{(\text{\Asixplus})}\pm \nqtwo \SEhat_{\text{\Asixplus}} \bigg ] \,.\tag{\Asixplus.\thesixplusCounter}\addone{sixplusCounter}
\end{equation*}

\subsection{Bootstrap Case CV Percentile}\newcounter{JiangCounter}\addone{JiangCounter}
The Bootstrap Case CV Percentile interval proposed by \cite{Jiang2008} is a percentile-based CI method that utilizes the Bootstrap Case CV resampling procedure (see \Cref{tab:RESAMPLEmethods}). This method can be applied with or without bias correction (BC), which in turn would be based on LOOCV resampling. 

Given the sequence\begin{equation*}
  \Big(\hat{P}_{n, \ir} = \frac{1}{n}\sum_{\ik = 1}^{\iK_{\ir}} m_{\ir, \ik} \cdot e_{\ir, \ik}[\ii_{\ir, \ik}]\Big)_{r=1}^R\,,\tag{BCCV.\theJiangCounter}
\end{equation*}
\cite{Jiang2008} define the point estimate \begin{equation*}
    \Pest^{(\text{BCCV})}=\frac{1}{\iR} \sum_{\ir = 1}^{\iR} \hat{P}_{n, \ir}\tag{BCCV.\theJiangCounter}\addone{JiangCounter}
\end{equation*}
and additionally consider $\Pest{(^\text{LOOCV})}:=\frac{1}{n} \sum_{k=1}^n \sumpl{\ik}$, the standard CV point estimate from  \Cref{austernPE} and \Cref{BaylePE}, with $K=n$.

Based on this, they propose the following \emph{one-sided} CI
\begin{equation*}\label{JiangCI}
    \Big [0, \hat{q}_{1 - \alpha}\Big((\hat{P}_{n, \ir})_{r=1}^R\Big) - \biasest_{BCCV} \Big ]\,,\tag{BCCV.\theJiangCounter}\addone{JiangCounter}
\end{equation*}
with the bias correction $\hat{b}_{BCCV}$ chosen equal to either zero or
\begin{equation*}
    \hat{b}_{BCCV}=\Pest^{(\text{BCCV})}-\Pest^{(\text{LOOCV})}\,.\tag{BCCV.\theJiangCounter}\addone{JiangCounter}
\end{equation*}
Note that the one-sided CI of \Cref{JiangCI} makes sense when one is only interested in the upper boundary. As we generally used two-sided CIs in the empirical study, we computed Bootstrap Case CV Percentile CIs of the following form
\begin{equation*}
    \bigg [\hat{q}_{\alpha / 2}\Big((\hat{P}_{n, \ir})_{r=1}^R\Big) - \biasest_{BCCV}, \hat{q}_{1 - \alpha / 2}\Big((\hat{P}_{n, \ir})_{r=1}^R\Big) - \biasest_{BCCV} \bigg]\,.\tag{BCCV.\theJiangCounter}\addone{JiangCounter}
\end{equation*}

\subsection{\NTwoStageBoot{}}\newcounter{NomaOneCounter}\addone{NomaOneCounter}\label{sec:tsb}

\cite{noma2021confidence} propose the \NTwoStageBoot{} that may, similarly to our reasoning in \Cref{rem:AusternTransfer}, be applied to point estimates resulting from different resampling procedures.
Because the authors report similar results for Harrell’s bootstrapping bias correction \citep{harrell1996multivariable}, $632$, and the \Nsixplus{} method \citep{632bootstrap}, we chose to only include the latter in our study, see \Cref{632PE}.
Let $\hat{P}_{n, \ir}^{(in)}$ denote the point estimate from a procedure such as \Asixplus{} applied to the inner bootstrap samples from the $\ir$th repetition of the outer bootstrap. The two-stage Bootstrap CI for the GE is then given by
\begin{equation*}
   \bigg [ \hat{q}_{\alpha / 2}\Big((\hat{P}_{n, \ir}^{(in)})_{r=1}^R\Big), \hat{q}_{1 - \alpha / 2} \Big((\hat{P}_{n, \ir}^{(in)})_{r=1}^R\Big)\bigg ]\,.\tag{TSB.\theNomaOneCounter}
\end{equation*}

\noindent The point estimate $\hat{P}_{n}^{(\ATwoStageBoot)}$ is obtained by applying the point estimation procedure -- in our case \Nsixplus{} -- to the whole dataset.

\subsection{Location-shifted Bootstrap}\newcounter{NomaTwoCounter}\addone{NomaTwoCounter}\label{sec:lsb}
Like the \NTwoStageBoot{}, the \NLocShiftBoot{} works for different point estimation procedures. For the same reasoning as in \Cref{sec:tsb}, we selected the \Nsixplus{} method.
Let $\hat{P}_{n, \ik}^{(in)} := \frac{1}{\vert \Dtraini[k] \vert }\sum_{i \in \Dtraini[k]} \pl{\ik}$ denote the in-sample performance of the $\ik$th bootstrap sample, computed based on the Insample Bootstrap procedure.
Further, let $\Pest^{(LSB)}$ denote the bias-corrected point estimate, which in our case was \Asixplus{}, and $\hat{P}_n^{(in)}$ the corresponding in-sample performance of a model trained on the whole data $\D$.
The CI is then defined as
\begin{equation*}
   \bigg [ \hat{q}_{\alpha / 2}\Big((\hat{P}_{n, \ik}^{(in)})_{k=1}^K\Big)-\hat{b}_{LSB}, \hat{q}_{1 - \alpha / 2} \Big((\hat{P}_{n, \ik}^{(in)})_{k=1}^K\Big)-\hat{b}_{LSB}\bigg ]\,,\tag{LSB.\theNomaTwoCounter}
\end{equation*}

\noindent where the bias is estimated as

\begin{equation*}
   \hat{b}_{LSB}=\hat{P}_{n}^{(in)} - \Pest^{(LSB)}\,.\tag{LSB.\theNomaTwoCounter}\addone{NomaTwoCounter}
\end{equation*}

\section{Empirical examination\label{chap:Empirical}}

To systematically compare the methods from \Cref{subchap:CImethods}, each method was repeatedly applied to a variety of problems. Specifically, we define a problem $\prob$ as a tuple $(\DGP, n_{\D}, \inducer, \loss)$, where the goal is to estimate the GE, for loss function $\loss$ and associated CI in the setting of applying inducer $\inducer$ to data consisting of $n_\D$ observations generated from the data generating process \DGP.
Unless specified differently, we generated $500$ replications $(\D, \inducer, \loss)$, where $\D$ denotes data consisting of $n_\D$ observation from $\DGP$, of each tuple $\prob$. These replications were used to compute coverage frequencies of the CI methods with respect to the $\PE$, $\ePE$, and PQ if applicable.
To obtain ``ground truths'' for $\PE$, $\ePE$, and PQ, we create validation data $\D_\text{val}$ for each DGP containing $100000$ observations. Specifically, just like the CIs, ``true'' risk values are computed for every replication $(\D, \inducer, \loss)$ by fitting the model on $\D$ and using $\D_\text{val}$ as separate, dedicated test data as mentioned in \Cref{subchap:Resampling}. Then, the ``true'' value for the expected risk is calculated as the arithmetic mean of the $500$ risk values. The calculation of PQs depends on the CI method. For example, $500$ PQ values are computed for the Holdout method, while averaging, as done for $\ePE$, is required to compute the Test Error, which is the PQ for the \NBayle{} method.

\Cref{fig:MainExpAlg} provides a summary of the main experiment.
\begin{figure}[h!]
    \centering
    \begin{minipage}{.8\linewidth}
        \hrule \smallskip
	\begin{algorithmic}[1]
        \Require $\prob = (\operatorname{DGP}, n_\D, \inducer, \loss)$, the problem; $\mathcal{A}$, the CI-algorithm; a random split generator $\bm{\ResampledIndices}$ for the corresponding resampling method(s)  
        \State $\D_\text{val}\longleftarrow$ large dataset generated using DGP of $\prob$
		\For {$r=1,2,\ldots,\nrep$}
            \State $\D\longleftarrow$data of size $n_\D$, generated using DGP of $\prob$
            \State $\ResampledIndices\longleftarrow$ result of applying $\bm{\ResampledIndices}$
            \State $\widehat{GE}_{r}, CI_{r}^{(L)},CI_{r}^{(U)}\longleftarrow \mathcal{A}_{\D, \inducer, \loss}(\ResampledIndices)$
            \State $\Rp(\pp{\D})_r, \operatorname{PQ}_r \longleftarrow \evaluator(\D_\text{val}, \ResampledIndices)$
		\EndFor
        \State $\E\big[\Rp(\pp{\ranD})\big] \leftarrow \operatorname{mean}\Big(\Rp(\pp{\D})_{1},\ldots, \Rp(\pp{\D})_{{n_\text{rep}}}\Big)$
        \State $\operatorname{PQ}\leftarrow \{\operatorname{PQ}_r\}_{r=1}^{n_\text{rep}}$ 
        \State \Return $\E\big[\Rp(\pp{\ranD})\big]$, $\operatorname{PQ}$, $\{(\GEp_{r}, CI_{r}^{(L)},CI_{r}^{(U)}, \Rp(\pp{\D})_r) \; \vert \; r = 1,\ldots, n_\text{rep}\}$
	\end{algorithmic} 
\hrule
    \end{minipage}
    \caption{Pseudocode for the main experiment. Here, $\mathcal{A}_{\D, \inducer, \loss}(\ResampledIndices)$ denotes any CI method from \Cref{subchap:CImethods} being applied to the problem instance $(\D, \inducer, \loss)$ given a specific split $\ResampledIndices$, which results in a point estimate $\widehat{GE}$ and CI borders $CI^{(L)}$, $CI^{(U)}$. $\evaluator(\D_\text{val}, \ResampledIndices)$ denotes the calculation of a risk value and (element of) a PQ, if applicable.}
    \label{fig:MainExpAlg}
\end{figure}
The computational setup, including more details on the inducers, used software, hardware, and total runtime, is described in \Cref{app:comp_setup}.

\subsection{Choices of inducers, losses, and data sets\label{subchap:exp_config}}
\subsubsection{Inducers}
We evaluated all CI methods on simple and penalized linear and logistic models as well as decision trees and random forests. 
The CI methods that performed best here, see \Cref{subsec:TopPerformers}, were then additionally evaluated for an XGBoost and neural network model, specifically a multi-layer perceptron (MLP), as well as a tuned lasso regression on high-dimensional data (see \Cref{app:highdim}). 
The reasons for this multi-stage approach were two-fold: 
Firstly, the XGBoost and MLP experiments would have been too computationally costly for all CI methods and, secondly, we wanted to to pre-filter the CI methods in a more controlled experiment with simpler inducers. Arguably, XGBoost and the MLP are more sensitive to their respective hyperparameters, which is why we integrated (more complex) tuning for these methods, in turn leading to more stochastic results. In short: A CI method should ``pass the test'' of proper $1-\alpha$ coverage for the simpler inducers to be considered acceptable.  

These two points also informed the specifications of the first four learners.
While the simple linear models have no further hyperparameters, for the tree and the random forest, we used default settings, which is acceptable \citep{fernandez2014we,probst2019tunability}, but for the random forest set the number of trees to $50$ to reduce the computational cost.
For penalized (logistic) ridge regression, the $\lambda$ parameter was tuned in advance for each combination of DGP and sample size, using 10-fold CV.\footnote{This is arguably not optimal, integrated tuning would have been preferable. We ask the reader to note: a) We did tune methods like XGBoost and the MLP properly in the second stage. b) The computational costs of proper tuning were only feasible in the reduced setup of the second stage with fewer CI methods. We think this was an acceptable compromise.}
The same $\lambda$ was then used across all $500$ repetitions to reduce runtime.
Due to the already high computational cost (more than $135.7$ sequential CPU years) of the benchmark experiment, hyperparameter tuning via traditional nested resampling was not feasible for all CI methods.\footnote{Nested resampling here is different from the \textit{Nested CV} method of \Cref{sec:ncv}, see \Cref{app:comp_setup} for more details.}\\

 Because the quality of the confidence intervals for more complex inducers such as neural network or boosting algorithms, which both require hyperparameter tuning, is of great practical interest, we additionally evaluated a subset of well-performing inference methods using a tabular MLP \citep{gorishniy2021revisiting} as well as the XGBoost algorithm \citep{chen2015xgboost}.
The hyperparameters of both methods were tuned using 50 iterations of Bayesian optimization (BO).\\

\noindent For more details on the inducers, as well as the specific implementations, see \Cref{app:comp_setup}.

\subsubsection{Losses}
As loss functions we used \textit{0-1} loss, \emph{log} loss, and \emph{Brier} score for classification and \emph{squared error}, \textit{absolute error}, \emph{standardized}, \emph{winsorized}, and \emph{percentual} distances for regression.
Here, the standardized loss refers to a variant of the absolute error, which divides the absolute distance by the standard deviation of the absolute error in the respective training data.
The percentual distance is also known as the mean absolute percentage error (MAPE).
More details on these choices are provided in \Cref{app:exp_deets}.

\subsubsection{Data Sets}
For DGPs to evaluate the CIs on, we used both real and simulated data. In choosing ways to generate this data, we specifically wanted to avoid dependencies of observations between replications of problem instances. Of course, this point is not an issue for simulated data, and neither when dividing extremely large data into disjoint subsets, one may reasonably argue.
As we aim to apply the CI methods to datasets with up to $n_\D$ = 10000 observations across 500 replications and additionally required 100000 samples as external validation data to reliably approximate the target(s) of inference, we needed datasets with at least 5.1 million observations.
Because the methods have vastly different runtime requirements, we categorized them into three categories: The most expensive inference methods were only applied to datasets of the \emph{tiny} category ($100$), the expensive methods were applied to \emph{small} ($n_\D = 100, 500$), and the remaining ones to \emph{all} ($n_\D = 100, 500, 1000, 5000, 10000$).
The categories of the methods are given in table \Cref{tab:InferParamSettings}.

Unfortunately, freely available, large, and high-quality data sets without time dependencies are very rare and we were only able to find the \textit{Higgs} dataset, which was generated using a physics simulation \citep{baldi2014searching}. To sidestep the problem of resampling from smaller real data sets, we simulated the remaining 18 datasets.
For seven of those, we estimated the density of real-world datasets from OpenML \citep{OpenML2013} using the method from \cite{borisov2023language}, and then generated samples from the estimated distribution.
The main idea of the method is to treat each row of the table as a sequence of text, fine-tune a large language model, and then use its generative capabilities to simulate new observations.
These seven datasets were primarily selected from the OpenML CC-18 and CTR-23, which are two curated benchmarking suites for classification and regression \citep{bischl2021openml, fischer2023openml}.
For another three datasets, we estimated the covariance matrix of a normal distribution from medical real-world datasets.
Finally, we generated another eight linear and highly non-linear datasets using existing simulators.
This resulted in 9 regression and 10 binary classification problems each, with the number of features ranging from 8 to 6400.
For the \textit{highdim} dataset -- which was only used to evaluate well-performing inference methods -- different variants were used with $100$ up to $6400$ features.
In summary, our aim was to consider a wide range of different DGPs to avoid drawing conclusions that are limited to very specific DGPs and to enable the formulation of more general guidelines.
Table \ref{tab:Datasets} gives an overview of the datasets that were used in the benchmark study.
All but the Higgs dataset, which has 11 million observations, and the highdim data, where we set $n_\D = 500$, were generated to have 5.1 million rows.
For the exact details on how the datasets were simulated, see \Cref{app:datasets}.

\begin{table}[h!]
  \caption{Summary of the benchmark datasets.}
  \label{tab:Datasets}
  \centering
  \scriptsize
  \begin{adjustbox}{max height=2cm,width=\linewidth}
  \begin{tblr}{
      colspec={X[5cm,l]X[0.8cm,c]X[1cm,l]X[2.5cm,l]X[1cm,c]X[2.5cm,r]},
      row{1}={font=\bfseries},
      row{even}={bg=gray!10},
      row{1}={font=\bfseries},
    }
    Name$^*$ & Task & No.~Features & Data ID$^{**}$ & Majority Class & Category  \\ 
    \toprule
     higgs & classif &  28 & 45570 & &  physics-simulation \\
     bates\_classif\_20 & classif &  20  & 45654 & $50.0\%$ & artificial \\
     bates\_classif\_100 & classif & 100 & 45668 & $50.0\%$ & artificial \\
     bates\_regr\_20 & regr &  20  & 45655 & & artificial \\ 
     bates\_regr\_100 & regr & 100 & 45667 & & artificial \\
     friedman1 & regr & 10 & 45666 & & artificial \\
     chen\_10\_null & regr & 60 & 45670 & & artificial \\
     chen\_10 & regr & 60 & 45671 & & artificial \\
     highdim & classif & 100-6400 & $-^{***}$& $50.0\%$ & artificial \\
     colon & classif & 62 & 45665 & $50.0\%$  & cov-estimate \\
     breast & classif & 77 & 45669 & $50.0\%$ & cov-estimate \\
     prostate & classif & 102 & 45672 & $50.0\%$ & cov-estimate \\
     adult & classif & 14  & 45689 (1590) & $75.1\%$ & density-estimate  \\
     cover-type & classif & 10 & 45704 (44121) & $50.0\%$ & density-estimate \\
     electricity & classif & 8 & 45693 (151) & $57.6\%$ & density-estimate \\ 
     diamonds & regr & 9 & 45692 (44979) & & density-estimate \\ 
     physiochemical\_protein & regr & 9 & 45694 (44963) &  & density-estimate \\ 
     video\_transcoding & regr & 18 & 45696 (44974) & & density-estimate \\ 
     sgemm\_gpu\_kernel\_performance & regr & 14 & 45695 (44961)  & & density-estimate \\ 
    \bottomrule
  \end{tblr}
  \end{adjustbox}
      \begin{flushleft}
  \footnotesize{
  $\bm{*:}$ For datasets from the \textit{density-estimate} category, the name of the simulated dataset is the name of the original dataset prefixed by simulated\_. \\
  $\bm{**:}$ This is the unique ID of the dataset on OpenML. For entries from the \textit{density-estimate} category, the number in parenthesis is the ID of the original dataset from which the benchmark dataset was derived. \\
  $\bm{***:}$ Due to the size of these datasets, they are not shared on OpenML, but the code to reproduce them is included in the GitHub repository.
  }
  \end{flushleft}
\end{table}

\subsection{Parameter settings for inference methods} \label{subchap:exp_deets_size}

In our empirical evaluation, we compared different configurations of the inference methods listed in \Cref{tab:CImethods}.
\Cref{tab:InferParamSettings} lists the concrete parameter specifications for which they were evaluated. 
Those were set according to the authors' recommendations if available and computationally feasible.
One method for which this was impossible is \NTwoStageBoot{}, where the recommended 1000 or 2000 outer bootstrap replications would have led to excessive computational cost.
For the (Repeated) Replace-One CV for which no such recommendation exists, we used generous parameter settings that were still computationally viable.
Note that not all of the configurations listed in \Cref{tab:InferParamSettings} were part of the initial experimental design.
Some of the configurations were added after the initial result inspection in order to perform a more thorough analysis of the well-performing methods, which we will present later.
This allowed us to allocate more of our computation budget to more promising methods. 
The configurations that were initially evaluated are those shown later in \Cref{fig:FirstRound}.

\begin{table}[h!]
  \caption{Parameter settings for inference methods. When a parameter controls the number of independent repetitions, all parameterizations smaller than the given value can also be obtained without recomputing the resample experiment. For those parameters, we give the maximum considered value but also show results for smaller values. The \textit{Abbrevation} column indicates how the methods are being referred to in subsequent plots and tables.
}
  \label{tab:InferParamSettings}
  \centering
  \begin{adjustbox}{max height=1cm,width=\textwidth}
  \begin{tblr}{
      colspec={X[4cm,l]X[2cm,l]X[6.8cm,l]X[3.2cm,l]},
      row{1}={font=\bfseries},
      row{even}={bg=gray!10},
    }
     Method & $n_\D$ & Parameters & Abbreviation \\
    \toprule
\NHoldout & all & $p_{\text{train}} = 0.9, 0.8, 0.7, 0.66, 0.6, 0.5$ & holdout\_\{$p_\text{train}^*$\}\\
\hline
\NAustern & small & K = 5 & rocv\_\{K\}\\
\hline
\NRepHeuristic & small & K = 5, max R = 5 & rep\_rocv\_\{R\}\_\{K\}\\
\hline
\NBayle & all & $K = 5, 10, 25, 50, 75, 100, n$ $\text{variance} = \text{all-pairs}, \text{within-fold}$ & cv\_\{K\}\_allpairs, cv\_\{K\}\_within\\
\hline
\NCorrectedT & all & max $K = 100$ and $p_{\text{train}} = 0.9, 0.8, 0.7$ & cort\_\{K\}\_\{$p_{\text{train}}$\}$^*$ \\
\hline
\NConservativeZ & small & max K = 50, max R = 50, $p_\text{train} = 0.9$ & conz\_\{R\}\_\{K\}\\
               & all & max K = 10, max R = 12, $p_\text{train} = 0.9$ \\
\hline
5 x 2 CV & all & & 52cv \\
\hline
\NBates & small & max R = 200, K = 5 & ncv\_\{R\}\_\{K\}\\
         & all & max R = 10, K = 5& \\
\hline
\NOOB & all & max K = 100 & oob\_\{K\} \\
\NOOB & small & max K = 1000 \\
\hline
\Nsixplus & all & max K = 100 & 632plus\_\{K\}\\
\Nsixplus & small & max K = 1000 \\
\hline
\NBCCV & tiny & max R = 100, with/without bias correction & bccv\_\{R\}\_bias, bccv\_\{R\} \\
\hline
\NLocShiftBoot & all & max R =  100, max K = 10 & lsb\_\{R\}\_\{K\}\\
\hline
\NTwoStageBoot & small & max R = 200, max K = 10 & tsb\_\{R\}\_\{K\}\\
    \bottomrule
  \end{tblr}
  \end{adjustbox}
    \begin{flushleft}
  \footnotesize{
  $\bm{*:}$ The ratio $p_{\text{train}}$ will be given in percent.\\
  $\bm{**:}$ When $p_{\text{train}}$ is omitted, it is set to $0.9$.
  }
  \end{flushleft}
\end{table}




%
%

\subsection{Evaluation measures for CI estimation\label{subsubchap:EvalMeasures}}

Ideally, all methods presented in \Cref{subchap:CImethods} would, for significance level $\alpha$, produce intervals that cover the ``true'' value of the GE in $100(1-\alpha)\%$ of the $n_\text{rep}$ repetitions of every experiment from \Cref{fig:MainExpAlg}. 
When empirically comparing these methods, the first issue with this is that we have already distinguished between two different versions of ``the'' generalization error ($\PE$ and $\ePE$) and explained how different methods may only formally be expected to reliably cover what we refer to as proxy quantities, see \Cref{rem:FormalTargets}. Therefore, for each method, we separately calculate the coverage frequency of both the risk and expected risk, as well as the proxy quantities, if applicable. 

Of course, methods that produce extremely wide intervals across models are as likely to have a very high coverage frequency, but due to their imprecision will be less useful in practice. 
Therefore, we additionally calculate the median CI width, standardized by the standard deviation of each method's point estimate, to rule out overly conservative methods. 
We use this approach since a ``perfectly'' calibrated CI method based on normal approximation should have a relative median width of approximately $4$ ($2\cdot 1.96$, since we chose $\alpha=0.05$). Based on this, we are confident that methods exhibiting a median relative width larger than $8$ may be dismissed for being overly conservative. 
For later comparison of absolute CI widths we then restrict ourselves to the widths calculated for classification tasks, given that the absolute widths for regression tasks may not be reasonably compared across data sets with target variables of widely varying sample variance.
However, we also observed similar patterns when we looked closely at the results of the regression analysis.
The advantage of using the CI width instead of the over-coverage of an inference method to assess whether it is too conservative is that the width is not bounded. The over-coverage, however, can at most be $\alpha$ and hence cannot distinguish between two methods' with strongly differing widths when they both have a coverage of $100\%$.
This is less of a problem when evaluating whether inference methods are too liberal, as the coverage should never fall to $0\%$.
Further, this also does not pose a challenge when assessing inference methods that (mostly) produce relative coverage frequencies within the interval $(0, 1)$.
Lastly, we also provide a runtime estimation of the methods in \Cref{app:runtime_est}, since the expected number of resampling iterations from \Cref{tab:CImethods} is a rather simplistic estimate of each method's cost.
Note that we approximate the runtime of each inference method via the runtime of the underlying resampling methods, as the cost of computing the CI from the resampled predictions is negligible.

To summarize, we first and foremost compare all considered CI methods with regard to \emph{relative coverage frequency} (of different target quantities), \emph{CI width}, and \emph{runtime}. 


\subsection{Empirical results\label{chap:EmpiricalResults}}
In a first analysis, we granularly compared each problem $\prob = (\operatorname{DGP}, n_\D, \inducer, \loss)$ across methods to spot tendencies and outliers. Specifically, we made this comparison for $30$ (versions) of the 13 CI methods from \Cref{tab:InferParamSettings}, applying some of them with different parameters.
Recall that not all configurations listed in the table were included in this first stage, as we later conducted more experiments for methods that performed well in the initial experiments.
The corresponding plots may be downloaded from zenodo\footnote{\url{https://zenodo.org/records/13744382}}. Here, it immediately became apparent that for certain DGPs, all methods performed very poorly, at least for the standard losses, i.e. squared error for continuous outcomes and $0-1$ loss for classification. 
The most extreme cases were the physiochemical\_protein, chen\_10\_null, and video\_transcoding datasets.
All three have metric target variables with extreme outliers, a fat-tailed empirical distribution, and/or highly correlated features, with the latter especially affecting estimation for linear models.
These DGP properties can cause high instability in the point estimates, making variance estimation in the low-sample regime challenging.
In the interest of a meaningful aggregated comparison of methods, we decided to omit three such DGPs for the results in this section. Instead, they are separately analyzed in \Cref{app:dgp_outliers}.
Furthermore, the analysis in the main part of our paper is restricted to the standard losses.
Results for other losses are analyzed in \Cref{app:losses}, which shows that the relative coverage frequencies on these three DGPs improve considerably when using a more robust loss function such as the winsorized squared error.

In a total of 15 experiments, an inducer either failed to produce a model or an inference method did not yield a CI.
These cases are described in \Cref{app:algfail}.

Next, we were able to classify a notable subset of methods as either too liberal, based on their \emph{average undercoverage} (i.e. the difference between $1-\alpha$ and the actual observed coverage), or as too conservative, based on their \emph{median (relative to each estimate's standard deviation) CI width} across DGPs, inducers, and data sizes. 
Importantly, some methods were so computationally expensive that we only applied them to data of sizes $100$ and $500$ (at least initially) as is shown in \Cref{tab:InferParamSettings}.

In this first aggregated step, we required methods to have both an average undercoverage of at most $0.1$ and a median (relative) width of at most $8$ to be considered for further analysis. 
\Cref{fig:FirstRound} illustrates this comparison, with the methods considered further highlighted. Here, it immediately becomes apparent that the average coverage of $\PE$ and $\ePE$ is rather similar for each method, see \Cref{rem:EmpCov} for more. For the highlighted methods, a direct visualization of the median CI width relative to $cort\_10$ versus their coverage of the expected risk may be found in \Cref{app:widthcov_plot}.

\begin{figure}[h!]
    \centering
    \includegraphics[width=\linewidth]{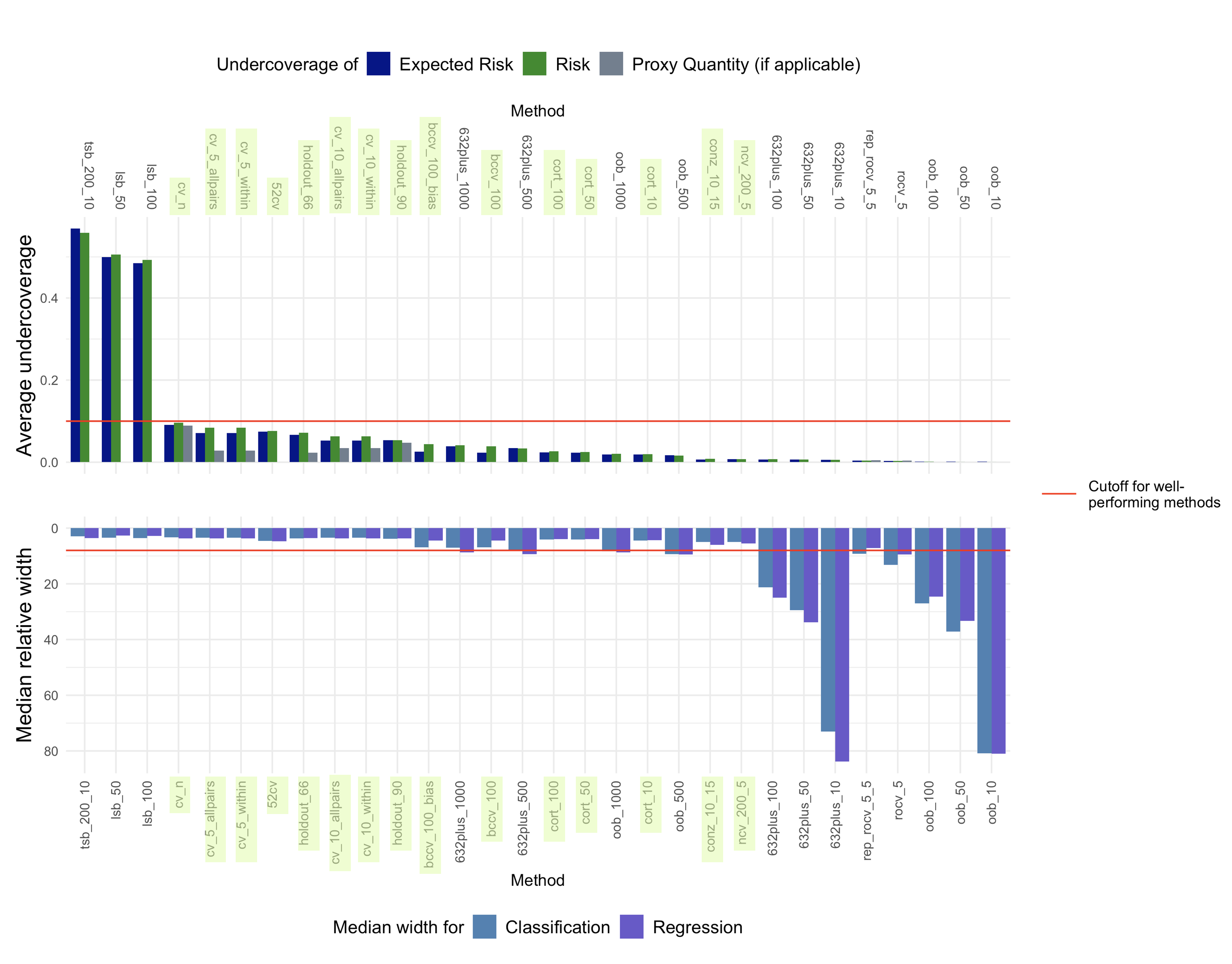}
    \caption{Comparison of all (versions of) methods for computing CIs for the GE compared in this work. The highlighted methods were considered for further analysis.}
    \label{fig:FirstRound}
\end{figure}

\begin{note}
   While the BCCV Percentile method by \cite{Jiang2008} did clear the width-cutoff, we decided only to include this method for further analysis if it performed outstandingly well on data of size $100$ given its cost (more than 30.000 resampling iterations on data of size $500$). Since this was not the case, we did not consider either BCCV Percentile version for further analysis but separately analyzed this method in \Cref{app:bccv}.
\end{note}

Next, we visually examined the average coverage, as a function of data size $n_\D$ and aggregated over DGPs and inducers, for each of the highlighted methods from \Cref{fig:FirstRound}. The corresponding plots are given in \Cref{fig:SecondRound}. 

For the methods applied only to data of size $100$ and $500$, 
\NBates{} and \NConservativeZ{} 
provided remarkably accurate coverage with only slightly conservative CI widths. Consequently, we dismissed \NBayle{} 
\emph{only in combination with LOO}, for further analysis, as it performed noticeably worse on small data.

For all other methods that cleared the cutoffs from \Cref{fig:FirstRound} (upper rows of \Cref{fig:SecondRound}), $5\times 2$ CV 
performs noticeably worse than the others. 
For the random forest, the average coverage starts at $0.9$ for small data and drastically decreases for increasing data size; which is attributable to the fact that its point estimate uses only half the data for training, see \Cref{rem:HoldoutPerformance}(ii). For all other models, the average coverage only stabilizes around the desired level of $0.95$ for data sizes of $5000$ or larger. 
We exclude $5\times 2$ CV from further analysis for these reasons.

\begin{figure}[h!]
    \centering
    \includegraphics[width=\linewidth]{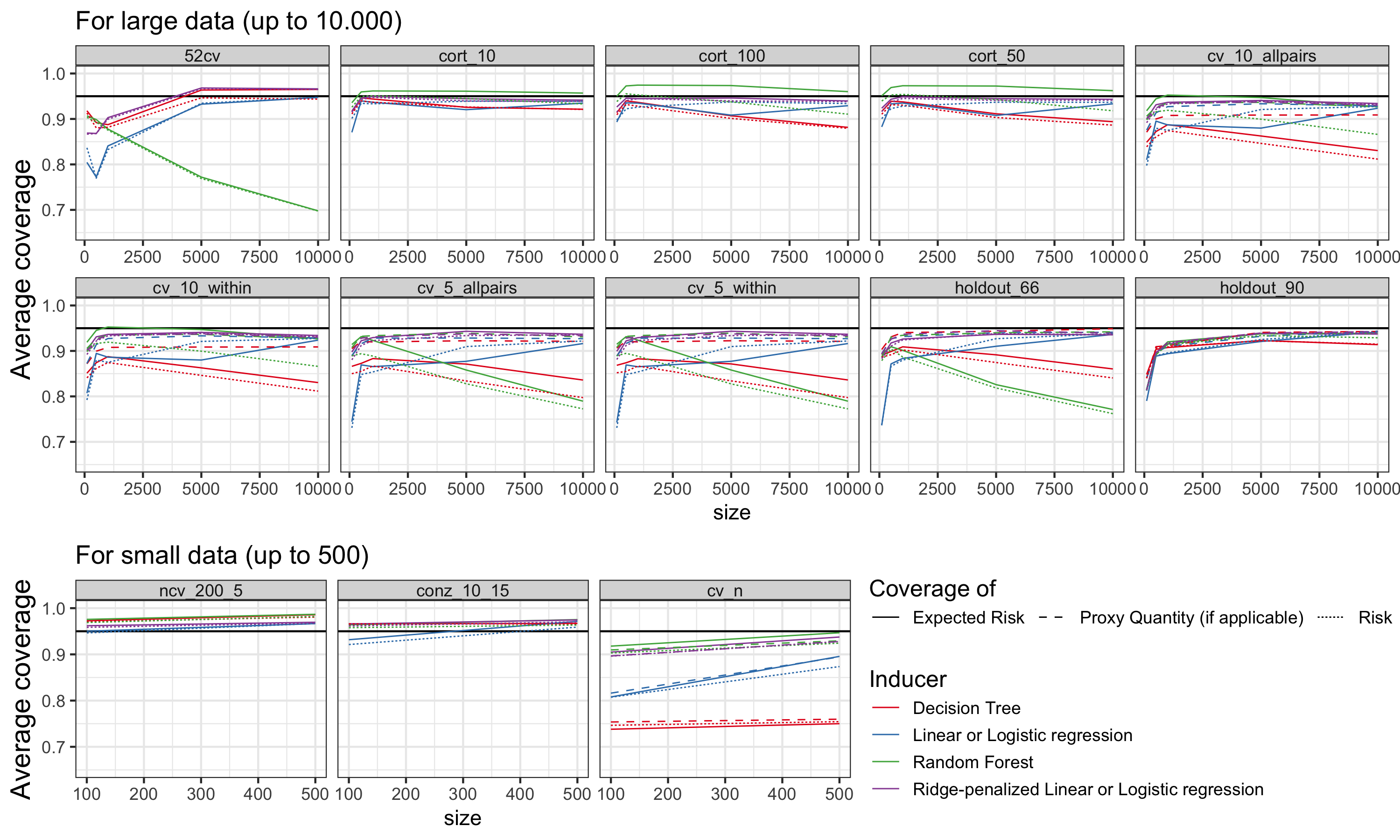}
    \caption{Average coverage for the (versions of) methods that were not dismissed based on \Cref{fig:FirstRound}, as a function of data size $n_\D$ and aggregated over DGPs and inducers. The upper rows contains those methods that were applied to all data sizes, and the lowest row those that were only applied to data sizes $100$ and $500$.}
    \label{fig:SecondRound}
\end{figure}

\begin{rem}[The performance of Holdout-based CIs]\label{rem:HoldoutPerformance}\,
\begin{enumerate}
    \item[(i)] At this point, we should briefly address the fact that the \NHoldout{} method using a $90$-$10$ split seems to perform very well, especially considering the fact that Holdout resampling is widely known to be inferior to other resampling procedures for performance estimation, see \cite{james2021introduction}. The reason for this phenomenon is that, while the Holdout, or single-split, point estimate is most definitely outperformed by those based on $K$-fold CV etc., the corresponding \NHoldout{} CI is least impacted by the dependence structures mentioned in \Cref{rem:ResamplingComplexities} and, therefore, very reliable. However, it also provides significantly wider CIs than those resulting from the \NBayle{} or \NCorrectedT{} methods. The difference in width will be made apparent in \Cref{fig:ThirdRound}, while \Cref{app:point_estimates} provides a comparison of the point estimates for the well-performing CI methods.
    \item[(ii)] The \NHoldout{} version using $2/3$rds as training data ideally showcases the pessimistic bias resulting from estimating the (expected) risk conditional on data of size $\frac{2n}{3}$ (here the PQ is the risk given the training set of size $\frac{2n}{3}$, see \Cref{rem:FormalTargets}). As the sample size increases, this bias becomes dominant for models with a steeper learning curve that continues to rise with rising $n$. In our case, these are decision trees and random forests, in comparison to the more restricted linear models in our study.
    However, the coverage of the proxy quantity, $\Risk\left(\pp{\ranD_\frac{2n}{3}}\right)$ in this case, remains stable with increasing sample size.
\end{enumerate}
\end{rem}
The effect described in point \emph{(ii)} of \Cref{rem:HoldoutPerformance} evidently also applies to the \NBayle{} method proposed by \cite{Bayle}. While the coverage for the \textit{Test Error} itself remains stable, the coverage of (expected) risk decreases with increasing sample size for well-performing models that achieve an improved fit as $n$ rises. Notably, the same does not apply to the \NCorrectedT{} method, which may be attributed to the heuristic correction factor, see \Cref{methodsec:correctedT}.

\subsubsection{Calibration of different CI methods}

At this point, we have established five methods - namely \NCorrectedT{}, \NBayle{}, Holdout, as well as (thus far on small data) \NConservativeZ{} and Nested CV - as producing suitable CIs. Next, we varied different parameters, such as inner/outer repetitions and ratio, for each of these methods to investigate their optimal calibrations. 
For a description of these parameters, see \Cref{tab:RESAMPLEmethods}.
The results are visualized by \Cref{fig:ThirdRound}.

The Nested CV method, applied only to data of size $100$ or $500$, with $K=5$ kept constant, provided good average coverage and stable CI width even for relatively small numbers of outer repetitions.
Note that the method tends to produce slightly conservative intervals.
However, we observed a rather high sample variance of CI widths for less than $25$ outer repetitions, see \Cref{app:ncv} for a more detailed analysis.

The \NConservativeZ{} method, also applied only to data of size $100$ or $500$, provided excellent average coverage, which stabilized a little over $1-\alpha$ with $10$ or more outer repetitions. Note the slight exception of the risk $\PE$ for linear or logistic regression; which may be explained by the highly volatile performance of linear regression on small samples of some DGPs included in the study. The median CI widths, while stable across outer repetitions, were noticeably higher for only $5$ inner repetitions, indicating that choosing a higher number of inner repetitions may be worthwhile, at least on small data.

The \NHoldout{} method showed exactly what was to be expected based on the remarks of \Cref{rem:HoldoutPerformance}  - the best, and solid, coverage is achieved when choosing a $90$-$10$ split, for which the average undercoverage is around $4\%$. However, the median CI width for this calibration is a little higher than for lower split ratios, and distinctly higher than that of the \NBayle{} and \NCorrectedT{} method.

The \NBayle{} method provided consistently small CIs across different numbers of CV folds and a generally good, but too liberal, average coverage. However, the average coverage for decision trees noticeably decreased with an increase in the number of folds, with the lowest coverage being exhibited for Leave-One-Out CV, see \Cref{fig:SecondRound}.

\begin{rem}[\NBayle{} and Decision Trees]
    Given that, for this benchmark study, we fit all models using default specification across all DGPs, it is not implausible that the lack of tuning resulted in decision trees more sensitive to the inclusion or exclusion of single data points in a training set. 
    Combining the formal assumptions from \cite{Bayle} and the theoretical results of \cite{arsov2019stabilitydecisiontreeslogistic}, the decrease in coverage with an increasing number of folds ($K$) specifically for decision trees in our study is not necessarily unexpected. Nevertheless, we believe that the distinctive behavior of the \NBayle{} method for decision trees warrants further investigation in future research.
\end{rem}

\noindent For the \NCorrectedT{} method, \Cref{fig:ThirdRound} immediately visualizes that a Subsampling-ratio of $0.9$ is the only sensible choice, which is justifiable for the same reasons as outlined for the \NHoldout{} method in \Cref{rem:HoldoutPerformance}. While the average coverage is excellent across numbers of repetitions for ratio $0.9$, we would suggest using at least $25$ repetitions, given that the median CI width is noticeably larger for smaller choices.\\

At this point, we would like to note that while ``$25$ repetitions'' refers to the total number of required repetitions for the \NCorrectedT{} method, the total number of repetitions is much higher for the mentioned specifications of Nested CV and \NConservativeZ{}, namely $RK^2$ and $(2R+1)K$, respectively (see the \emph{Cost} column of \Cref{tab:CImethods}). Meanwhile, the total number of resamples equals $K$, i.e. often $5$ or $10$, for the \NBayle{} and $1$ for the \NHoldout{} method, making them significantly less costly to compute. However, as we have demonstrated in the preceding analyses, the first three methods generally outperform the latter two. Specifically, the \NHoldout{} method produces considerably wider CIs and \NBayle{} fails when used in combination with a decision tree.
\begin{figure}[h!]
    \centering
    \includegraphics[width=\linewidth]{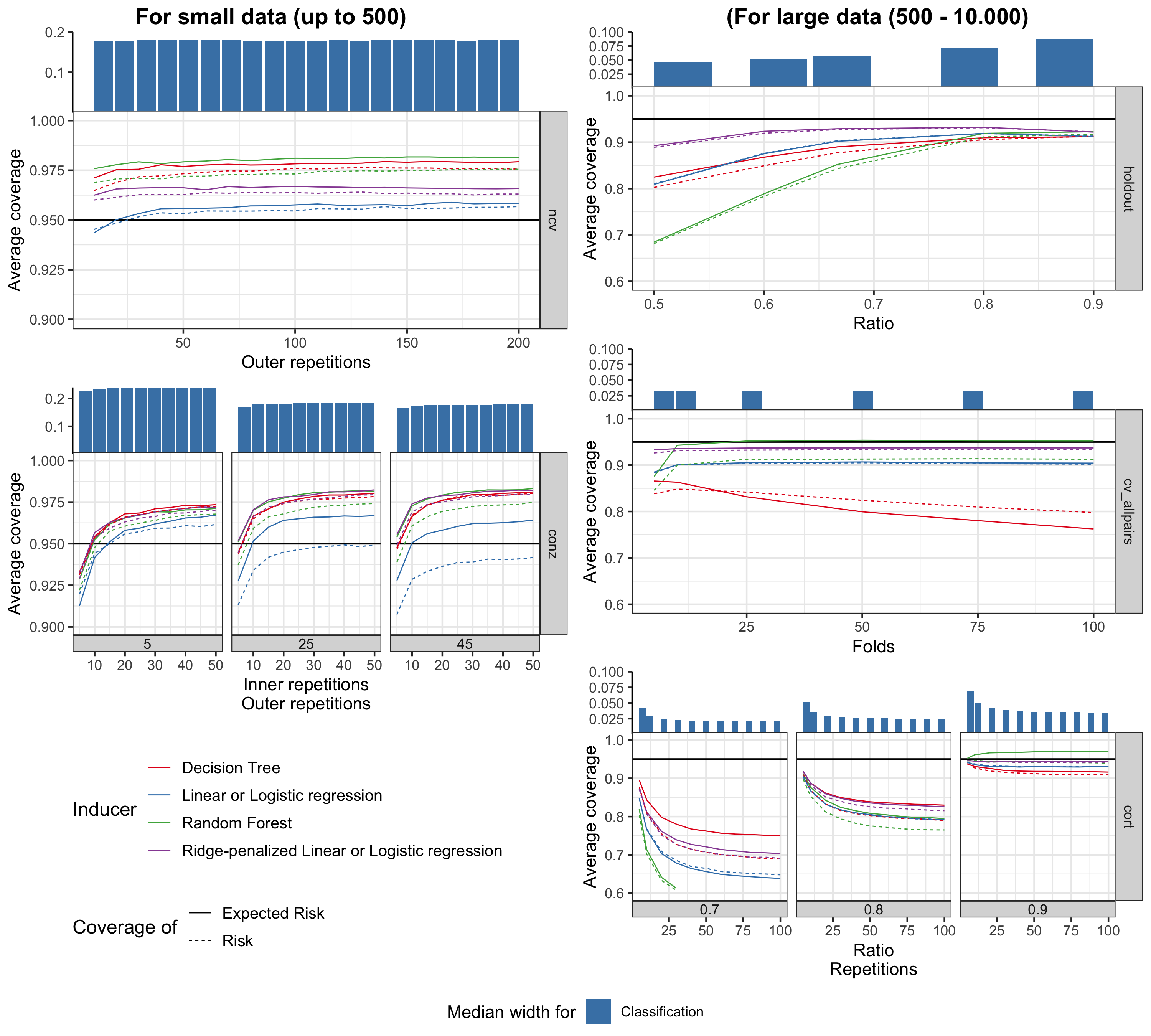}
    \caption{Average Coverage and median CI width (based only on classification tasks) for different configurations of the best-performing methods.}
    \label{fig:ThirdRound}
\end{figure}

\subsubsection{Best performing CI methods: \NCorrectedT{}, \NConservativeZ{}, and Nested CV\label{subsec:TopPerformers}}
Based on our previous analysis, the \NCorrectedT{} method with $25$ repetitions performed best on large data.
Given the excellent performance of the \NConservativeZ{} and Nested CV methods on small data, we decided to explore the option of drastically reducing their iterations and, thereby, the overall computational cost, and apply these two methods to large data as well. 
This is obviously relevant for realistic, modern experiments where data is often larger and inducers (including tuning) are often computationally costly. 

Additionally, as described in \Cref{subchap:exp_config}, we investigated the performance of these three CI methods for XGBoost and an MLP as well as on high-dimensional data (where all three performed very well, see \Cref{app:highdim}).

The results are visualized in \Cref{fig:FourthRound} and \Cref{fig:Highdim}, respectively.

\begin{note}
    Specifically, we reduced the overall number of repetitions to $105$ for the \NConservativeZ{} method by choosing $R=10$ and $K=5$; and to $75$ for Nested CV by choosing $R=3$ and $K=5$.
    Importantly, this does not necessarily imply that \NConservativeZ{} is more expensive than \NBates{} because the train and test sets of the underlying resampling methods have different sizes.
    In fact, the runtime for the random forest, which is shown in \Cref{app:runtime_est}, is relatively similar for the two methods. Generally, \NBates{} is slightly cheaper than \NConservativeZ{} for smaller datasets, while for large values the opposite is the case.
\end{note}

\begin{figure}[h!]
    \centering
    \begin{subfigure}[b]{\textwidth}
    \includegraphics[width=\linewidth]{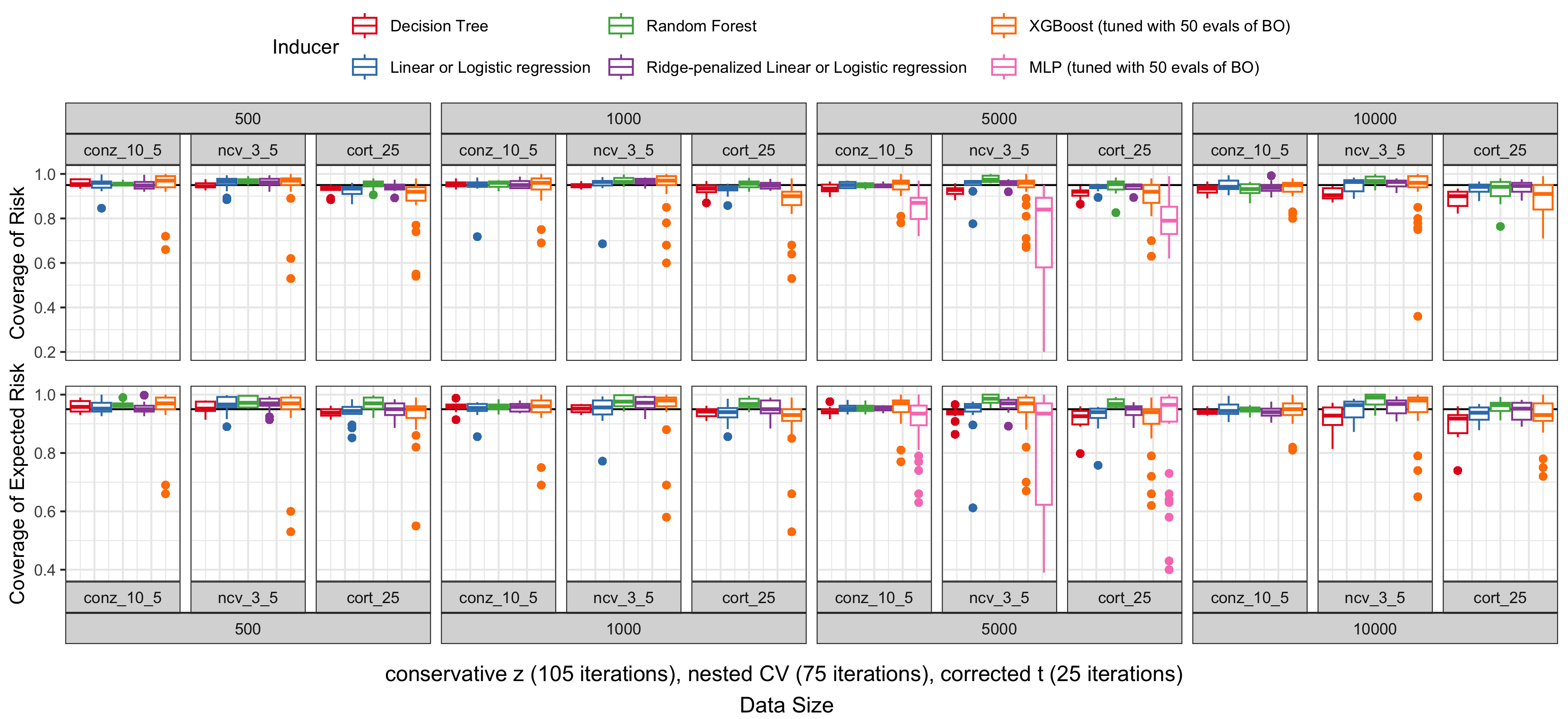}
    \caption{Comparison of coverage for $\PE$ (top row) and $\ePE$ (bottom), averaged over DGPs.}\label{fig:FourthRounda}
    \end{subfigure}
    \begin{subfigure}[b]{\textwidth}
    \includegraphics[width=\linewidth]{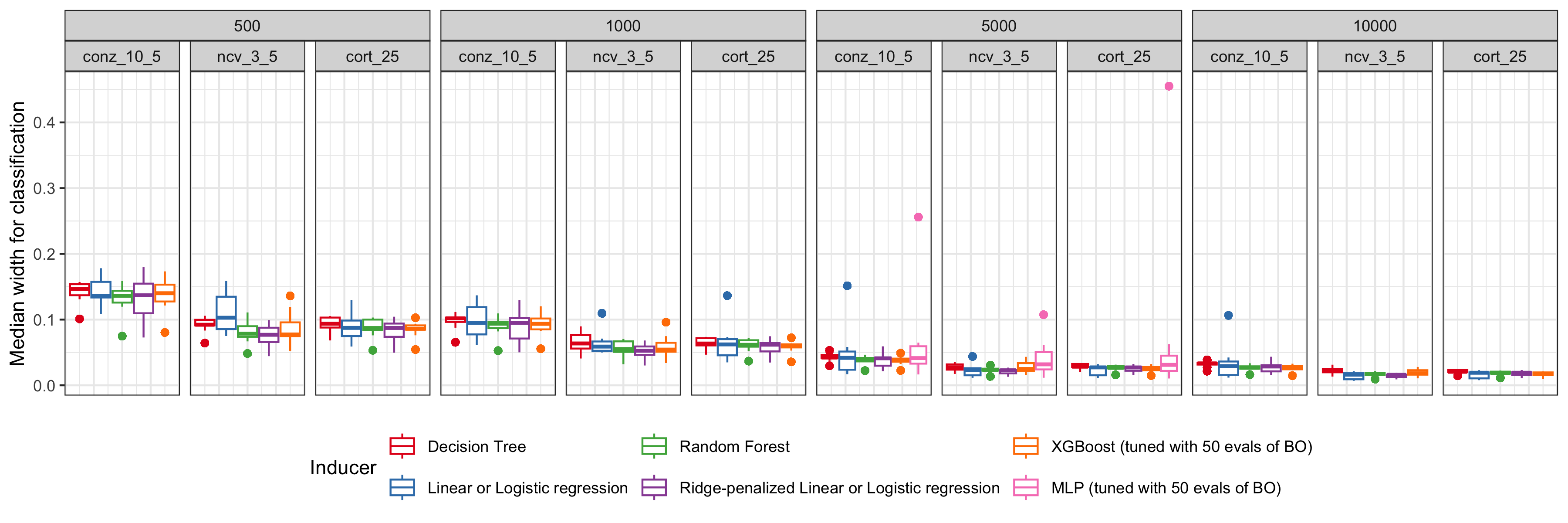}
    \caption{Comparison of median CI widths for Classification, averaged over DGPs.}  \label{fig:FourthRoundb}
    \end{subfigure}
    \caption{Comparison of the three best performing CI for GE methods: \NCorrectedT{}, \NConservativeZ{}, and \NBates{}. Due to computational cost, the MLP was only fit on data of size $5000$.}
    \label{fig:FourthRound}
\end{figure}

\noindent 

\noindent \Cref{fig:FourthRound} demonstrates that all three methods provide, apart from very few outliers, consistent coverage of both expected risk and risk for the inducers considered in the first stage. 
They also maintain stable coverage on high-dimensional data (see \Cref{app:highdim}). 
However, \NBates{} performs notably worse for the more complex, tuned inducers of the second stage, particularly the MLP. 
This may likely be attributed to the fact that the estimate was somewhat unstable at $3$ outer repetitions. Specifically, we observed that the \NBates{} standard error estimates of $ncv\_3\_5$ were often the result of the correction defined by \Cref{eq:BatesSE} instead of being calculated as $\sqrt{(\iK - 1)/\iK\cdot\widehat{\text{MSE}}_{\iK - 1}}$. 
This was especially true for the MLP, where almost all standard error estimates were the result of said correction.\\
Given that \NBates{} with $3$ outer repetitions is already about as computationally costly as \NConservativeZ{} with correspondingly reduced parameters and more costly than the \NCorrectedT{} method with $25$ repetitions, we did not further investigate whether a slight increase in outer repetitions would have improved this phenomenon. However, this might be of interest in future work.  \\

Between \NCorrectedT{} and \NConservativeZ{}, the latter exhibits marginally better average coverage, although this advantage comes with a trade-off indicated by the method's name: its CIs are somewhat wider.
Additionally, it should be noted that even though we do not consider non-decomposable measures such as AUC or F1 score in the current work, \NConservativeZ{} and \NCorrectedT{} could theoretically be applied in such a setting, while Nested CV could not.

Overall, the choice between these methods should depend on the specific needs of a given analysis, especially the balance between computational efficiency and the desired precision of the CIs.


\begin{rem}[Empirical coverage of risk and expected risk]\label{rem:EmpCov}
Within our benchmark study, all CI methods performed very similarly with regard to $\PE$ and $\ePE$, even though a slight difference between the respective coverages was noticeable at times. In line with the argumentation of \Cref{subchap:TargetOFInference}, an exemplary investigation, see \Cref{app:point_estimates}, of the relationship between different point estimates and (expected) risk indicates that whether $\PE$ or $\ePE$ is estimated more precisely in any given setting depends more on the learner (and DGP) than the specific resampling-based method used.

Given our empirical results, we believe that the recommended methods may be reliably applied to perform inference on both the risk and expected risk, although further investigation into the differences between these two targets would certainly be valuable.
\end{rem}

\subsubsection{Additional analyses}

In addition to the already presented analyses, \Cref{app:ablation} provides additional insights into the influence of the parameters on the coverage, width, and stability of the best-performing methods.


Furthermore, inference methods can sporadically produce extremely large CI widths in the presence of outliers in data $\D$, which is shown in \Cref{app:width_outliers}.
This is primarily an issue for regression problems.
As for the previously mentioned problematic DGPs, see \Cref{app:dgp_outliers}, we see that more robust loss functions can mitigate this issue.

Lastly, \Cref{app:point_estimates} provides an exemplary, granular comparison of point estimates from the recommended CI methods, as well as cv\_5 and ho\_90, with the ``true'' $\PE$ and $\ePE$ for two classification and two regression datasets.

\section{Conclusion\label{chap:Summary}}
In this work, we gave a comprehensive overview of the thirteen most common methods for computing CIs for the GE along with the theoretical foundations underlying their construction. We then compared these methods in the largest benchmark study on this topic to date. \\

\noindent Based on our empirical results, we recommend the following methods:
\begin{itemize}
    \item For small data (up to $n=100$):
\begin{itemize}
    \item \NBates{} with at least $25$ outer repetitions and $K=5$, or
    \item \NConservativeZ{} with $25$ outer repetitions and at least $K=10$
\end{itemize}
\item For larger data:
\begin{itemize}
    \item \NCorrectedT{} with a ratio of $0.9$ and at least $25$ repetitions,
    or
    \item \NConservativeZ{} with $10$ outer repetitions and $K=5$ for slightly wider CIs with very slightly more accurate coverage.
\end{itemize}
\end{itemize}
We also implemented these recommendations in the R package \texttt{mlr3inferr}\footnote{\url{https://github.com/mlr-org/mlr3inferr}}.

\paragraph{Limitations} While our benchmark study was quite extensive, there were some things we could not explicitly investigate. 
For the DGPs, this work is limited to observations where the i.i.d. assumption is reasonable and we only investigated data of smaller to medium size (up to $n=10.000$). 
On the other hand, for larger data sizes under the i.i.d. assumption, simple holdout CI estimation becomes increasingly plausible. 

We only studied binary classification and regression and only included a very limited analysis of hyperparameter tuning and its effects on CI estimation for the GE (see the discussion in \Cref{subchap:exp_config}).



\paragraph{Future Work} While our study could identify methods for the model-agnostic construction CIs for the GE that generally perform well, it also became apparent that all available methods fail in some scenarios. 
A more in-depth analysis and theoretical understanding of the limitations of the well-performing methods, especially, would certainly be beneficial. 
Additionally, even the best-performing methods tend to perform worse for more stochastic fitting procedures (where stochasticity was often a result of integrated tuning with a limited budget, e.g. BO with only 50 iterations in our case). 
A further theoretical and empirical analysis of that aspect could provide valuable insights to the ML community, where tuning is especially important. 
This might also have implications for the construction of novel CI methods, which, ideally, 
could better handle randomness in models or specifically target nested, computational workflow when tuning is combined with model fitting. 
One currently has to concede that the potential randomness of an inducer is not specifically considered in the construction of any CI methods for the GE, see also \Cref{{subchap:UncertaintySources}}.
Also, more empirical results for CIs in the context of tuning would be welcome. \\

Finally, we hope that this benchmark study has made a pivotal contribution towards providing a well-founded framework for evaluating new methods for computing CIs for the GE, which will undoubtedly be proposed in the future.

\impact{\paragraph{Importance of Empirical Evaluations}
CI methods and their respective implementations should not only be analyzed mathematically but also through proper empirical evaluation.
This is, because their performance is strongly influenced by the learning algorithm, the data generating process, and their specific configuration, all of which are often neglected in formal analysis.
For this reason, extensive empirical investigations are a necessary step to identify which methods work under which conditions.
By providing a comprehensive benchmark suite and conducting an extensive empirical investigation, we hope to emphasize that such wide-reaching comparisons are an important aspect in this area of research and should accompany theoretical progress, something that is already a standard in other areas of machine learning.

\paragraph{Improved Decision Making} CIs help quantify the uncertainty in risk estimates, thereby offering a more nuanced understanding of a model's capabilities.
By conducting this neutral comparison study and thereby identifying a selection of well-performing methods, we hope that stakeholders can improve their data-driven decision making by taking the uncertainty of risk estimates into account.

\paragraph{Generalizability}
Due to the exploratory nature of our research, we did not perform confirmatory analyses.
Furthermore, even though we considered hyperparameter tuning of both deep neural networks and boosting algorithms, the investigation was limited in scope due to its computational burden.
A possible risk of this work is therefore that our results might be generalized to such situations for which it is not justified based on our experiments.
We still hope that by narrowing down the set of applicable methods, we pave the way to more focused comparison studies in the future that go beyond those limitations.}


\acks{
The authors of this work take full responsibility for its content. Hannah Schulz-Kümpel is supported by the DAAD programme Konrad Zuse Schools of Excellence in Artificial Intelligence, sponsored by the Federal Ministry of Education and Research, and was partially supported by DFG grants BO3139/7 and BO3139/9-1 to ALB. Sebastian Fischer is supported by the Deutsche Forschungsgemeinschaft (DFG, German Research
Foundation) – 460135501 (NFDI project MaRDI). Roman Hornung is supported by DFG grant HO6422/1-3. With the exception of T. Nagler, all authors were additionally supported by the Federal Statistical Office of Germany within the cooperation “Machine Learning in Official Statistics”. This work has been carried out by making use of Wyoming’s Advanced Research Computing Center, on its Beartooth Compute Environment (\url{https://doi.org/10.15786/M2FY47}). The authors gratefully acknowledge the computational and data resources provided by Wyoming’s Advanced Research Computing Center
(\url{https://www.uwyo.edu/arcc/}).}
We would also like to acknowledge the use of the Derecho system (\url{https://doi.org/10.5065/qx9a-pg09}) supported by the NSF National Center for Atmospheric Research (NCAR) at the NSF NCAR-Wyoming Supercomputing Center, sponsored by the National Science Foundation and the State of Wyoming.

\vskip 0.2in
\bibliography{main}

\appendix
\counterwithin{figure}{section}
\counterwithin{table}{section}
\addtocontents{toc}{\protect\setcounter{tocdepth}{0}}
\section{Missing constant in the Austern and Zhou Variance Estimator}\label{app:az_experiment}
\begin{restatable}{lem}{quantileSElem}\label{quantileSElem} Consider a random variable $\bm{Z}\sim\mu_Z$ and let $q_\alpha(\mu_Z)$, $\alpha\in(0,1)$, denote the $\alpha$-quantile of the %
symmetric probability distribution $\mu_Z$ of $\bm{Z}$,
If, for some sequence of values in $\R$ $(\s_n)_{n\in\N}$ it holds that\begin{enumerate}
    \item \begin{equation}\label{quantileSElem1}
        \frac{1}{\s_n}\big(\widehat{\Theta}_{0,n}(\ranDn)-\Theta_{0,n}(\ranDn,\Pxy)\big)\overset{d}{\longrightarrow}\bm{Z}
    \end{equation}
    \item and, for some sequence of variance estimators $\big(\hat{\s}_n:\overset{n}{\underset{i=1}{\times}}(\Xspace\times\Yspace\big) \longrightarrow \R)_{n\in\N}$, with $\hat{\s}_n(\ranDn)=:\hat{\s}(\ranDn)$,
    \begin{equation}\label{quantileSElem2}
        \frac{\hat{\s}(\ranDn)}{\s_n}\overset{p}{\longrightarrow} 1\,.
    \end{equation}
\end{enumerate}
then the interval $[\widehat{\Theta}_0(\D_n)\pm q_{1-\frac{\alpha}{2}}(\mu_Z)
\hat{\s}(\D_n)]$ is an asymptotically exact coverage interval for $\Theta_0(\ranDn,\Pxy)$; with $\D_n$ denoting any realization of $\ranDn$.
\end{restatable}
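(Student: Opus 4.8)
The plan is to reduce the coverage statement to a direct application of Slutsky's theorem combined with the continuous mapping theorem. First I would introduce the studentized statistic $T_n := \s_n^{-1}\big(\widehat{\Theta}_{0,n}(\ranDn)-\Theta_{0,n}(\ranDn,\Pxy)\big)$ and the scale ratio $R_n := \hat{\s}(\ranDn)/\s_n$, so that hypothesis (\ref{quantileSElem1}) reads $T_n \overset{d}{\longrightarrow} \bm{Z}$ and hypothesis (\ref{quantileSElem2}) reads $R_n \overset{p}{\longrightarrow} 1$. Writing $q := q_{1-\frac{\alpha}{2}}(\mu_Z)$ and using that $\s_n>0$ for all large $n$ (the intended setting, since $\hat{\s}$ is a nonnegative scale estimate), the coverage event $\{L_n^{(\alpha)}(\ranDn)\le \Theta_{0,n}(\ranDn,\Pxy)\le U_n^{(\alpha)}(\ranDn)\}$ from \Cref{def:AsymptoticalExactness}, with $L_n^{(\alpha)}(\ranDn)=\widehat{\Theta}_{0,n}(\ranDn)-q\,\hat{\s}(\ranDn)$ and $U_n^{(\alpha)}(\ranDn)=\widehat{\Theta}_{0,n}(\ranDn)+q\,\hat{\s}(\ranDn)$, can be rearranged and divided through by $\s_n$ into the equivalent event $\{-q R_n \le T_n \le q R_n\}$.

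Next I would use the standard fact that convergence in distribution of $T_n$ to $\bm{Z}$ together with convergence in probability of $R_n$ to the constant $1$ delivers the joint convergence $(T_n,R_n)\overset{d}{\longrightarrow}(\bm{Z},1)$. I would then apply the continuous mapping theorem to the indicator map $g(t,r)=\mathbf{1}\{-qr\le t\le qr\}$, whose set of discontinuities is contained in $\{t=qr\}\cup\{t=-qr\}$. Under the limit law, this set corresponds to $\{\bm{Z}=q\}\cup\{\bm{Z}=-q\}$, which carries $\mu_Z$-probability zero whenever $\mu_Z$ has no atoms at $\pm q$; hence $g$ is almost surely continuous at $(\bm{Z},1)$ and $g(T_n,R_n)\overset{d}{\longrightarrow} g(\bm{Z},1)$. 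Since these random variables are $\{0,1\}$-valued, convergence in distribution is the same as convergence of the probability of taking the value $1$, so $\P(-qR_n\le T_n\le qR_n)\longrightarrow \P(-q\le \bm{Z}\le q)$.

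Finally I would evaluate the limit using the symmetry of $\mu_Z$: since $q=q_{1-\frac{\alpha}{2}}(\mu_Z)$ and, by symmetry, $-q=q_{\frac{\alpha}{2}}(\mu_Z)$, atomlessness at these quantiles yields $\P(\bm{Z}\le q)=1-\frac{\alpha}{2}$ and $\P(\bm{Z}<-q)=\frac{\alpha}{2}$, whence $\P(-q\le \bm{Z}\le q)=1-\alpha$. Chaining this with the previous display shows that the coverage probability converges to $1-\alpha$, which is precisely the defining property of an asymptotically exact coverage interval in \Cref{def:AsymptoticalExactness}; the stated interval is then recovered by evaluating the random bounds at the realization $\D_n$.

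The only genuine subtleties — and where I would be most careful — are the boundary and atomlessness issues: the argument produces exactly $1-\alpha$, rather than merely a one-sided bound, only because $\mu_Z$ assigns no mass to the quantile points $\pm q$. I would therefore either record continuity of $\mu_Z$ at its $\frac{\alpha}{2}$- and $(1-\frac{\alpha}{2})$-quantiles as a mild standing assumption, or note that it is automatic in the intended applications, where $\mu_Z$ is the standard normal law. A secondary point that must be handled cleanly is the passage from the marginal convergences of $T_n$ and $R_n$ to their joint convergence, and the correspondingly careful invocation of the \emph{almost-sure-continuity} version of the continuous mapping theorem for the discontinuous indicator test function; this is routine but should not be conflated with the naive (everywhere-continuous) form of the theorem.
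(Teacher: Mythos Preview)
Your proposal is correct and takes essentially the same approach as the paper, which disposes of the lemma in one line by remarking that, under (\ref{quantileSElem2}), $\s_n$ may be replaced by $\hat{\s}(\ranDn)$ in (\ref{quantileSElem1}) --- i.e., Slutsky's theorem. Your version is simply a more explicit unpacking of that substitution, and your attention to the atomlessness of $\mu_Z$ at $\pm q$ (needed to get exactly $1-\alpha$ rather than an inequality) is a genuine refinement that the paper leaves implicit.
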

\begin{proof}
\noindent This follows immediately from the fact that if \Cref{quantileSElem2} holds, $\s_n$ may be replaced with $\hat{\s}(\ranDn)$ in \Cref{quantileSElem1}.
\end{proof}

The following now provides our reasoning behind the argument of
\remAZadj*

\noindent In \cite[Thm. 3, Eq. (22)]{austern2020asymptotics}, statement 1. of \Cref{quantileSElem} is proven for $\bm{Z}\sim\mathcal{N}(0,1)$, \begin{align*}
\TQ{\ranDn}=&\,\ePE(n-\frac{n}{\iK})\,,\\
   \Est{\ranDn}=&\,\frac{1}{n} \sumk \sumpl{\ik}\,,
\end{align*} and $\sqrt{n}\cdot\s_n$ 
equal to $\AZsd$ from the notation of \cite{austern2020asymptotics}. Subsequently, statement 2 from \Cref{quantileSElem} is shown by \cite[Prop. 2]{austern2020asymptotics} for $\sqrt{n}\cdot\hat{s}_n$ 
equal to the following quantity
\begin{equation}
   \AZSE^2= \frac{n}{2} \sum_{\il = 1}^{n/2} \Big( \Est{\AZdtrain} - \Est{\widetilde{\ranD}^{\floor{\frac{n}{2}}}_{train[l]}} \Big)^2\,,
\end{equation}
with $\AZdtrain$ equal to the first $\floor{\frac{n}{2}}$ elements of $\ranDn$ and $\widetilde{\ranD}^{\floor{\frac{n}{2}}}_{train[l]}$ denotes $\AZdtrain$ with the $l$th element replaced with the $(\floor{\frac{n}{2}}+l)$th element of $\ranDn$.\\

Given that \Cref{quantileSElem1} is proven under the conditions of \cite[Thm. 3]{austern2020asymptotics}, it immediately follows that\begin{align*}
\underset{n\rightarrow\infty}{\lim}&    n\Var\Big(\Est{\ranDn}\Big)\longrightarrow\AZsd^2\\
\Longrightarrow \underset{n\rightarrow\infty}{\lim}&    \frac{n}{2}\Var\Big(\Est{\AZdtrain}\Big)\longrightarrow\AZsd^2\,, \tag{$\star$}
\end{align*}
given that $\ranDn$ has size $\floor{\frac{n}{2}}$.
Furthermore, given \cite[Prop. 2]{austern2020asymptotics} it should hold that \begin{align*}
\Vert \AZSE^2 -\AZsd^2 \Vert_{L_1}\overset{p}{\longrightarrow}&\; 0 \tag{$\star\star$}\\
\Longrightarrow \big(\E[\AZSE^2] - \AZsd^2\big) \longrightarrow&\; 0 \,,
\end{align*}
which, given that $\AZsd^2$ is a constant, also implies \begin{equation*}
\frac{n^{-\frac{1}{2}}\AZSE^2}{n^{-\frac{1}{2}}\AZsd^2}\;=\;   \frac{\hat{\s}(\ranDn)}{\s_n}\;\overset{p}{\longrightarrow} 1\,. 
\end{equation*}

However, we also have that \begin{align*}
\E[\AZSE^2]=&\E\bigg[\frac{n}{2} \sum_{\il = 1}^{n/2} \Big( \Est{\AZdtrain} - \Est{\widetilde{\ranD}^{\floor{\frac{n}{2}}}_{train[l]}} \Big)^2\bigg]\\
=&n\cdot\E\bigg[\frac{1}{2} \sum_{\il = 1}^{n/2} \Big( \Est{\AZdtrain} - \Est{\widetilde{\ranD}^{\floor{\frac{n}{2}}}_{train[l]}} \Big)^2\bigg]\\
&\text{and by the Efron-Stein inequality:}\quad\quad &\hspace*{-1cm}\geq n\cdot \Var\big(\Est{\AZdtrain}\big)\overset{\text{by ($\star$)}}{\xrightarrow{\hspace*{1cm}}} 2\AZsd\,,
\end{align*}
which leads us to believe that the statement of ($\star\star$) should actually be $\Vert \frac{1}{2}\AZSE^2 -\AZsd^2 \Vert_{L_1}\overset{p}{\longrightarrow} 0$. This, in turn, would result in the standard error being scaled by the factor $\frac{1}{\sqrt{2}}$.

\vspace{2\baselineskip}

\subsection{Empirical Evaluation of Austern and Zhou Variance Estimator}\label{app:az_empirical}

Besides the main benchmark study, we conducted a separate experiment to compare the variance estimator from the ROCV inference method \citep{austern2020asymptotics} with the true variance of the CV point estimate.
We used a linear regression model on datasets simulated from a distribution following the relationship below:

\begin{align*}
& Y_i = X_i + \eps_i, \quad \text{where } i = 1, \ldots, n, \quad X_i \overset{i.i.d.}{\sim} \mathcal{N}(0, 1), \quad \eps_i \overset{i.i.d.}{\sim} \mathcal{N}(0, 0.04)
\end{align*}

The experiment was repeated for different choices of $n \in \{500, 1000, 5000, 10000\}$.
To estimate the true variance of the (5-fold) cross-validation, we obtained 100 cross-validation point estimates on independently simulated datasets.
The variance estimation for the ROCV method was conducted 10 times.
Figure \ref{fig:AzRatio} shows that the ratio of the estimated standard deviation to the true standard deviation of the cross-validation point estimate is close to $\sqrt{2}$, thereby empirically supporting the theoretical argument from above.

\begin{figure}[h!]
  \centering
  \includegraphics[width=0.6\textwidth]{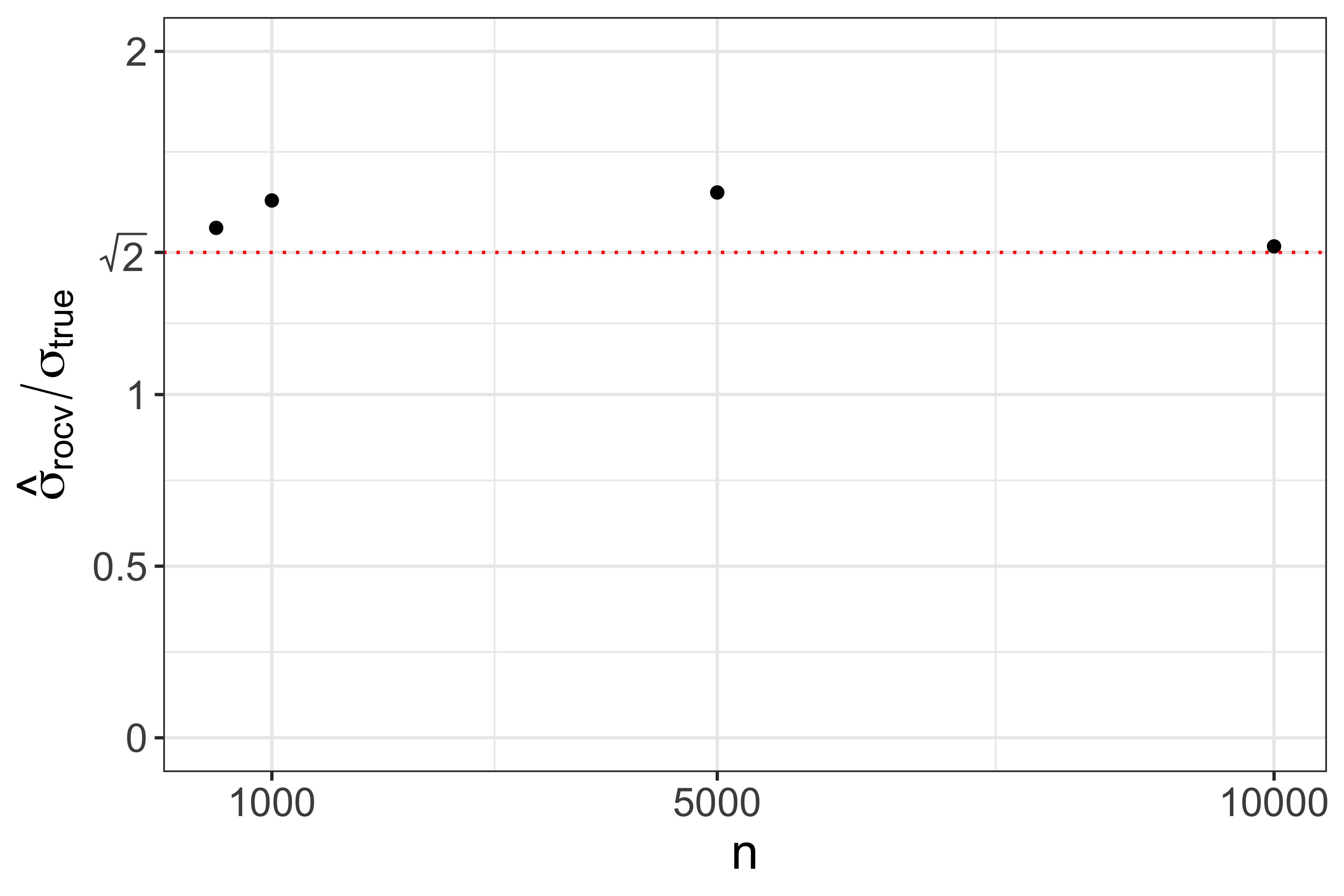}
  \caption{Ratio of the \NAustern{} variance estimator to the approximated true variance of 5-fold cross-validation.}
  \label{fig:AzRatio}
\end{figure}

\section{Benchmark Datasets}
\label{app:datasets}

Here, we give a detailed description of the datasets that were used in the benchmark experiments and that were presented in \Cref{tab:Datasets}.
Due to the requirements of having i.i.d. data and a large number of observations to allow for disjunct subsets, we were unable to find many suitable datasets. For this reason, all but the \textit{Higgs} (11 million observations) dataset were created by us and have 5.1 million rows (except for the \emph{highdim} data).  The collection of datasets that were used is available as a Benchmarking Suite on OpenML \citep{bischl2021openml} and can be accessed via this link: \url{https://www.openml.org/search?type=study&study_type=task&id=441}.

Accessing these datasets is, e.g., possible via the OpenML website or one of the client libraries in python \citep{feurer2021openml} or R \citep{mlr3oml}. Due to the large size of the datasets, we recommend working with the parquet files.

We now continue to describe the process used to generate the datasets in detail.
The code for reproducing this can be found in the \texttt{datamodels} directory of the accompanying GitHub repository.

\subsection{Category: \textit{physics-simulation}}

The \textit{Higgs} dataset \citep{baldi2014searching} is a classification problem where the task is to distinguish between a process that produces Higgs bosons and one that does not. It was created using Monte Carlo simulations.

\subsection{Category: \textit{artificial}}

The datasets from the \textit{artificial} category were simplistically generated using distributions that are not directly related to any real-world dataset or phenomenon.

Four datasets were simulated according to the procedure described by~\cite{bates2024cross}, the regression variant of which is defined by the following definition:

\begin{align*}
& Y_i = (X_{i, 1}, \ldots, X_{i, p})^\top \theta + \epsilon_i, \quad i = 1, \ldots, n,\; j = 1, \ldots,  p,  \quad \text{where} \\
& \epsilon_i \overset{i.i.d.}{\sim} \mathcal{N}(0, 1), \quad X_{i, j} \overset{i.i.d.}{\sim} N(0, 1), \quad  \theta \in \mathbb{R}^p \\
\end{align*}

\noindent Using the same definitions for $X_{i, j}$ and $\theta$, the classification variant was simulated using a logistic distribution:

\begin{align*}
& P(Y_i = 1 \vert X_{i, 1}, \ldots, X_{i, p}) = \big(1 + \exp\big(-(X_{i, 1}, \ldots, X_{i, p})^\top \theta \big) \big)^{-1} \quad
\end{align*}

\noindent For the parameter vector $\theta$ we used the values $(1, 1, 1, 1, 1, 0, \ldots, 0) \in \R^{20}$ and $(1, 1, 1, 1, 1,\allowbreak 0, \ldots, 0) \in \R^{100}$, resulting in a total of four datasets.

Further, we simulated a non-linear artificial dataset using the definition provided by \cite{friedman1991multivariate}, which includes five features with no information on the target variable:

\begin{align*}
& Y_i = 10 \sin(\pi X_{i,1} X_{i,2}) + 20(X_{i,3} - 0.5)^2 + 10X_{i,4} + 5X_{i,5} + \epsilon_i, \quad i = 1, \ldots, n, \quad j = 1, \ldots, 10\\
& \text{where} \quad \epsilon_i \sim \mathcal{N}(0, 1), \quad X_{i,j} \overset{i.i.d.}{\sim} \mathcal{U}(0, 1). \\
\end{align*}

The third class of artificial data generating processes was taken from ~\cite{degenhardt2019evaluation}, but originated in ~\cite{chen2013integrative}, and is defined by the following random variables:

\begin{align*}
& Y_i = 0.25 \exp(4 X_{i,1}) + 4 \big(1 + \exp(-20 (X_{i,2} - 0.5))\big)^{-1} + 3 X_{i,3} + \epsilon_i, \quad i = 1, \ldots, n \quad \text{where}\\
& X_{i,1}, \ldots, X_{i,6} \overset{i.i.d.}{\sim} \mathcal{U}(0, 1), \quad \epsilon_i \sim \mathcal{N}(0, 0.04) \\
\end{align*}

Note that the $Y_i$ depends only on $X_{i, 1}, X_{i, 2} \text{ and } X_{i, 3}$. Further, the features on the dataset are not the $X_{i, j}$, but a transformation thereof:

\begin{align*}
& V_{i, l, j} = X_{i, l} + \left(0.01 + \frac{0.5(j-1)}{10}\right)  Z_{i, l, j}, \quad \text{where} \\
& Z_{i, l, j} \overset{i.i.d.}{\sim} \mathcal{N}(0, 0.09) \quad \text{and } l = 1, \ldots, 6, \quad j = 1, \ldots, 10 \\
\end{align*}

\noindent The dataset that is simulated this way is referred to as the \textit{chen\_10} dataset and has 60 features.
The benchmarking suite also contains another version, the \textit{chen\_10\_null} dataset, where the features $V_{i, l, j}$ are generated just like above, but the target variable is obtained using $\tilde{X}_{i, j} \sim \mathcal{N}(0, 0.2)$ that are independent of the $X_{i, j}$ - i.e. without any causal relationship between the features and the target variable. \\

\noindent The high-dimensional datasets were simulated using the \texttt{pensim} R package \citep{waldron2011optimized}.
The predictors $X_{i,j} \sim \mathcal{N}(0, 1)$  are generated for  $i = 1, \ldots, n ,  j = 1, \ldots, p$. The features are grouped into $25$ blocks where there is a correlation $\rho = 0.8$ within a block, but no correlation between blocks. The size of the blocks is set to $p / 25$.
Different values for $p$ are considered, namely $p = 100 \times 2^{\{0, 1, \ldots, 6\}}$.

\noindent With  $\theta_j$ being defined as:

$$
\theta_j =
\begin{cases}
1, & \text{if } j \text{ is the first variable in group } k, \\
0, & \text{otherwise},
\end{cases}
$$

the probability of success is then defined as

$$P(Y_i = 1 | X_{i, 1}, \ldots, X_{i, p}) = \big(1 + \exp(-(\sum_{j = 1}^p \theta_j \times X_{i, j}))\big)^{-1}$$.

\subsection{Category: \textit{cov-estimate}}

In the \textit{cov-estimate} category, the \textit{colon}, \textit{breast}, and \textit{prostate} datasets are simulated using a logistic relationship, following \cite{janitza2018overestimation}:

\begin{align*}
& P(Y_i = 1 \vert X_{i, 1}, \ldots, X_{i, p}) = \big(1 + \exp\big(-(X_{i, 1}, \ldots, X_{i, p})^\top \theta \big) \big)^{-1}, \quad \text{where}\\
&  (X_{i, 1}, \ldots, X_{i, p}) \overset{i.i.d}{\sim} \mathcal{N}(\mathbf{0}, \hat{\Sigma}), \quad \textbf{0} \in \R^p, \quad \hat{\Sigma} \in \R^{p \times p}, \quad \theta \in \R^p, \quad \text{and } i = 1, \ldots, n \\
\end{align*}

\noindent The  covariance matrix $\hat{\Sigma}$ of the multivariate normal distribution is estimated from three different medical real world datasets.
The datasets were retrieved from \cite{janitza2018overestimation}, but are also included in the \texttt{data} folder of the accompanying GitHub repository.
For all three, the parameter vector $\Theta(\omega) = \theta \in \R^p$ for the features $(X_{i, 1}, \ldots, X_{i, p})$ is generated as follows:

\begin{align*}
& \Theta = \big(Z_1 / \sqrt{\text{var}(X_{1, 1})}, \ldots, Z_p / \sqrt{\text{var}(X_{1, p})}\big), \quad Z_j \overset{i.i.d.}{\sim} \mathcal{U}_{\text{dis}}, \quad  j = 1, \ldots, p, \quad \text{with} \\
& \mathcal{U}_{\text{dis}} \text{ denoting a discrete uniform distribution over } \{\text{-}3, \text{-}2, \text{-}1, \text{-}0.5, 0, 0.5, 1, 2, 3\} \\
\end{align*}.

\subsection{Category: \textit{density-estimate}}

The datasets in the \textit{density-estimate} category are sampled from distributions that were estimated from real-world datasets.
Table~\ref{tab:DataDensityOrig} provides an overview of the seven datasets used for density estimation, where \textit{Name} is the dataset name on OpenML, \textit{n} the number of observations, \textit{Data ID} its unique identifier on OpenML, and \textit{Reference} the associated paper if available.

\begin{table}[h!]
  \caption{Overview of datasets used for density estimation}
  \label{tab:DataDensityOrig}
  \centering
  \begin{adjustbox}{max height=1cm,width=0.8\textwidth}
  \begin{tblr}{
      colspec={X[6cm,l]X[2cm,l]X[3cm,l]X[5cm,l]},
      row{1}={font=\bfseries},
      row{even}={bg=gray!10},
      row{1}={font=\bfseries},
    }
    Name & n & Data ID & Reference \\
    \toprule
     adult & 48842 & 1590 & \cite{misc_adult_2} \\
     covertype & 566602 & 44121 & \cite{collobert2001parallel} \\
     electricity & 45312 & 151 \\
     diamonds & 53940 & 44979 \\
     physiochemical\_protein & 45730 & 44963 & \cite{misc_physicochemical_properties_of_protein_tertiary_structure_265} \\
     sgemm\_gpu\_kernel\_performance & 241600 & 44961 & \cite{nugteren2015cltune} \\
     video\_transcoding & 68784 & 44974 & \cite{misc_online_video_characteristics_and_transcoding_time_dataset_335} \\
    \bottomrule
  \end{tblr}
  \end{adjustbox}
\end{table}

\noindent To estimate the density of these datasets, we followed the methodology from \cite{borisov2023language}, which involves encoding the observations as strings, fine-tuning a large language model (LLM) for which we used GPT-2~\citep{radford2019language}, and generating new observations.
For the implementation, we also used their Python implementation.
We used a batch size of 32 for all datasets, 20 epochs for the \textit{covertype} data, 40 for the \textit{sgemm\_gpu\_kernel\_\allowbreak performance} and 200 otherwise.
The code to reproduce these datasets can also be found in the \texttt{datamodels} sub-directory of the accompanying GitHub repository.

To evaluate the quality of the density estimation, we used only 80 \% of the datasets for learning the density and reserved the remaining 20 \% for evaluation.
This was achieved by cross-evaluating the models on both real and synthetic data and by comparing the empirical distribution of the generalization error, which we now describe in more detail.
The code for the evaluation can be found in \texttt{datamodels/density-estimate/evaluation}.

\subsubsection{Cross-Evaluation}

Let data $\D^{(r)} = \D^{(r)}_{\text{train}} \overset{\cdot}{\cup} \D^{(r)}_{\text{test}}$ be the disjoint subsets of the 80 / 20 split from one of the seven real data sets from table \ref{tab:DataDensityOrig}.
Furthermore, denote with $\D^{(s)}$ a dataset that was simulated using the density that was estimated on $\D^{(r)}_{\text{train}}$ and which has the same size as $\D^{(r)}$ and that is also partitioned via an 80-20 split.
In order to evaluate the quality of the estimated density, we train random forests models using the \texttt{ranger} implementation by \cite{wright2015ranger}) on disjoint subsets of size 3000 on both $\D^{(r)}_{\text{train}}$ and $\D^{(s)}_{\text{train}}$.
We then estimate the generalization error of these models on both $\D^{(s)}_{\text{test}}$ and $\D^{(r)}_{\text{test}}$.
Table \ref{tab:DataDensityCrossEval} shows the results, aggregated over the 10 repetitions.
Here, we use $\Risk_{x} (\hat{f}_{y})$ as a short-form for $\Risk_{\ranD^{(x)}}\big(\hat{f}_{\ranD^{(y)}} \big)$ for $x, y \in \{s, r\}$.
Further, $\hat{\mu}$ denotes the mean predictor for regression and the majority predictor for classification problems.
The results show that datasets trained on the simulated data still perform well on the original data and vice-versa.

\begin{table}[h!]
  \caption{Cross-Evaluation of datasets from category \textit{density-estimate}}
  \label{tab:DataDensityCrossEval}
  \centering
  \begin{adjustbox}{max height=1cm,width=0.8\textwidth}
  \begin{tblr}{
      colspec={X[4cm,l]X[1.2cm,l]X[1.2cm,l]X[1.2cm,l]X[1cm,l]X[1cm,l]X[1cm,l]},
      row{1}={font=\bfseries},
      row{even}={bg=gray!10},
    }
     Name &  $\Risk_{r} (\hat{f}_{r})$&  $\Risk_{s} (\hat{f}_{r})$  & $\Risk_{r} (\hat{\mu}_{r})$ &  $\Risk_{s} (\hat{f}_{s})$&  $\Risk_{r} (\hat{f}_{s})$  & $\Risk_{s} (\hat{\mu}_{s})$ \\
    \toprule
      adult & 0.14 & 0.13 & 0.24 & 0.13 & 0.16 & 0.24 \\ 
      covertype & 0.22 & 0.20 & 0.50 & 0.19 & 0.24 & 0.50 \\ 
      diamonds & 0.94 & 0.96 & 1.00 & 0.96 & 0.96 & 1.00 \\ 
      electricity & 0.17 & 0.17 & 0.43 & 0.17 & 0.20 & 0.42 \\ 
      physiochemical\_protein & 4.41 & 4.39 & 6.13 & 4.26 & 4.86 & 6.08 \\ 
      sgemm\_gpu & 165.66 & 171.66 & 366.50 & 179.82 & 177.77 & 376.32 \\ 
      video\_transcoding & 6.01 & 7.57 & 16.00 & 8.06 & 7.32 & 16.38 \\ 
    \bottomrule
  \end{tblr}
  \end{adjustbox}
\end{table}

\subsubsection{Risk Distribution}

As a second evaluation measure, we empirically compared the risk distribution of a random forest on the real and simulated datasets.
To do so, we created 100 disjoint training sets of size 200 from $\D^{(s)}_{\text{train}}$ and $\D^{(r)}_{\text{train}}$ and evaluated the models on the respective test sets $\D^{(r)}_{\text{test}}$ and $\D^{(s)}_{\text{test}}$.
The results from \ref{fig:DataDensityRiskComp} show that the general shape of the risk distribution is relatively similar, although shifted. 
As we are concerned with confidence intervals for the generalization error and not its absolute value, this shift is acceptable.

\begin{figure}[h!]
  \centering
  \includegraphics[width=0.8\textwidth]{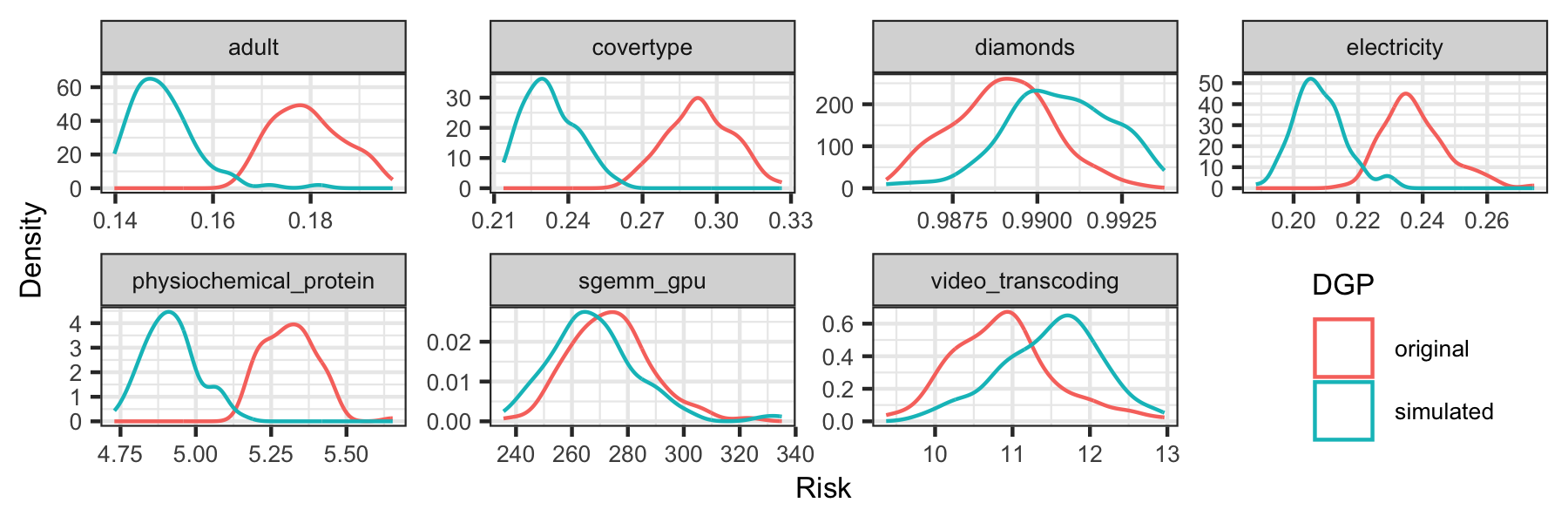}
  \caption{Empirical risk distribution of the random forest algorithm on the real and simulated datasets.}
  \label{fig:DataDensityRiskComp}
\end{figure}

\section{Experiment Details}
\label{app:exp_deets}

\subsection{Choices of loss function\label{subchap:Loss}}
As previously mentioned, squared error and $0-1$ loss, for continuous outcomes and classification, respectively, are the most commonly chosen loss functions in the context of inference for the generalization error. In fact, said choice is so common that it is rarely ever discussed or compared to other possible choices in the literature. Since the empirical study in this work is exploratory in the sense that it is not meant to confirm any prior hypothesis but observe the performance of methods and, possibly, generate new hypotheses; we decided to apply several less common loss functions in addition to the squared error and $0-1$ loss. Still fairly common, and therefore obvious, choices were \emph{Log Loss} and \emph{L1 Loss}, the latter being equal to the absolute distance for $\predIm=\R$. As a robust measure, we also used the \textit{winsorized squared error}, using the 90\% quantile as the cutoff value. Additionally,  in the interest of interpretability and comparability for continuous outcomes, we consider \emph{percentual} and \emph{standardized} loss, defined, for $\predIm=\R^2$ and some function $l:\Yspace\times\R\longrightarrow\R$, for which we used the L1 loss, measuring the distance between model prediction and observed value, by
\begin{equation}\label{eq:percLoss}
 \loss:\Yspace\times\predIm\longrightarrow\R,\quad \big(
        \ny,\pp{\D}(\nx)\big)^\top\longmapsto \frac{l(\ny,\pp{\D}(\nx))}{\vert\ny\vert}
\end{equation}
and
\begin{equation}\label{eq:standLoss}
    \loss:\Yspace\times\predIm\longrightarrow\R,\quad \big(
        \ny,\pp{\D}(\nx),\sigma\big(\{y\}_{\Dtrain}\big)\big)^\top\longmapsto \frac{l(\ny,\pp{\D}(\nx))}{\sigma\big(\{y\}_{\Dtrain}\big)}\,,
\end{equation}
respectively. Here, $\sigma\big(\{y\}_{\D}\big)$ denotes the standard deviation of all observations of the outcome in $\D$. Considering standardized loss may be particularly useful when comparing the generalization error estimates of inducers applied to different data with outcomes of varying variance.

Lastly, where applicable, we also consider the \emph{Brier Score}, first proposed by \cite{BrierScore}. Designed for performance evaluation of models whose standard point predictions are given in terms of probability, it specifically applies to all binary classification models considered in this work. 

\subsection{Algorithm Failure}\label{app:algfail}

In a total of 14 resample experiments an inducer failed to produce a model for exactly one of the data splits.
Of these failures, 13 were observed for the \NConservativeZ{} and one for the \NTwoStageBoot{} method.
In all cases, this happened when resampling a logistic regression model on a data set of size 100 sampled from the \textit{adult} DGP.
In these cases, a simple majority-class predictor was used for the resampling split.

In another case, the \Nsixplus{} method failed to produce a CI for the GE of a logistic regression model trained on a data set of size 100 sampled from the \textit{colon} DGP.
In this case, we imputed the mean lower and upper boundaries from the remaining 499 problem instances.

As these instances occurred so rarely, the imputation has no relevant impact on the results of the analysis.
\section{Software and Computational Details}
\label{app:comp_setup}

Most of the experiments were run in R~\citep{rsoftware} using the \texttt{mlr3} ecosystem~\citep{lang2019mlr3, binder2021mlr3pipelines}.
For the linear and logistic regression model, the standard \texttt{lm} and \texttt{glm} functions from the \texttt{stats} package were used.
For the lasso and ridge-penalized linear and logistic regression we used the implementation from the R package \texttt{glmnet} \citep{hastie2021introduction}.
Further, we used the decision tree from the \texttt{rpart} package \citep{therneau2015package} and the random forest from the \texttt{ranger} package \citep{wright2015ranger}.
For the MLP, we used \texttt{mlr3torch} \citep{mlr3torch} to interface the C++ base of PyTorch \citep{Ansel_PyTorch_2_Faster_2024} and for gradient boosting the \texttt{xgboost} R package \citep{chen2015xgboost}.

Hyperparameter tuning \citep{BischlHPO2023} was conducted using the \texttt{mlr3mbo} R package \citep{mlr3mbo} where we used 50 evaluations, a random forest as surrogate model and expected improvement (EI) as acquisition function.
We tuned log-loss for classification and RMSE for regression.
The hyperparameter tuning was conducted using nested resampling, where the outer resampling corresponded to the respective resampling method from \Cref{tab:RESAMPLEmethods}, and for the inner resampling we used 3-fold cross-validation for datasets with $n <= 1000$ and otherwise a $2/3$ holdout split.

Note that this nested resampling is not the same as the \textit{Nested CV} method of \cite{bates2024cross}.
Instead, the inner resampling is applied to the training sets of the outer resampling to find a good hyperparameter setting for a given train-test split.
Doing this separately for each train-test split ensures that there is no data leakage from the test observation to the training data.
This results in $B$ different hyperparameter configurations, where $B$ is the number of resampling iterations.
On each training set, the found hyperparameters are used to train the model on the entire training set from the given (outer) resampling iteration, which is then evaluated on the corresponding test data \citep{BischlHPO2023}.

The iterations/epochs were optimized using early stopping with a patience of $20$ and an upper limit of $500$.

The search space for XGBoost is defined in \Cref{tab:XgboostHParams} and was taken from \cite{mcelfresh2024neural} \footnote{\url{https://github.com/naszilla/tabzilla/blob/dd2f32cee8c404b30f61efa55577572c6680ab99/TabZilla/models/tree_models.py\#L75}}.

\begin{table}[h!]
  \caption{Search space for XGBoost}
  \label{tab:XgboostHParams}
  \centering
  \begin{adjustbox}{max height=1cm,width=0.7\textwidth}
  \begin{tblr}{
      colspec={X[5cm,l]X[2cm,r]X[2cm,r]X[2cm,c]},
      row{1}={font=\bfseries},
      row{even}={bg=gray!10},
    }
     Parameter & Lower & Upper & Logscale \\
     \bottomrule
     \texttt{nrounds} (early stopping) & 0 & 500 & No \\
     \texttt{max\_depth} & 2 & 12 & No \\
     \texttt{alpha} & $1 \times 10^{-8}$ & 1.0 & Yes \\
     \texttt{lambda} & $1 \times 10^{-8}$ & 1.0 & Yes \\
     \texttt{eta} & 0.01 & 0.3 & Yes \\
     \bottomrule
  \end{tblr}
  \end{adjustbox}
\end{table}

For the MLP, we took the architecture and adapted the search space (variant A, defined on p. 20) from \cite{gorishniy2021revisiting} to reduce the runtime.
One block in the architecture consists of a linear transformation and ReLU activation, followed by a dropout layer.
The search space is described in \Cref{tab:MLPClassifierSearchSpace}.

\begin{table}[h!]
  \caption{Search pace for MLP}
  \label{tab:MLPClassifierSearchSpace}
  \centering
  \begin{adjustbox}{max height=1cm,width=0.7\textwidth}
  \small
  \begin{tblr}{
      colspec={X[4cm,l]X[2cm,r]X[2cm,r]X[2cm,c]},
      row{1}={font=\bfseries},
      row{even}={bg=gray!10},
    }
     Parameter & Lower & Upper & Logscale \\
     \bottomrule
    \texttt{epochs} (early stopping) & 0.0 & 500 & No \\
     \texttt{p} (dropout) & 0.0 & 0.5 & No \\
     \texttt{lr} & $1 \times 10^{-5}$ & $1 \times 10^{-2}$ & Yes \\
     \texttt{weight\_decay} (disable with $P = 0.5$) & $1 \times 10^{-6}$ & $1 \times 10^{-3}$ & Yes \\
     \texttt{n\_layers} & 0 & 3 & No \\
     \texttt{latent} & 1 & 256 & No \\
     \bottomrule
  \end{tblr}
  \end{adjustbox}
\end{table}

To simplify the experiment execution on the high-performance computing cluster we used the R package \texttt{batchtools}~\citep{lang2017batchtools}.
For accessing and sharing datasets, we used the OpenML platform~\citep{OpenML2013}.
All code is shared on GitHub\footnote{\url{https://github.com/slds-lmu/paper_2023_ci_for_ge}} and contains detailed instructions in the README files on how to run the experiments.
This includes an \texttt{renv}~\citep{renv} file to reproduce the computational environment.

\noindent For the density estimation of the real-world datasets we used the \texttt{be\_great}\footnote{\url{https://github.com/kathrinse/be\_great}} python library~\citep{borisov2023language} and also included a \texttt{yaml} file describing the conda environment.

Finally, all well-performing methods were integrated into the \texttt{mlr3} machine learning framework via the R package \texttt{mlr3inferr}.\footnote{\url{https://github.com/mlr-org/mlr3inferr}}.


\begin{table}[h!]
  \caption{Total runtime and hardware for the experiments.}
  \label{tab:RuntimeAndHardware}
  \centering
  \begin{adjustbox}{max height=1cm,width=1\textwidth}
  \begin{tblr}{
      colspec={X[3cm,l]X[1.7cm,l]X[10cm,l]},
      row{1}={font=\bfseries},
      row{even}={bg=gray!10},
    }
     Task & Runtime & Hardware \\
         \bottomrule
     Main Experiments & $135.7$ years & Single CPUs with 4 - 16 GB of RAM on the Teton partition of the Beartooth Compute Environment from Wyoming’s Advanced Research Computing Center (\url{https://doi.org/10.15786/M2FY47}), 64-core AMD EPYC 7763 Milan processors with 128 cores and 256 GB DDR4 memory per node (\url{https://doi.org/10.5065/qx9a-pg09})  \\ 
     Density Estimation & $117.7$ hours & NVIDIA GeForce RTX 2080 Ti and 64 CPU cores.\\ 
    \bottomrule
  \end{tblr}
  \end{adjustbox}
\end{table}
\section{DGPs with Poor Coverage}
\label{app:dgp_outliers}

In the main analysis, we excluded three DGPs, where all methods showed poor coverage. Those are \textit{chen\_10\_null}, \textit{physiochemical\_protein}, and \textit{video\_transcoding}, all of which are regression problems.
Figure \ref{fig:PoorDGPsCov} shows the coverage of the 90\% Holdout method on these DGPs.
For chen\_10\_null, no good coverage is reached for any of the inducers. For video\_transcoding, the coverage is poor for small dataset sizes but improves with increasing $n$.
For physiochemical\_protein the coverage is only poor for the (ridge-penalized) linear regression.
In the penalized case, coverage even gets worse with increasing $n$.

The video\_transcoding and chen\_10\_null data have heavy tails in their target distribution. Further, the video\_transcoding and physiochemical\_protein DGPs show strong multicollinearity between their features, which poses problems for the stability of the linear model.

\begin{figure}[h!]
    \centering
    \includegraphics[width=\linewidth]{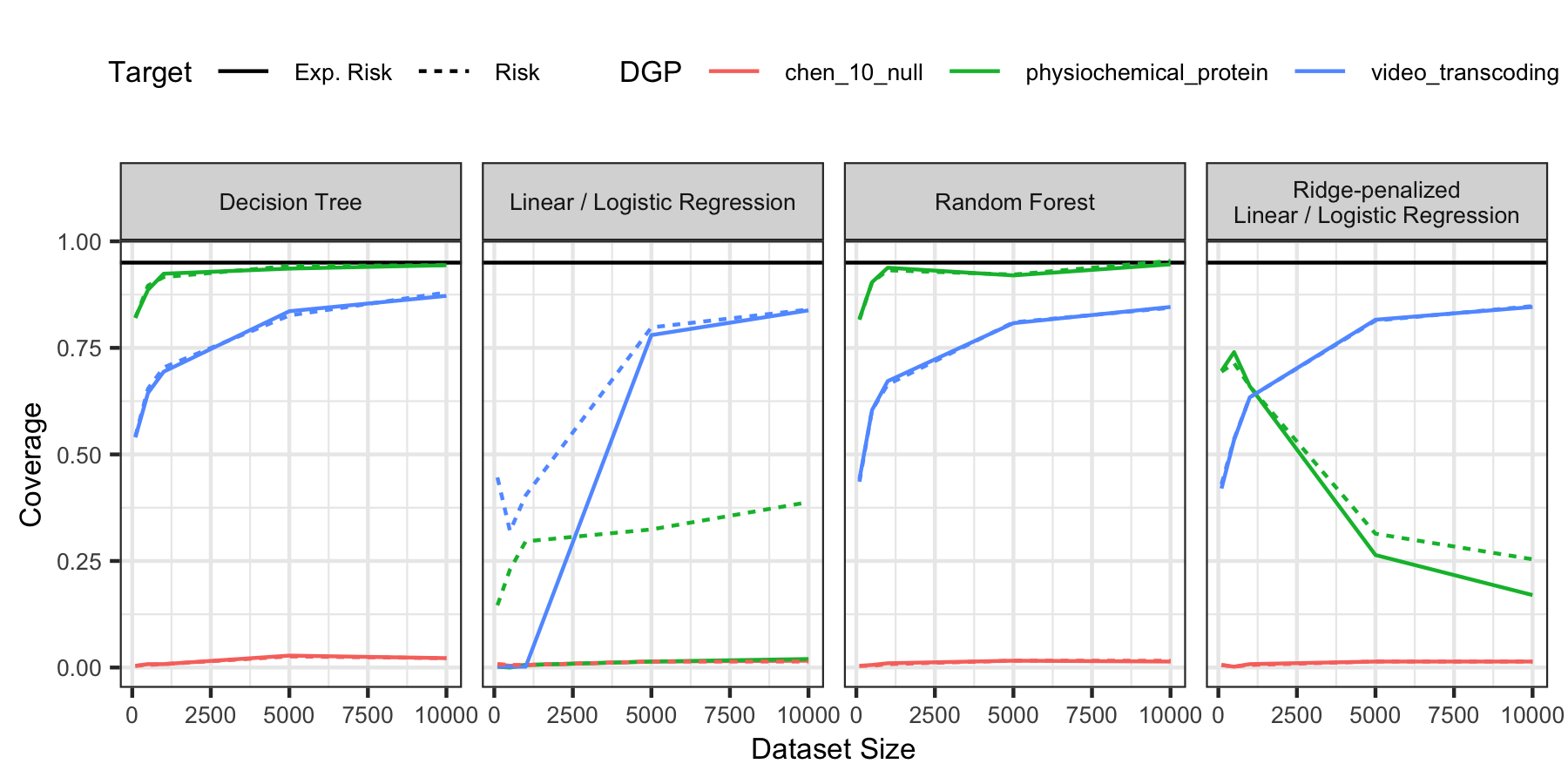}
    \caption{Relative coverage frequency of the 90\% \NHoldout{} method for the three DGPs that were omitted in the main analysis. The loss function is again squared error.}
    \label{fig:PoorDGPsCov}
\end{figure}

Figure \ref{fig:PoorDGPsSDRisk} shows the standard deviation of the Risk for the three problematic DGPs on the log scale.
For physiochemical\_protein, the linear regression and ridge regression are considerably less stable than the two tree-based methods, whereas, for video\_trancoding, only the linear model stands out.
In appendix \Cref{app:losses}, we show that the coverage of the inference methods for these three DGPs can improve significantly when selecting a more robust loss function.

\begin{figure}[h!]
    \centering
    \includegraphics[width=\linewidth]{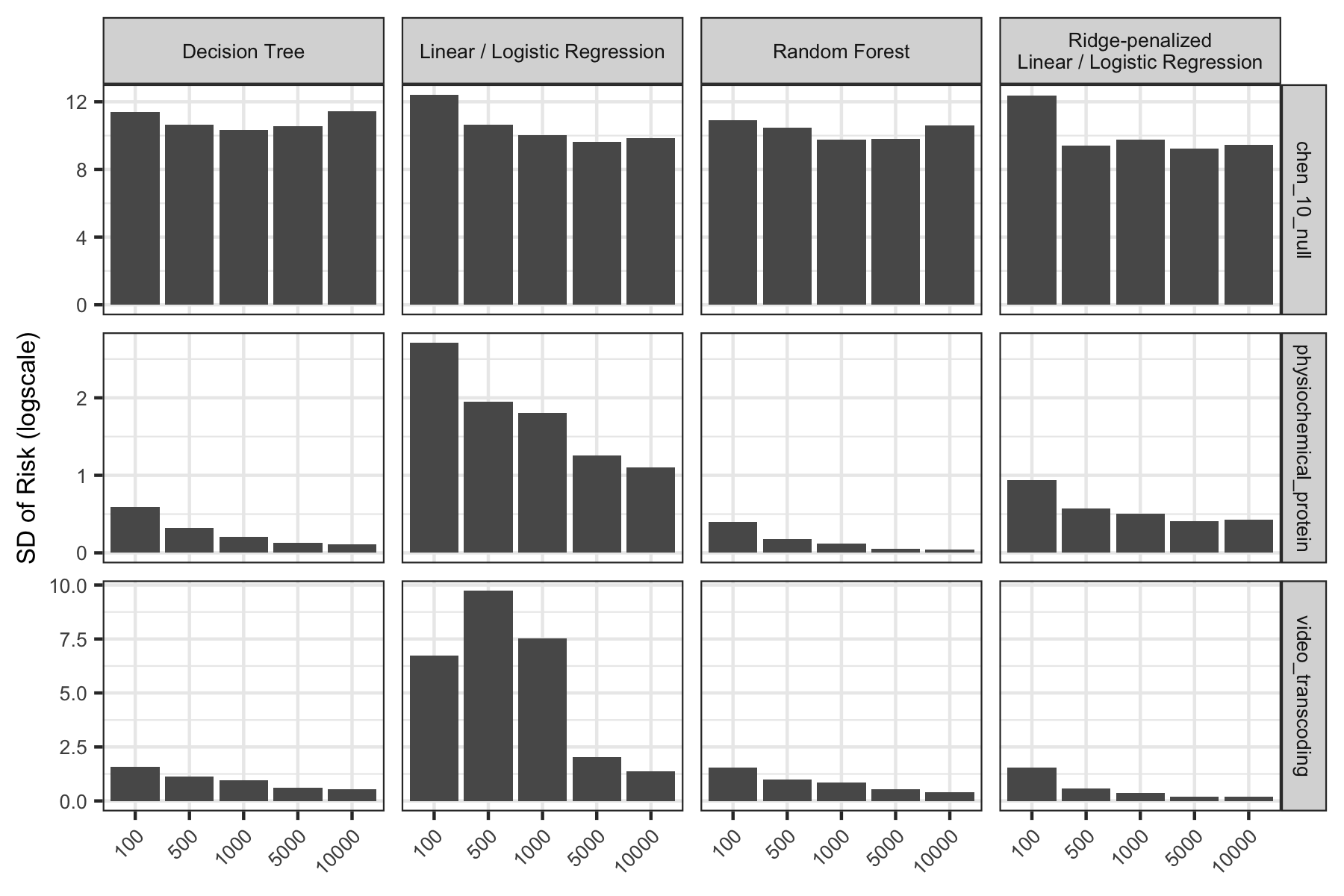}
    \caption{Standard deviation (logscale) of the risk for the three problematic DGPs. Loss is squared error.}
    \label{fig:PoorDGPsSDRisk}
\end{figure}
\section{Coverage vs. CI width}
\label{app:widthcov_plot}
A well-performing CI method should result in reliable coverage in addition to small interval width, indicating precision. \Cref{fig:widthcov_plot} visualizes the relationship between coverage and width (relative to the \NCorrectedT{} method) for those methods that were not immediately filtered out by the analysis of \Cref{fig:FirstRound}.
\begin{figure}[h!]
    \centering
    \includegraphics[width=\linewidth]{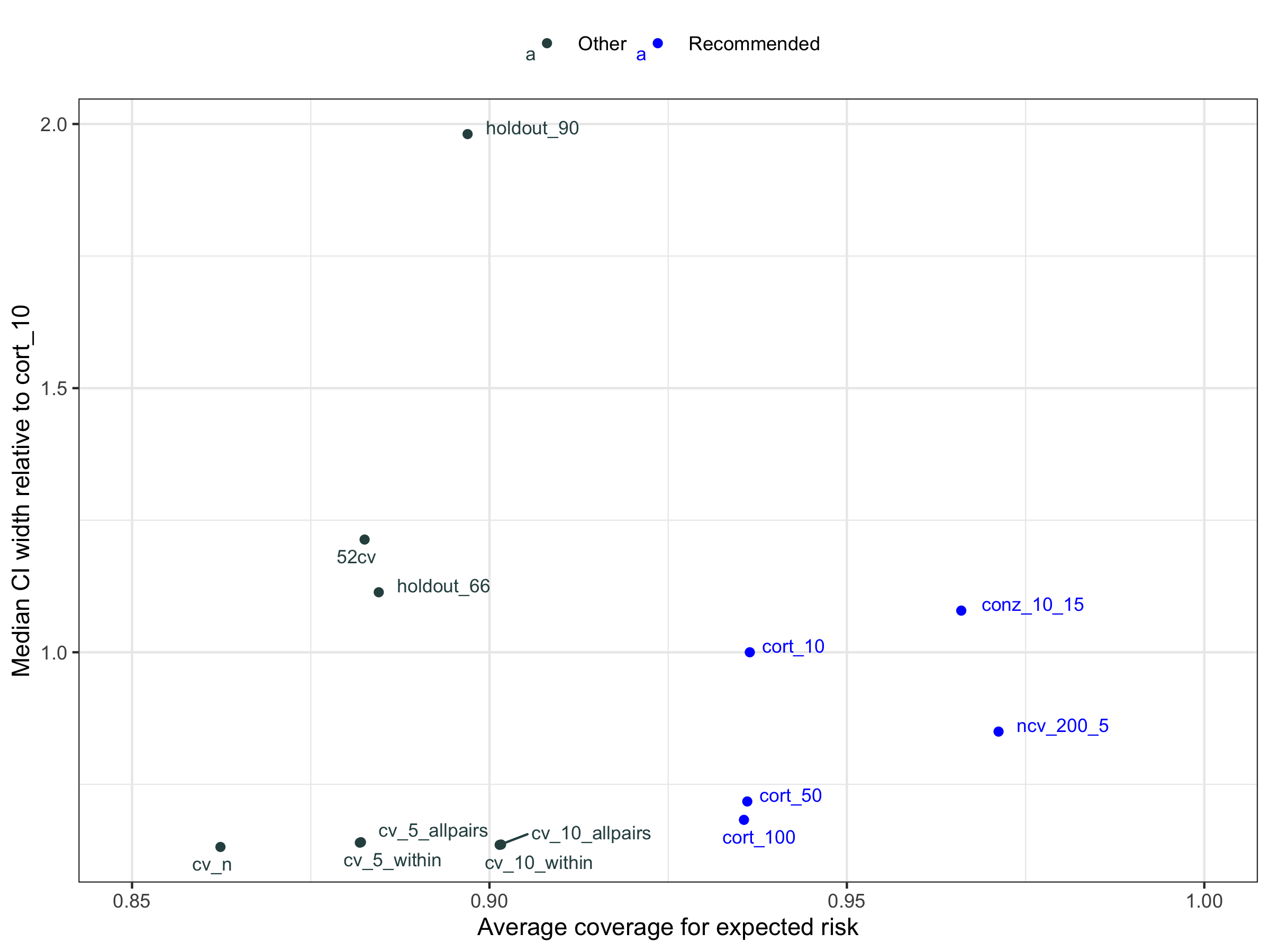}
    \caption{A Comparison of average CI coverage of the expected risk vs. the median CI width relative to the Corrected Resampled-T method with ratio 0.9 and 10 repetitions. The recommended methods can be seen in blue, all others (that were still considered for further analysis after \Cref{fig:FirstRound}) in gray.}
    \label{fig:widthcov_plot}
\end{figure}
\section{Influence of Loss Function}
\label{app:losses}

Figure \ref{fig:LossesRegr} shows the risk coverage for the \NConservativeZ{}, \NCorrectedT{}, \NBates{}, \NHoldout{}, and \NBayle{} method for different loss functions for datasets of size 500.
Here, the points in each boxplot are the five different inference methods.
The percentual absolute error shows poor performance for bates\_regr\_20 and bates\_regr\_100, which is likely because of instabilities of the loss around $y$ values of $0$.
For chen\_10\_null, we see that the winsorized loss considerably improves the Risk Coverage.
For the problematic DGP physiochemical\_protein and video\_transcoding, only the combination of the (ridge-penalized) linear model and square error leads to poor coverage.

\begin{figure}[h!]
    \centering
    \includegraphics[width=\linewidth]{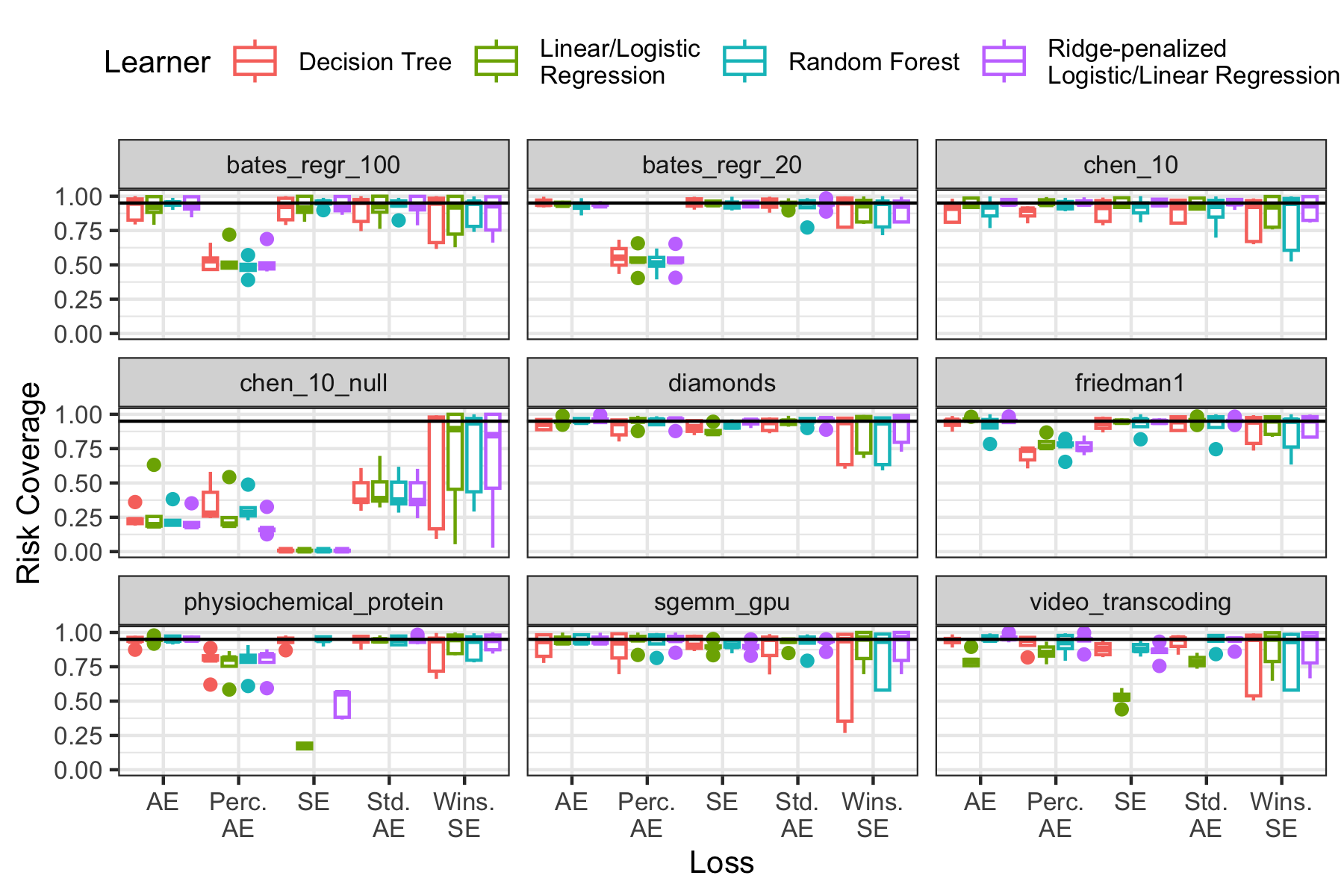}
    \caption{Influence of the loss function on the risk coverage of well-performing methods: \NConservativeZ{} (R = 10, K = 15), \NCorrectedT{} (K = 10), \NBates{} (R = 200, K = 5), \NHoldout{} ($p_{\text{test}} = 0.33$), \NBayle{} (K = 10) for regression problems of size 500.}
    \label{fig:LossesRegr}
\end{figure}

Figure \ref{fig:LossesClassif} shows the same metrics for the classification problems. Here, it is the log loss -- the only unbounded one out of the three -- that has the worst coverage of the GE.

\begin{figure}[h!]
    \includegraphics[width=\linewidth]{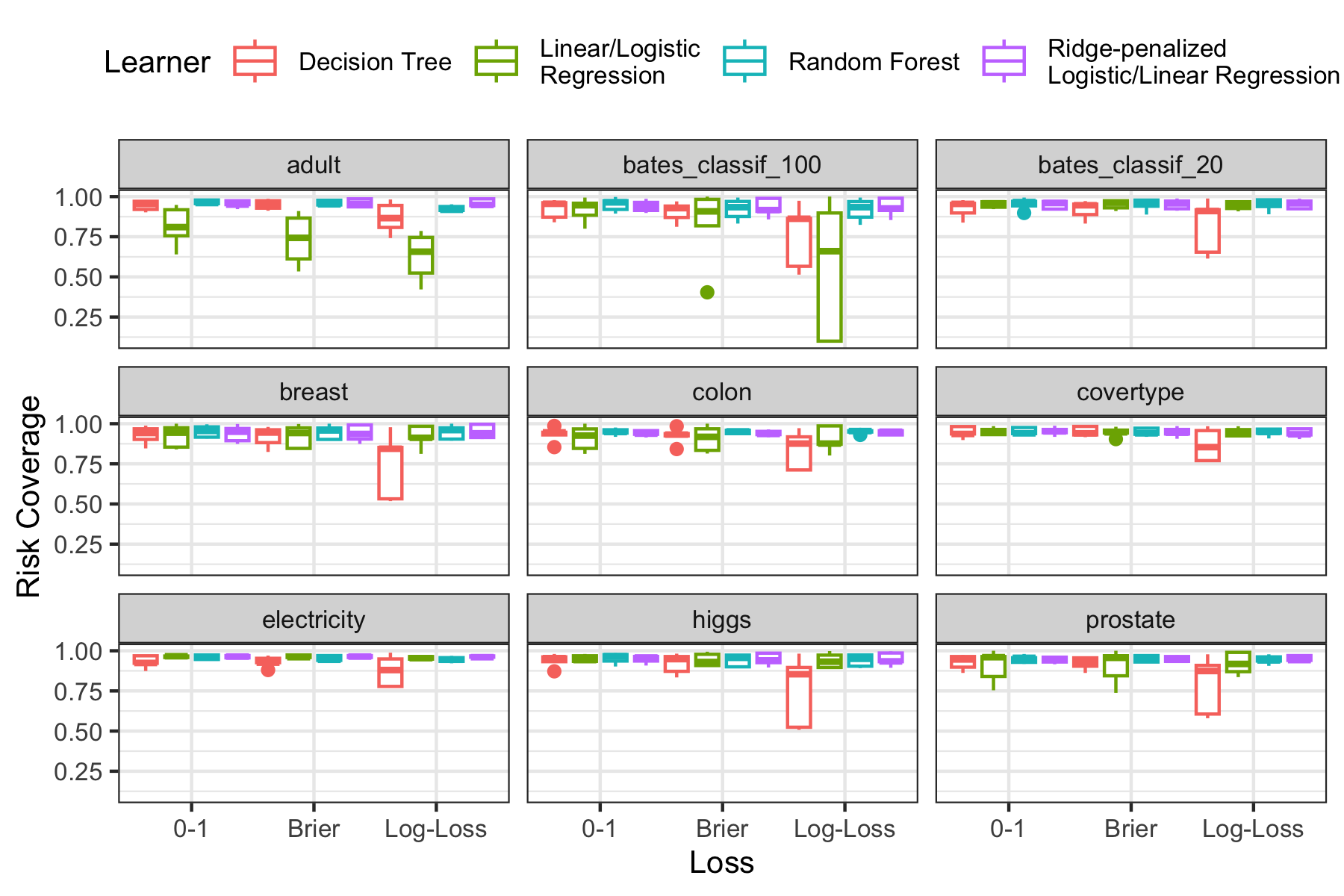}
    \caption{Influence of the loss function on the risk coverage of well-performing methods: \NConservativeZ{} (R = 10, K = 15), \NCorrectedT{} (K = 10), \NBates{} (R = 200, K = 5), \NHoldout{} ($p_{\text{test}} = 0.33$), \NBayle{} (K = 10) for classification problems of size 500.}
    \label{fig:LossesClassif}
\end{figure}

\section{Extreme CI Widths}
\label{app:width_outliers}

Besides some data generating processes that showed low coverage frequencies across inference methods, for some DGPs, the widths of individual CIs sometimes became very large.
Figure \ref{fig:CIWidthBoxplot} shows the distribution of the (0-1 scaled) widths of the 90\% Holdout method on regression problems of size 10000.
For well-behaving combinations of DGP and loss, we expect the median of the widths to be at around 0.5 and outliers to be similarly distributed on both tails of the distribution.
For chen\_10\_null, diamonds, physiochemical\_protein, and video\_transcoding, the medians of the width are (for some inducers) close to 0 which means their width distribution has heavy tails.
All of these DGPs have strong outliers in the target distribution.
In all three cases, the problem can be mitigated to some extent by using the more robust winsorized squared error.
It is possible that further improvements could be achieved by also using a more robust loss function for training.

\begin{figure}[h!]
    \centering
    \includegraphics[width=\linewidth]{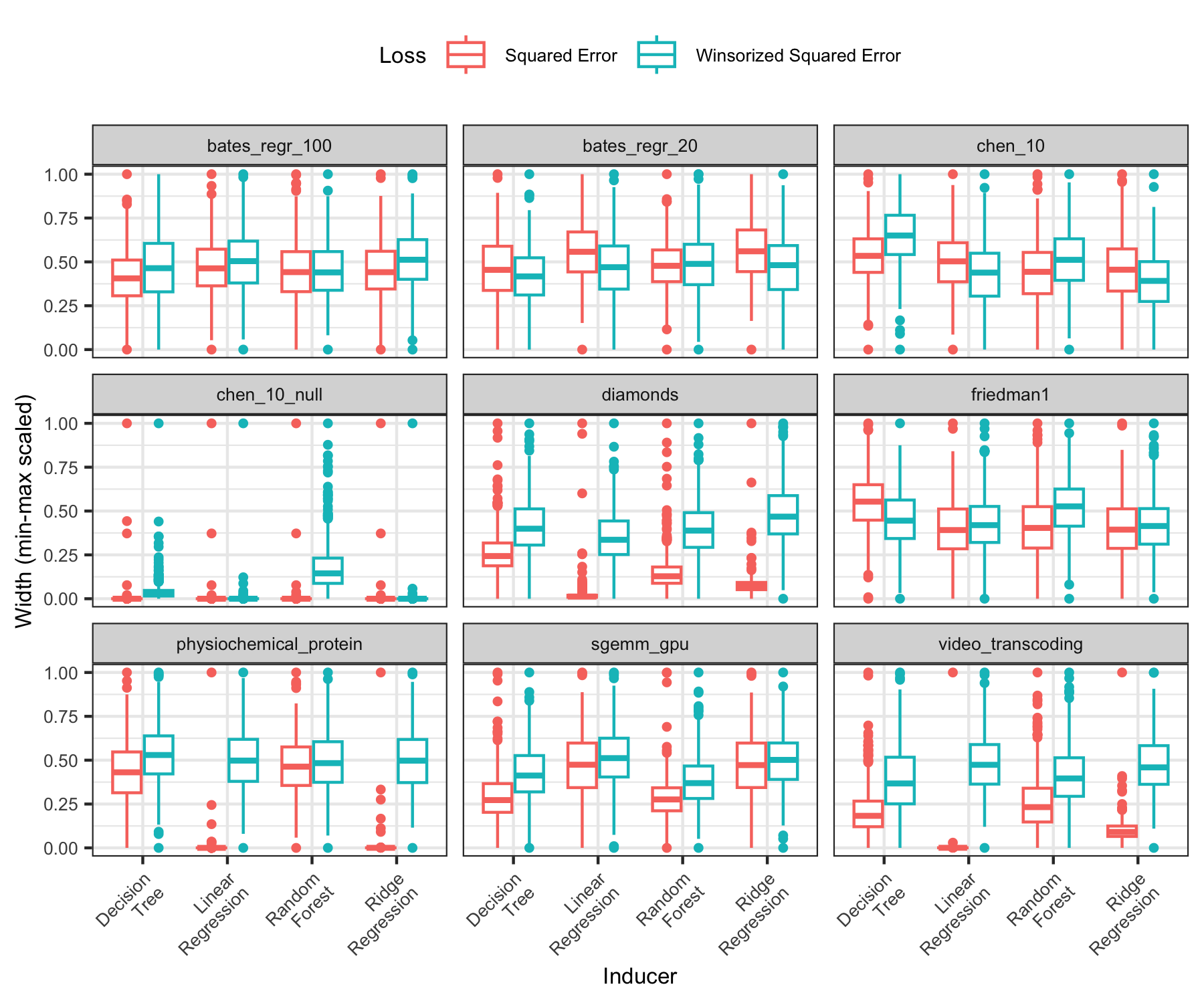}
    \caption{Boxplots of scaled widths for the 90\% \NHoldout{} method on regression problems of size 10000.}
    \label{fig:CIWidthBoxplot}
\end{figure}
\section{Parameter Influence on Coverage and Width (Stability)}
\label{app:ablation}

In this section, we further analyze the influence of the resampling parameters on the coverage and width of the \NCorrectedT{}, \NConservativeZ{}, and \NBates{} methods. We restrict the analysis to classification problems for which it is easier to visualize the width, but similar observations also hold in the regression case.

\subsection{\NCorrectedT{}}

Figure~\ref{fig:AblationCorT} shows the results for the Corrected-T method with a ratio of $0.9$ on datasets of size 500 and 10000. For size 500, the coverage stays relatively constant across repetitions for all inducers, whereas for size 10000, the coverage of the decision tree deteriorates with an increased number of repetitions.
Furthermore, the average median width of the CIs as well as its standard deviation decreases up to around 50 repetitions, after which the curves become relatively flat.
In general, 25 seems like a good choice for the number of iterations.

\begin{figure}[h!]
    \centering
    \includegraphics[width=\linewidth]{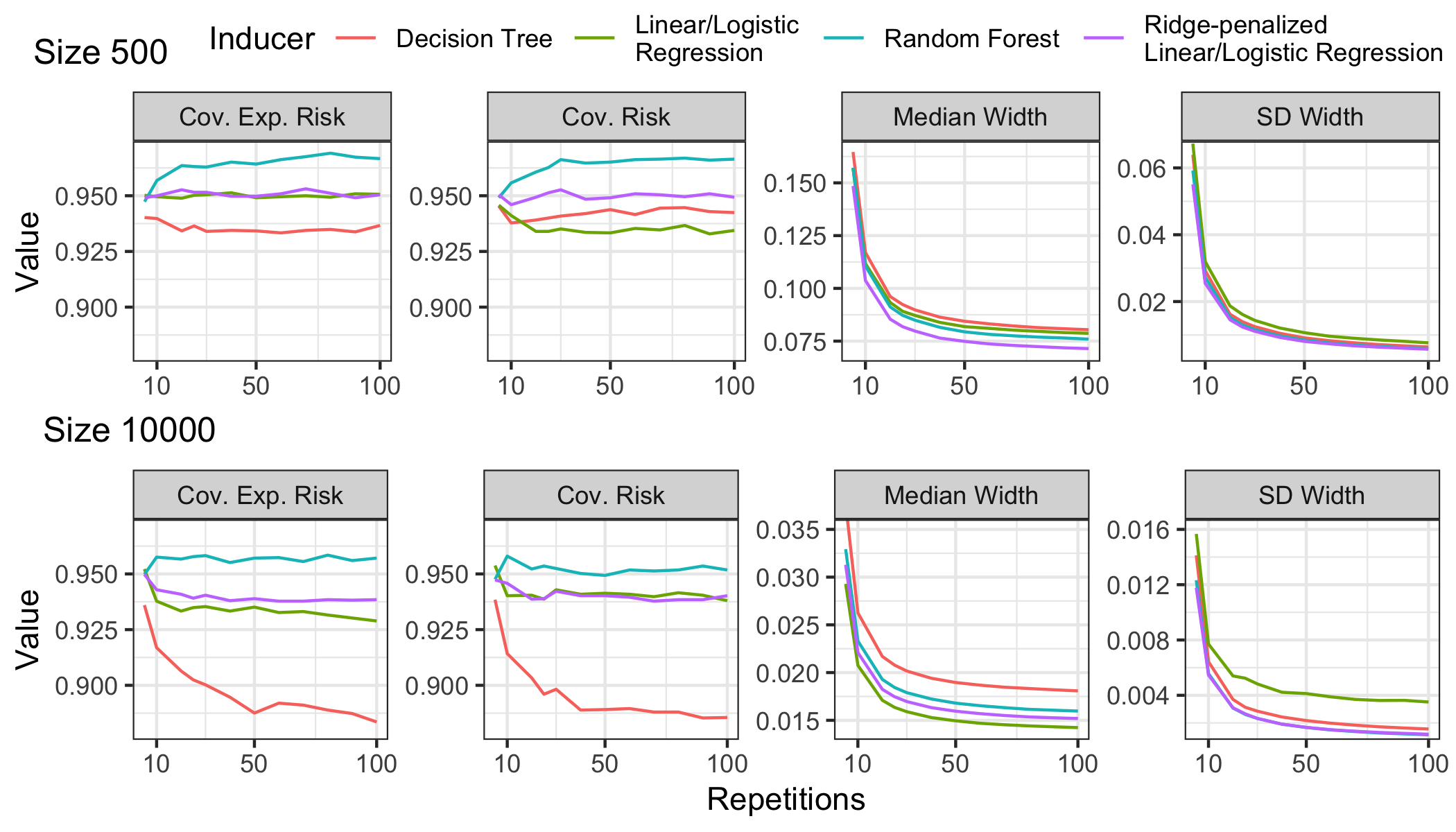}
    \caption{Influence of the number of repetitions for \NCorrectedT{} on coverage and width for classification problems with 0-1 loss.}
    \label{fig:AblationCorT}
\end{figure}

\subsection{\NConservativeZ{}}

\noindent Graphic~\ref{fig:AblationConZ} presents the influence of the repetitions for the \NConservativeZ{} method. 
When increasing the \textit{inner} repetitions, the width decreases considerably, whereas the effect on the coverage and stability is limited.
As expected, the \textit{outer} iterations have no effect on the average width. However, the coverage as well as the standard deviation of the width improves with the outer repetitions.
Interestingly, the coverage for few outer or inner iterations is less conservative than when increasing either.
An explanation for this is the high estimation variance in those cases, which can also cause the standard error to be underestimated.

\begin{figure}[h!]
    \centering
    \includegraphics[width=\linewidth]{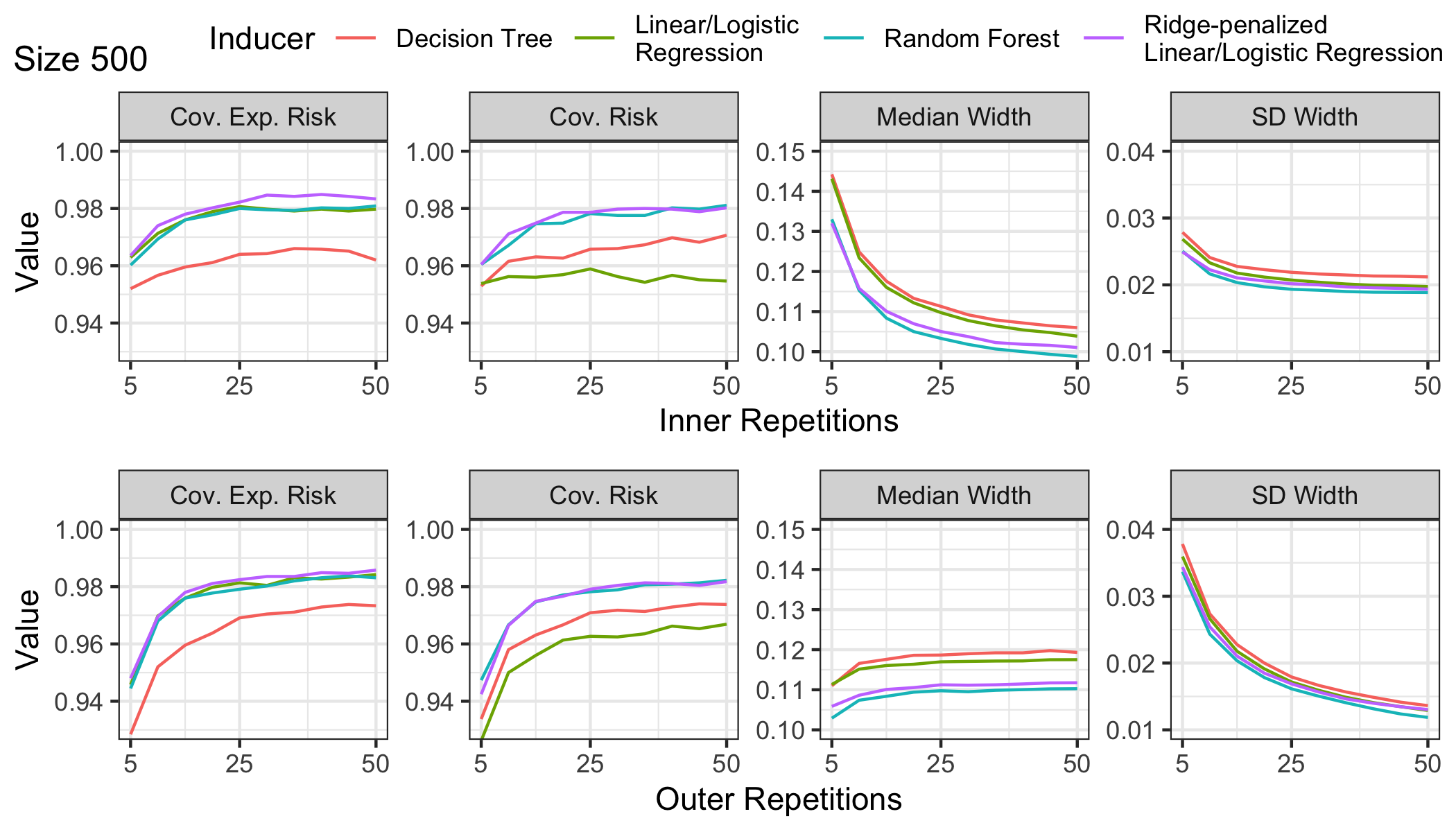}
    \caption{Influence of the number repetitions for \NConservativeZ{} on coverage and width for classification problems of size 500 with 0-1 loss. Inner and outer repetitions are fixed to 15 if not varied.}
    \label{fig:AblationConZ}
\end{figure}

\newpage

Figure~\ref{fig:AblationConZCheap} shows similar results for the cheaper variant of the \NConservativeZ{} method applied to datasets of size 10000. It is important to not set the number of repetitions too low, as this hurts coverage.
The coverage is still good for a small number of \textit{inner}
repetitions and the primary benefit of increasing them is in the reduced width (variability) of the intervals.

\begin{figure}[h!]
    \centering
    \includegraphics[width=\linewidth]{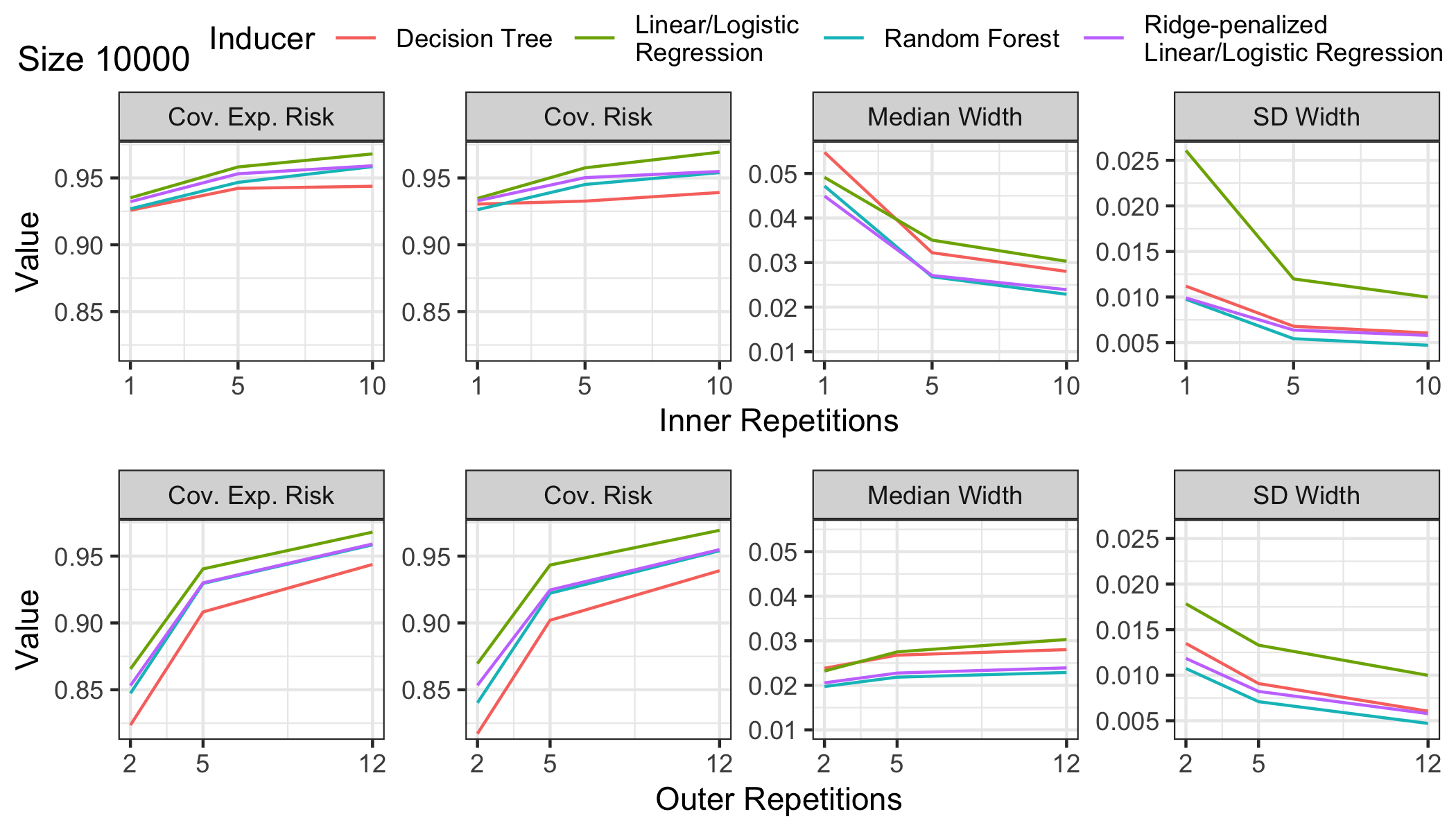}
    \caption{Influence of the number of outer repetitions for \NConservativeZ{} on coverage and width for classification problems of size 10000 with 0-1 loss.}
    \label{fig:AblationConZCheap}
\end{figure}

\newpage

Figure~\ref{fig:AblationConZ500Both} shows the influence of both the \textit{inner} and \textit{outer} repetitions of the Conservative z method for datasets of size 500.
It confirms that the conclusions that were drawn in the previous figures did not depend on the specific choice of the inner parameter when varying the outer repetitions and vice versa.

\begin{figure}[h!]
    \centering
    \includegraphics[width=\linewidth]{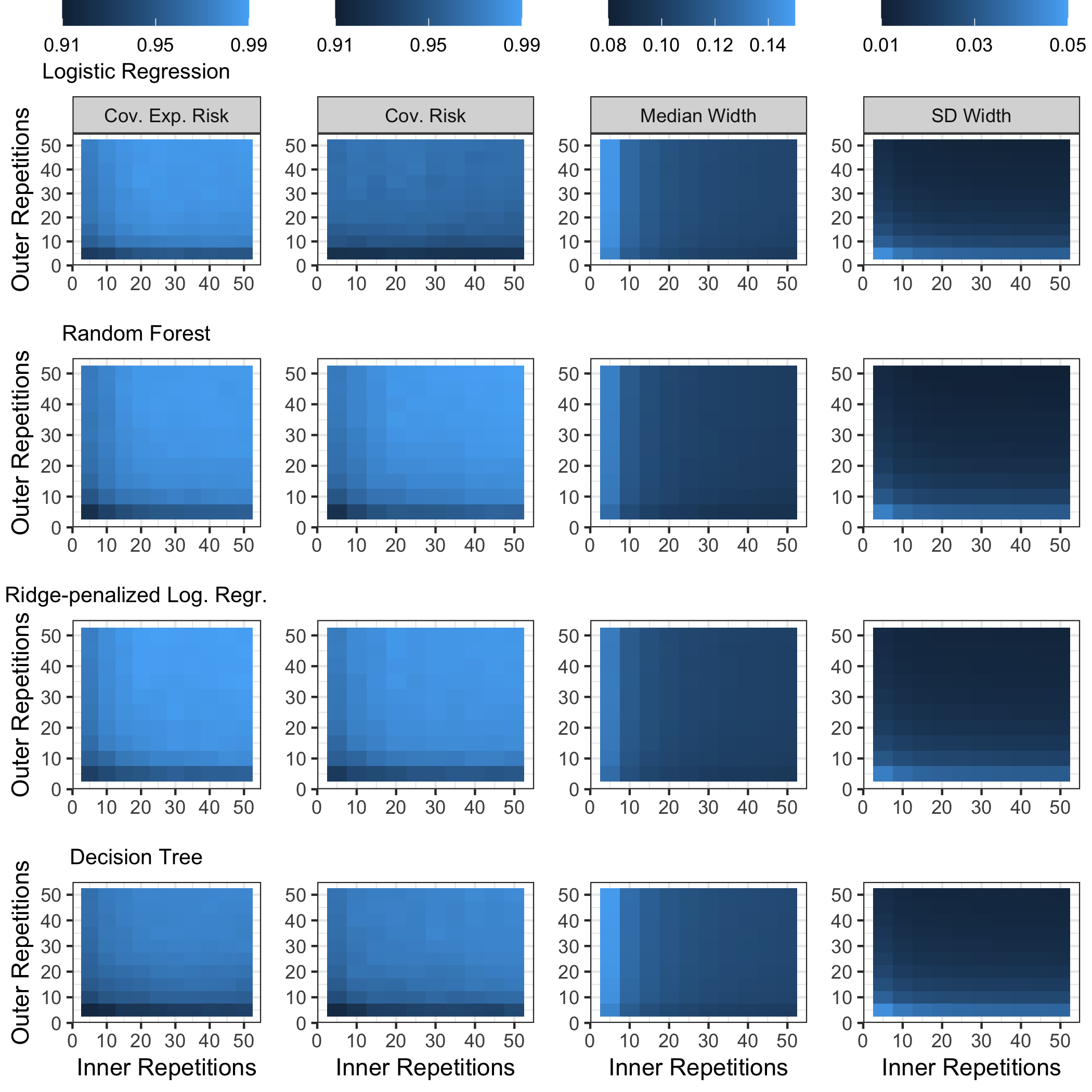}
    \caption{Influence of the number of repetitions for \NConservativeZ{} on coverage and width for classification problems of size 500 with 0-1 loss.}
    \label{fig:AblationConZ500Both}
\end{figure}

\newpage

\subsection{\NBates{}}\label{app:ncv}

Figure~\ref{fig:AblationNCV} depicts the influence of the repetitions on the Nested CV method for datasets of size 500 and 10000.
The average coverage slightly increases with the number of outer repetitions but is already high for only 10 repetitions.
Further, the intervals become more stable with an increase in the number of repetitions, showing that high numbers of repetitions are beneficial.

\begin{figure}[h!]
    \centering
    \includegraphics[width=\linewidth]{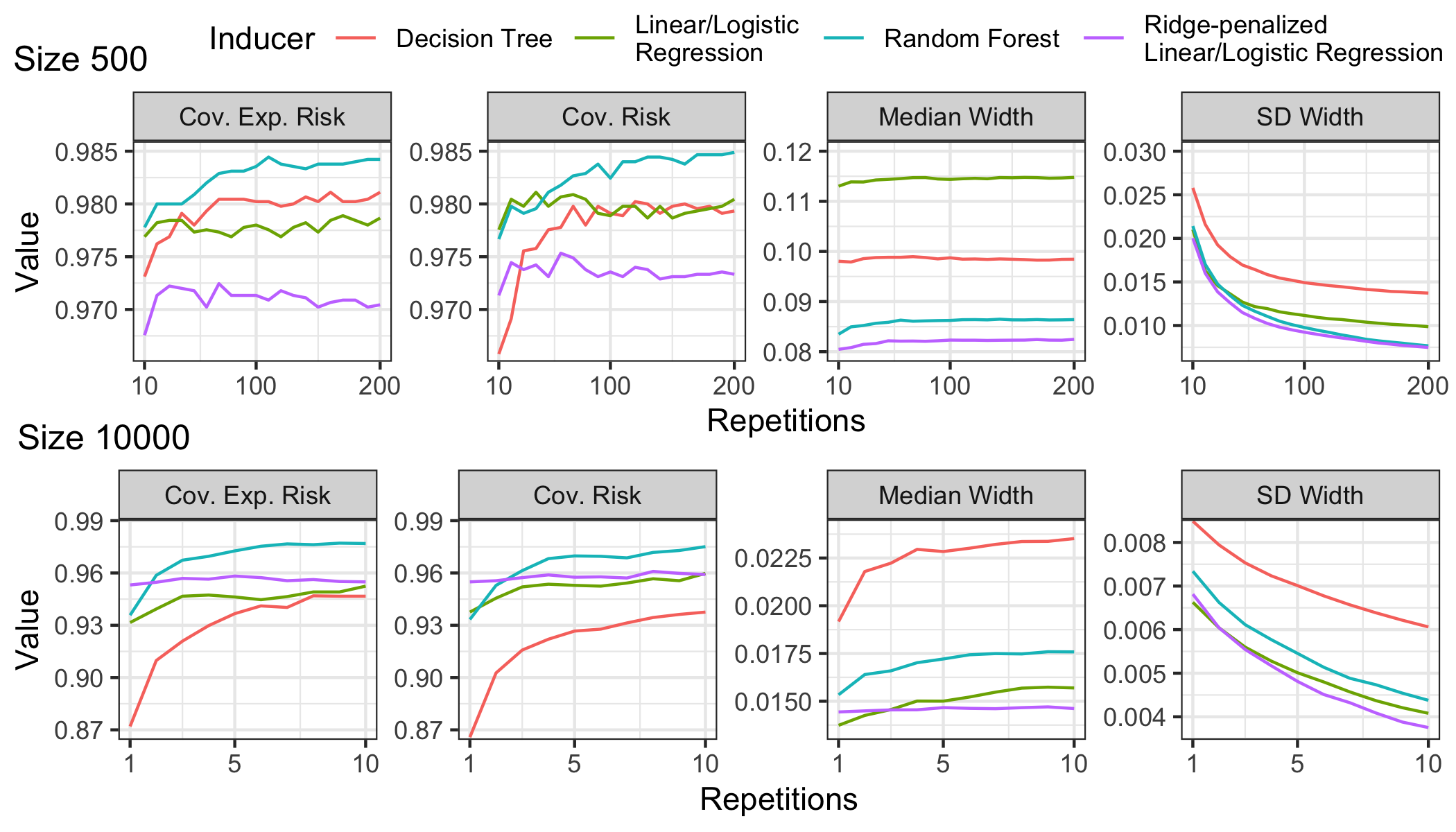}
    \caption{Influence of the number of repetitions of \NBates{} on coverage and width for classification problems with 0-1 loss. The folds are set to 5.}
    \label{fig:AblationNCV}
\end{figure}

\section{Estimation Error of CV for Risk and Test Error}
\label{app:risk_vs_pq}

The \NBayle{} method showed relatively poor coverage for the decision tree inducer when evaluated using the relative coverage of the risk, whereas the coverage of the Test Error, for which the method is shown to be asymptotically valid, is considerably better.
In \Cref{fig:PointPlotsLinear} we display the estimation error of the decision tree with respect to the risk (y-axis) and the Test Error (x-axis).
Out of the three inducers (ridge-regression is omitted for readability and is similar to linear regression), the decision tree has in most cases the highest variability on the y-axis, which explains why the difference in coverage for the risk and Test Error differs the most for the decision tree.

\begin{figure}[h!]
    \centering
    \includegraphics[width=\linewidth]{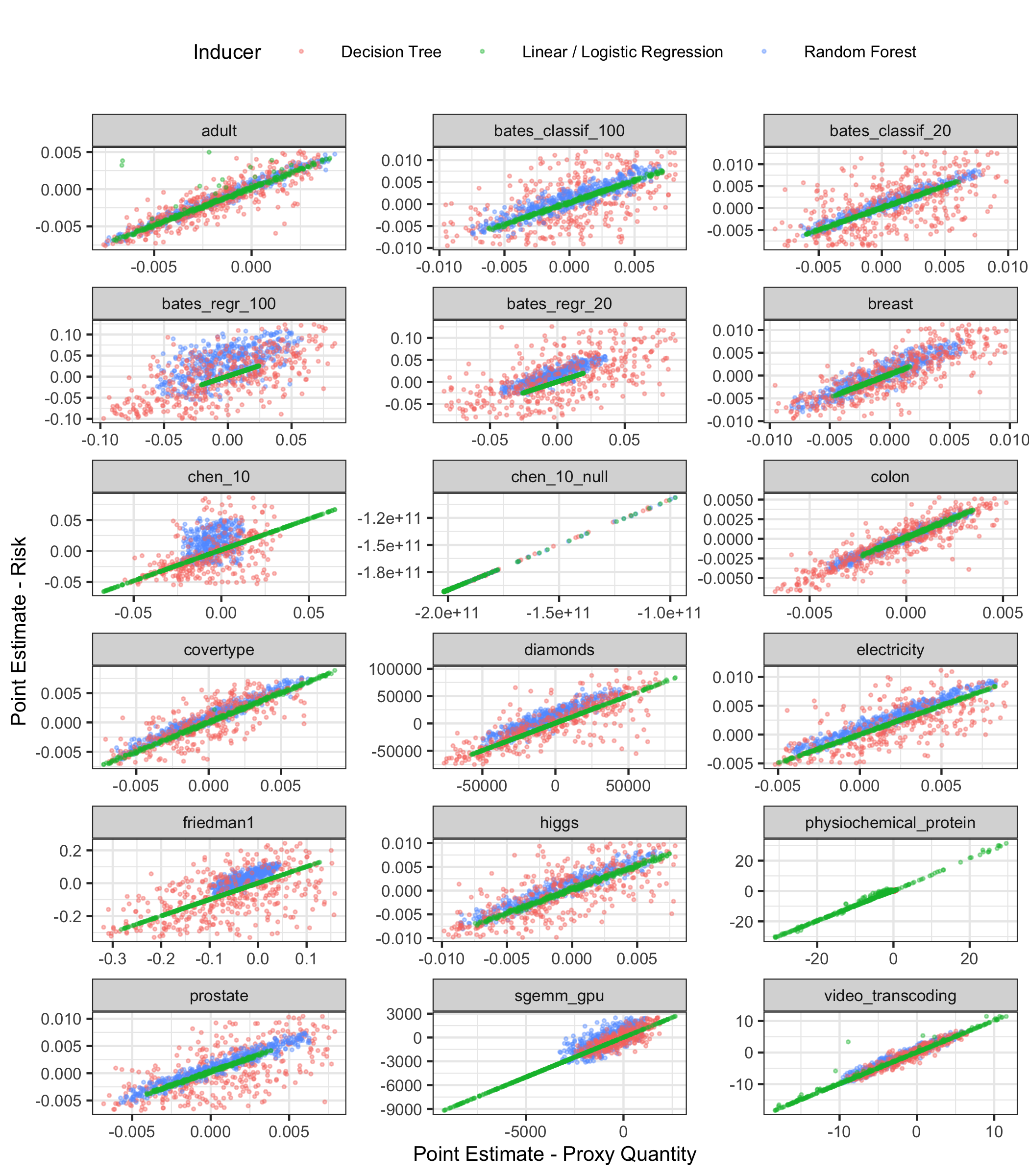}
    \caption{Comparison of the estimation error with respect to the proxy quantity and risk for datasets of size 10000 and the \NBayle{} method with 10 folds. Outliers are removed for readability. The results for the penalized approaches are omitted as they are very similar to the linear/logistic regression.}
    \label{fig:PointPlotsLinear}
\end{figure}
\section{Qualities of Point Estimates}
\label{app:point_estimates}
Here, we provide an exemplary comparison of (expected) risk values with point estimates for the three CI for GE methods recommended in this work, plus \NBayle{} with $5$ folds (cv\_5) and Holdout with a $90-10$ split (ho\_90) for the datasets breast \& higgs (classification, \Cref{fig:PEclassif}) and diamonds \& friedman1 (regression, \Cref{fig:PEreg}), each with data size of $1000$.

In both \Cref{fig:PEclassif,fig:PEreg}, each column of facets represents one of the 5 CI for GE methods, and each row a different learner.
Within each facet, the x-axis represents the risk values with a dark blue vertical line giving the expected risk, while the y-axis represents the point estimates. Additionally, the MSE values for both risk (R) and expected risk (eR) are provided in every facet's top right corner.
\\

\noindent Through these visualizations, two things immediately become apparent:\begin{enumerate}
    \item Although the (rather wide) Holdout CIs provided solid coverage for GE, the Holdout based point estimate for the GE is inferior to other resampling-based point estimates, as discussed in \Cref{rem:HoldoutPerformance}.
    \item While mostly very similar, some facets display point estimates that lie closer to the $\ePE$ value and some that lie closer to the $\PE$ values. (Apart from the MSEs, point estimates close to the risk may be detected be a point cloud that is shaped slightly diagonally to the right.) However, which of the two is the case cannot be traced back to the CI for the GE method. Rather, whether point estimates more closely estimate risk or expected risk seems to depend on the learner and, potentially, the DGP. 
\end{enumerate}

\begin{figure}
    \centering
    \includegraphics[width=\linewidth]{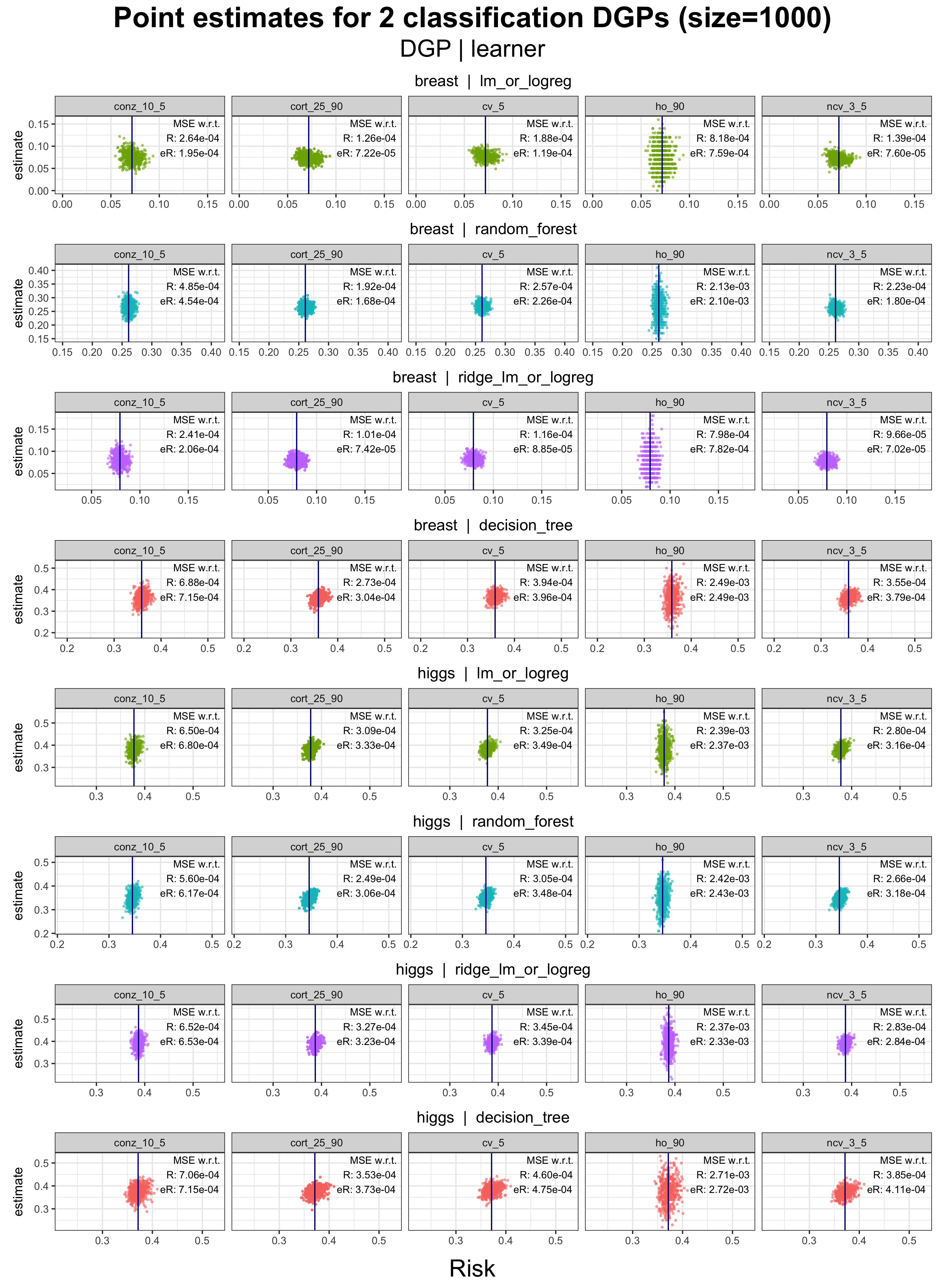}
    \caption{(Expected) risk vs. point estimates for various CI methods on breast \& higgs datasets (classification). The vertical line represents $\ePE$.}
    \label{fig:PEclassif}
\end{figure}

\begin{figure}
    \centering
    \includegraphics[width=\linewidth]{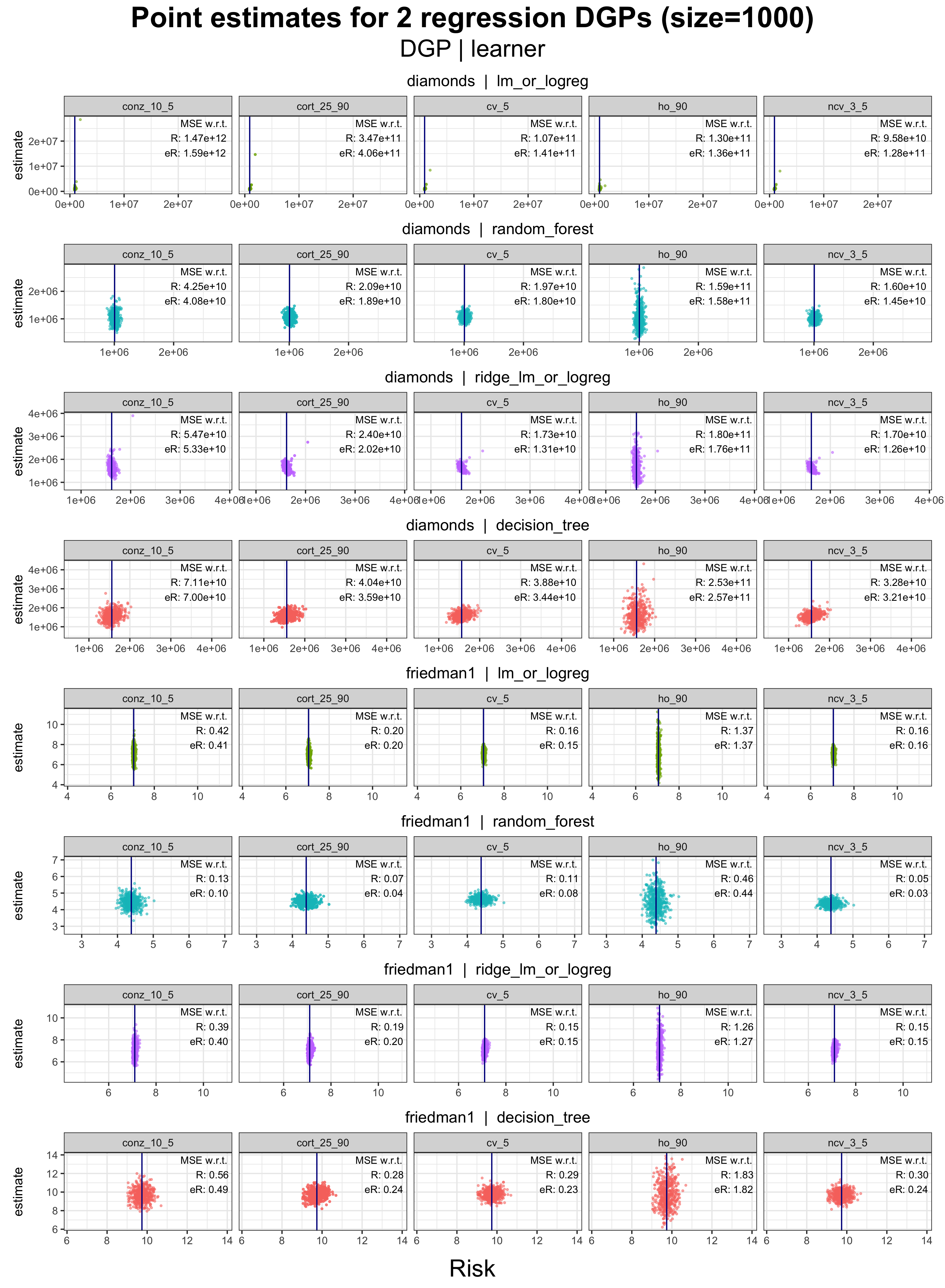}
    \caption{(Expected) risk vs. point estimates for various CI methods on diamonds \& friedman1 datasets (regression). The vertical line represents $\ePE$.}
    \label{fig:PEreg}
\end{figure}

\section{Results for \NBCCV}
\label{app:bccv}

\Cref{fig:BCCV} shows the coverage of $\PE$ by the \NBCCV{} method, whose costs scale with the number of observations.
When applied to datasets of size 500 the expected resampling iterations would already exceed 30000.
While its performance is generally okay, other methods showed similar coverage in our experiments, while being less expensive.

\begin{figure}[h!]
    \centering
    \includegraphics[width=\linewidth]{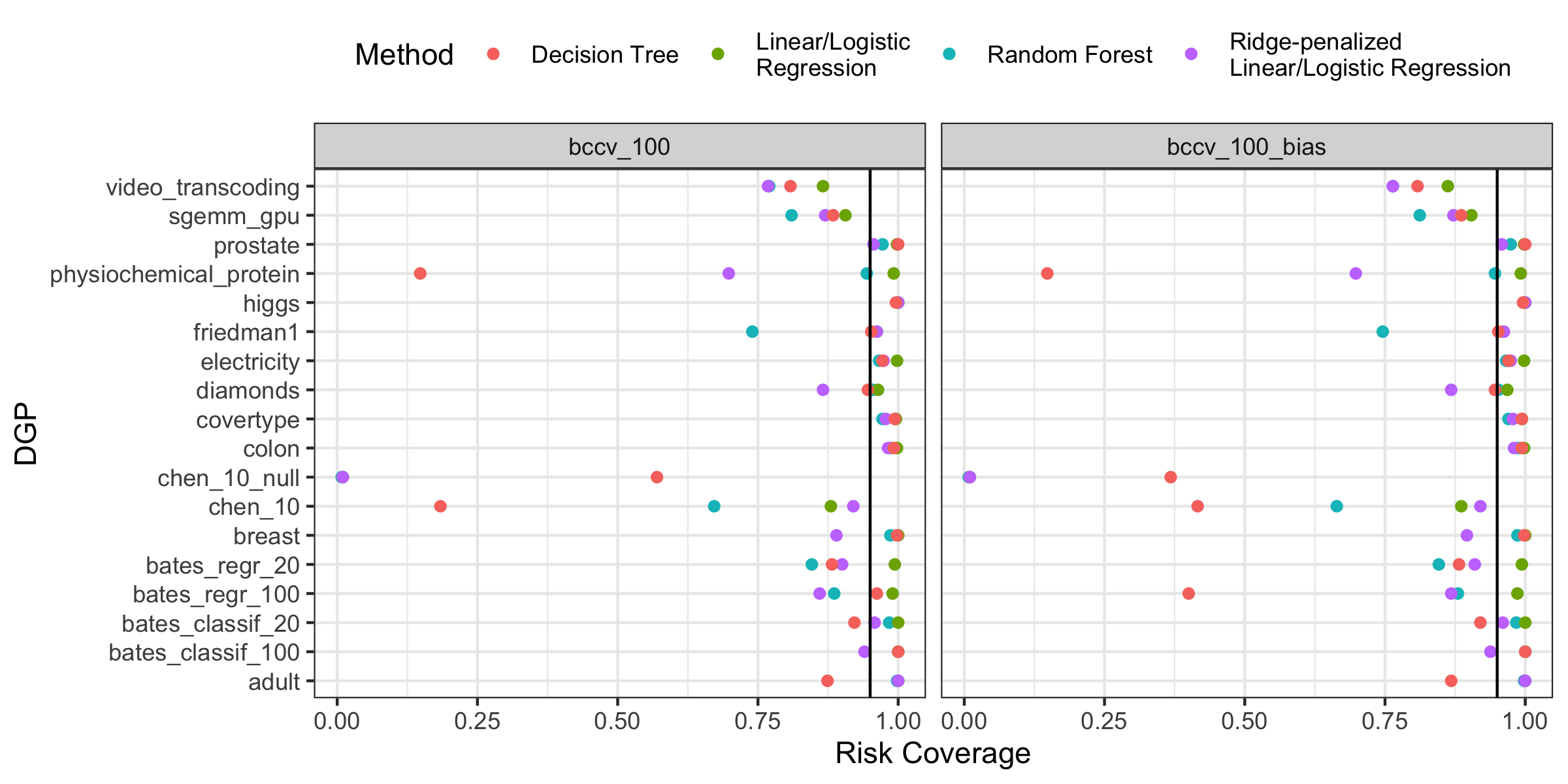}
    \caption{Risk coverage of \NBCCV{} method for all datasets of size 100. Squared error is chosen as the loss for regression and 0-1 for classification.}
    \label{fig:BCCV}
\end{figure}
\section{Runtime Estimation of Inference Methods}
\label{app:runtime_est}

In \Cref{tab:RuntimeRF} we report the runtimes of the inference methods aggregated over all DGPs measured in seconds.
We only show the results for the random forest, which is the most expensive inducer out of the four.
Showing the most expensive inducer means that the measurements are dominated by the time needed to train the models and make predictions and not the implementation-specific overhead of running the resampling.
For the same reason, the rightmost columns are most informative in order to compare the relative cost of the methods.
When comparing \textit{conz\_10\_5} with \textit{ncv\_3\_5} both are relatively similar to one another.
For small values of $n_\D$, \NBates{} is slightly cheaper, while for larger values the opposite is true.

\begin{table}[h!]
  \caption{Runtime for inference methods (in seconds) for the random forest aggregated over all DGPs. The runtimes only include the time for the resampling and not the computation of the confidence intervals, which is negligible.}
  \label{tab:RuntimeRF}
  \centering
  \begin{adjustbox}{max height=1cm,width=\textwidth}
  \tiny
  \begin{tblr}{
      colspec={X[4cm,l]X[1cm,r]X[1cm,r]X[1cm,r]X[1cm,r]X[1cm,r]},
      row{1}={font=\bfseries},
      row{even}={bg=gray!10},
    }
      Method / Size & 100 & 500 & 1000 & 5000 & 10000 \\
    \toprule
  holdout\_66 & 0.17 & 0.19 & 0.23 & 0.79 & 1.72 \\ 
  holdout\_90 & 0.17 & 0.21 & 0.25 & 1.05 & 2.45 \\ 
  cv\_10\_allpairs & 1.66 & 2.03 & 2.53 & 10.28 & 24.56 \\ 
  cv\_5\_allpairs & 0.86 & 0.99 & 1.20 & 4.58 & 10.51 \\ 
  cv\_n\_allpairs & 16.58 & 103.43 &  &  &  \\ 
  conz\_10\_5 & 16.91 & 17.97 & 20.73 & 49.70 & 104.62 \\ 
  conz\_12\_10 & 40.16 & 43.43 & 50.12 & 121.87 & 244.94 \\ 
  conz\_10\_15 & 49.81 & 54.05 &  &  &  \\ 
  ncv\_10\_5 & 40.55 & 46.34 & 54.66 & 181.65 & 393.33 \\ 
  ncv\_200\_5 & 809.75 & 916.65 &  &  &  \\ 
  ncv\_3\_5 & 12.31 & 13.64 & 17.22 & 54.19 & 119.80 \\ 
  cort\_10 & 1.64 & 1.99 & 2.55 & 10.26 & 24.34 \\ 
  cort\_25 & 4.06 & 4.87 & 6.35 & 25.67 & 59.97 \\ 
  cort\_50 & 8.54 & 9.68 & 12.97 & 51.83 & 122.35 \\ 
  cort\_100 & 16.26 & 19.45 & 25.70 & 101.74 & 243.59 \\ 
    52cv & 1.54 & 1.74 & 2.07 & 6.13 & 12.93 \\ 
  lsb\_50 & 9.42 & 11.37 & 14.69 & 57.17 & 130.72 \\ 
  lsb\_100 & 19.20 & 22.74 & 29.56 & 113.41 & 252.16 \\ 
  oob\_10 & 1.83 & 2.29 & 2.94 & 10.95 & 25.35 \\ 
  oob\_50 & 9.25 & 11.15 & 14.37 & 55.86 & 127.63 \\ 
  oob\_100 & 19.04 & 22.52 & 29.24 & 112.11 & 249.07 \\ 
  oob\_500 & 93.00 & 111.05 &  &  &  \\ 
  oob\_1000 & 185.84 & 223.71 &  &  &  \\ 
  632plus\_10 & 2.00 & 2.51 & 3.26 & 12.26 & 28.44 \\ 
  632plus\_50 & 9.42 & 11.37 & 14.69 & 57.17 & 130.72 \\ 
  632plus\_100 & 19.20 & 22.74 & 29.56 & 113.41 & 252.16 \\ 
    632plus\_500 & 93.17 & 111.27 &  &  &  \\ 
  632plus\_1000 & 186.01 & 223.93 &  &  &  \\ 
  tsb\_200\_10 & 406.10 & 472.58 &  &  &  \\ 
  rocv\_5 & 40.74 & 218.71 &  &  &  \\ 
    rep\_rocv\_5\_5 & 208.94 & 1064.31 &  &  &  \\ 
      bccv & 1140.69 &  &  &  &  \\ 
  bccv\_bias & 1157.28 &  &  &  &  \\ 
   \hline
  \end{tblr}
  \end{adjustbox}
\end{table}

\section{Highdimensional DGPs}
\label{app:highdim}

In order to investigate whether the performance of the best-performing inference methods deteriorates for high-dimensional problems, we evaluated a lasso-penalized logistic regression, where the $\lambda$ was tuned using 10-fold cross-validation, and a random forest on datasets with size $n = 500$ and an increasing number of features.
The results are shown in \Cref{fig:Highdim}.
For both inducers, the coverage of both the risk and expected risk is relatively stable when increasing the number of features.

\begin{figure}[h!]
    \centering
    \includegraphics[width=\linewidth]{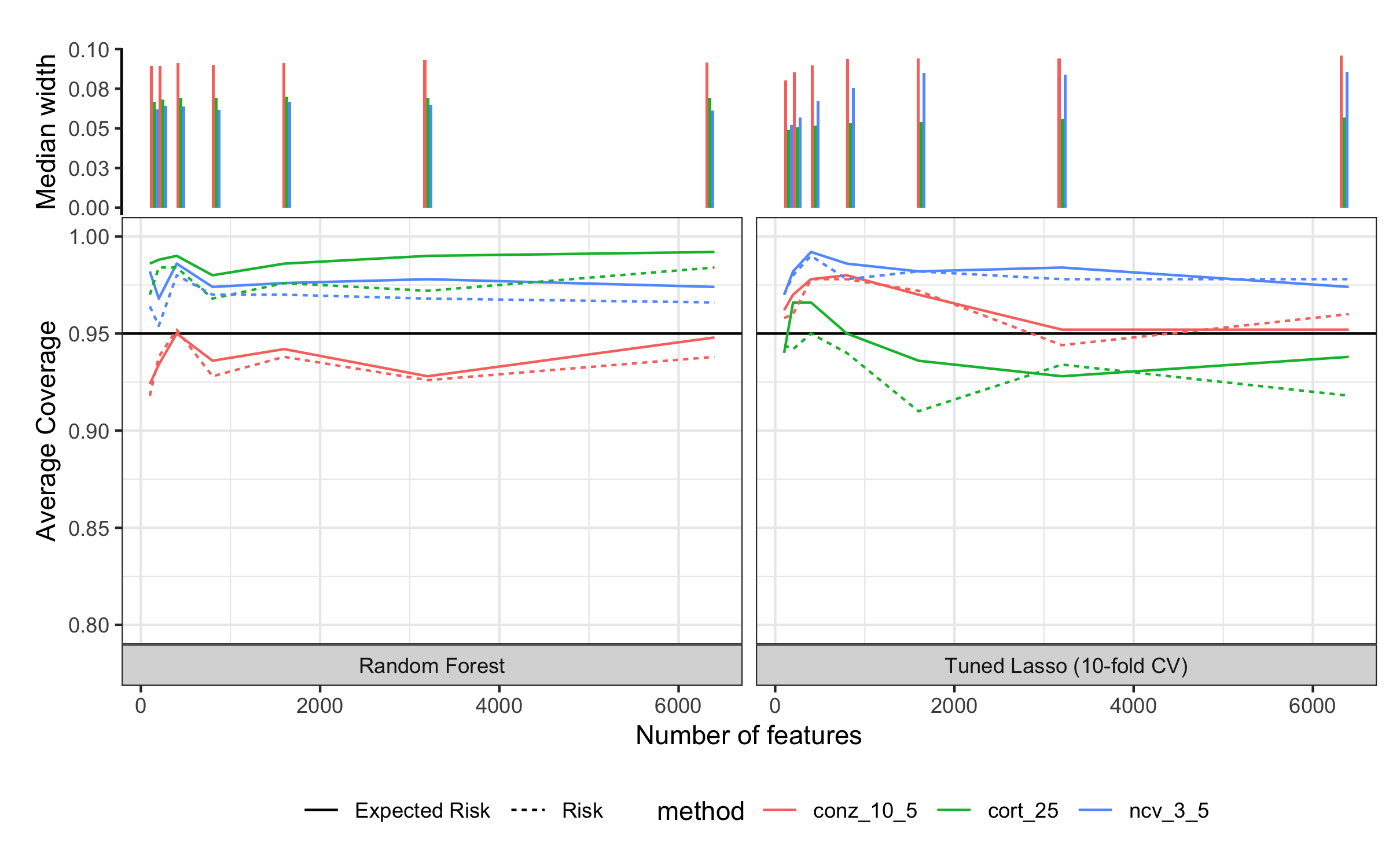}
    \caption{Coverage of (Expected) Risk for Conservative-Z, Corrected-T, and Nested CV on DGPs with an increasing number of features.}
    \label{fig:Highdim}
\end{figure}

\end{document}